\documentclass[dvipsnames]{article}

\usepackage[preprint]{icml2026}

\usepackage[utf8]{inputenc} 
\usepackage[T1]{fontenc}    
\usepackage{hyperref}       
\usepackage{url}            
\usepackage{booktabs}       
\usepackage{amsfonts}       
\usepackage{nicefrac}       
\usepackage{microtype}      
\usepackage{xcolor}         
\usepackage{amsmath}
\usepackage{amsthm}
\usepackage{multirow}
\usepackage{adjustbox}
\usepackage{bbm}
\usepackage{graphicx}
\usepackage{caption}

\newtheorem{lemma}{Lemma}
\newtheorem{proposition}{Proposition}
\newtheorem{definition}{Definition}
\newtheorem{remark}{Remark}

\newcommand{\dotvect}[2]{#1^{\top} #2}
\newcommand{\dotmat}[3]{#1^{\top} #3 #2}
\newcommand{\dotangle}[2]{\langle #1, #2 \rangle}
\newcommand{\spec}{\text{spec}\,}
\newcommand{\bspec}{\text{spec}_z}
\newcommand{\sheaflap}{\Delta^{\mathcal{F}}}

\newcommand{\mtsheaflap}{\tilde{\Delta}^{\mathcal{F}(t)}}
\newcommand{\tsheaflap}{\Delta^{\mathcal{F}(t)}}
\newcommand{\first}[1]{\textbf{\textcolor{BrickRed}{#1}}}
\newcommand{\second}[1]{\textbf{\textcolor{NavyBlue}{#1}}}
\newcommand{\third}[1]{\textbf{\textcolor{ForestGreen}{#1}}}

\icmltitlerunning{Geometrical fairness in graph neural networks}

\begin{document}

\twocolumn[
\icmltitle{Geometrical fairness in graph neural networks}

  \begin{icmlauthorlist}
    \icmlauthor{Arturo Pérez-Peralta}{uc3m}
    \icmlauthor{Sandra Benítez-Peña}{uc3m,ibdat}
    \icmlauthor{Blas Kolic}{ibdat}
    \icmlauthor{Rosa E. Lillo}{uc3m,ibdat}
  \end{icmlauthorlist}

  \icmlaffiliation{uc3m}{Department of Statistics, University Carlos III of Madrid, Spain}
  \icmlaffiliation{ibdat}{uc3m-Santander Big Data Institute}

  \icmlcorrespondingauthor{Arturo Pérez-Peralta}{100507525@alumnos.uc3m.es}

  \icmlkeywords{Machine Learning, Fairness}
  \vskip 0.3in
]

\printAffiliationsAndNotice{}

\begin{abstract}
Graph-based learning methods have become increasingly prominent due to their strong performance across diverse applications. Among these, recent frameworks grounded in diffusion processes provide a unifying perspective that extends traditional graph neural network formulations while addressing limitations of standard message-passing mechanisms. Despite these advances, concerns remain regarding the fairness of such models, as they may propagate or amplify biases present in the data.
In this work, we introduce a fairness-aware adaptation of graph-based diffusion by modifying the underlying Laplacian operator. Our approach incorporates multiple complementary transformations, including subspace projections, spectral adjustments, and frequency-based filtering, to mitigate bias-related components. Leveraging the intrinsic smoothing properties of graph diffusion, we provide a principled analysis of the resulting behavior and establish theoretical insights into fairness properties.
We evaluate the proposed framework on both synthetic and real-world datasets, demonstrating that it achieves competitive performance while improving fairness metrics with limited additional computational cost.
\end{abstract}

\begin{figure}[t]
    \centering
    \includegraphics[width=1.0\linewidth]{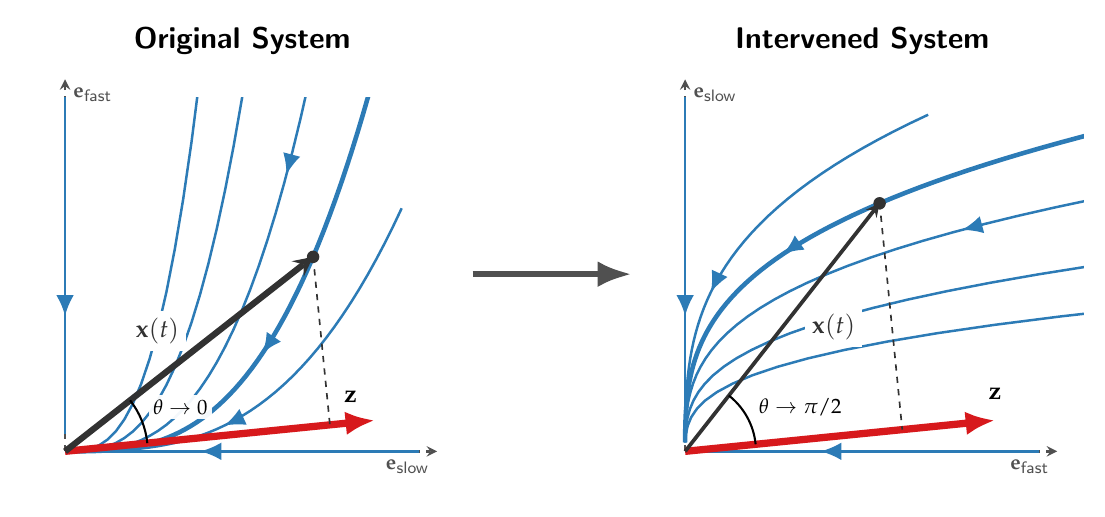}
    \caption{Slow eigenvectors closely aligned with the sensitive attribute on the Laplacian encourage unfair solutions. Geometric fairness methods rescale the slow biased eigenvalues, thus increasing their frequency and encouraging energy dissipation in the direction aligned with the sensitive attribute.}
    \label{fig:Intro1}
\end{figure}

\section{Introduction}

Graph Neural Networks (GNNs) have become increasingly widespread in recent years, driven by their success across fields as diverse as content recommendation, particle physics, social sciences, and quantum chemistry \cite{QuantumChemistry, Recommendation, ParticlePhysics, Social}. Their ability to leverage topological information arising from the message-passing paradigm has led to their extended adoption in tasks ranging from fraud detection to drug design \cite{FraudDetection, DrugDesign}.\\
However, despite their merits, GNN architectures are not without their shortcomings, including their tendency to equalize the hidden representations of different individuals as more layers are stacked, also known as oversmothing, \cite{Oversmoothing1, Oversmoothing2}, their drop in utility when graphs relate dissimilar individuals, also known as heterophily \cite{heterophily2, heterophily1}, and their tendency to exacerbate harmful demographic biases \cite{Start1, Start2, Start3}. All three of these points have been addressed in the literature, with Neural Sheaf Diffusion (NSD) \cite{nsd} providing a principled framework to handle oversmoothing and heterophily by generalizing the graph heat equation to include matrix weights that enhance its expressiveness. However, this framework lacks a treatment of its properties in relation to fairness, limiting its deployment in critical decision-making contexts such as finance or healthcare~\cite{Healthcare, Finance}.\\
While the literature of \emph{Fair Machine Learning} offers an extensive catalog of bias mitigation techniques which integrate demographic information into the Machine Learning pipeline, such as careful data curation, model intervention, and adversarial architectures \cite{Reweighting, MetaFair, FairGNN}, its interaction with phenomena including oversmoothing and heterophily is less understood~\cite{heterophily2}. This is why a fairness-aware NSD can benefit from its properties and generality, enabling a model that addresses the three phenomena at once with a unified perspective. To perform this task, we first acknowledge that a naive implementation of NSD can lead to unfair outcomes, as illustrated in Figure~\ref{fig:Intro1}: when the slower eigenvalues of the heat diffusion process are aligned with unfair directions, the resulting system converges to a relatively unfair solution. This is why a geometric method addressing the bias of the underlying diffusion process is required, and provides the basis for our approach: increasing the frequency of slower, unfair eigenvectors encourages energy dissipation in these problematic directions, leading to a fairer result.
\paragraph{Contributions.} This paper introduces algorithmic fairness mechanisms into graph models by combining geometrical and spectral arguments within the Neural Sheaf Diffusion framework, thus creating a fairness-aware graph model. Our ideas are motivated by the fact that the graph heat equation acts as a low-pass filter. Therefore, increasing the frequency of slow-biased eigenvectors improves fairness by reducing the dependence on these biased components. This observation serves as the basis for establishing the theoretical foundations of the methods, which are thoroughly studied in the language of time-dependent dynamical systems, culminating in three in-processing methods that modify the Laplacian to achieve fairness. Concretely, the three modifications are given by a projection onto a biased vector, the projection onto a subspace of eigenvalues of the graph Laplacian correlated with the biased vector, and a spectral polynomial filter. This results in three different methods with the following desirable properties:
\begin{itemize}
    \item \textbf{Soundness:} The methods are the result of theoretical inquiries in time-varying systems, relying on results which can be used to build further fairness processors.
    \item \textbf{Minimal overhead:} The bulk of their cost can be relegated to off-line computations, while their actual implementation adds negligible overhead.
    \item \textbf{Flexibility:} The models account for the utility-fairness trade-off through a parameter with a clear effect on fairness.
    \item \textbf{Competitiveness:} The proposed methods achieve notable results when compared to the state of the art in the literature of Fair Machine Learning.
\end{itemize}
We have chosen to focus on cellular sheaves~\cite{Opinion, nsd} due to the general framework they introduce, which encompasses common graph architectures such as Graph Convolutional Networks and Graph Attention Networks.

\paragraph{Structure.} Section~\ref{StateOfTheArt} discusses the state of the art of fair classification in the context of Machine Learning over graphs, while Section~\ref{Theory} is the exposition of the theoretical background, delving into the details of Fair Machine Learning and graph models based on cellular sheaves. Section~\ref{GeometricFairness} lays the theoretical foundation of this paper, defining a geometric notion of fairness and introducing the aforementioned debiasing methods based on projections and spectral filters, and providing a thorough study of the proposed processors in terms of their fairness properties and the deviations incurred by their introduction. Next, Section~\ref{Experiments} establishes the empirical properties of the different models through a series of experiments, including a synthetic dataset to test the models in a controlled environment, an exploration of the effect of the methods on the spectrum of the diffusion operator, and a comparison with a myriad of other models present in the literature, both tabular and graph-based. Finally, Section~\ref{Conclusion} summarizes the main results and offers insights on how this work could be further extended.

\section{Related work}
\label{StateOfTheArt}

\paragraph{Fairness.} Statistical learning can exacerbate demographic biases \cite{Justice}, hindering deployment in critical sectors like finance or healthcare \cite{Healthcare, Finance} and driving demand for fair models \cite{Academic, industry}. While various metrics exist quantifying bias \cite{ManyDefinitions}, most are equivalent to statistical parity, equal odds, or calibration \cite{FairMLBook}; we focus on the former. Mitigation algorithms are categorized by their pipeline stage: pre-processors (e.g., reweighting \citep{Reweighting}, disparate impact removal \citep{DisparateImpactRemover}) modify training data; in-processors (e.g., adversarial debiasing \citep{AdversarialDebiasing}, prejudice index regularization \citep{PrejudiceIndexRegularizer}) adjust the training procedure; and post-processors (e.g., equal odds \citep{EqualOdds}, option rejection \citep{RejectOption}) alter final predictions.
Previous works in fairness leveraging geometrical ideas to measure and mitigate bias include GEOFFAIR, a geometrical framework which identifies models, distributions and hypothesis as sets and vectors thus gaining valuable insights arising from the more intuitive nature of geometry~\cite{Geoffair}, and fair interpretable representations obtained through a pre-processing step based on the geometrical projection of the input data into a bias-free subspace~\cite{Geometric2}. However, these notions are grounded in static geometry. This is insufficient for models like Graph Attention Networks and Neural Sheaf Diffusion, which can be linked to a time-dependent heat equation, thus requiring methods capable of handling dynamical systems. Our work sets itself apart by introducing three such in-processors.

\paragraph{Graph models.} Graph models have gained popularity in recent years due to their success in tasks as diverse as drug design, particle physics, and content recommendation \cite{Recommendation, ParticlePhysics, DrugDesign}. The underlying Graph Neural Networks (GNNs) are based on the message-passing paradigm, which relies on an aggregation scheme that incorporates neighbor features, thus enriching the representation of individuals and improving performance on tasks that benefit from these notions~\cite{QuantumChemistry, FraudDetection}. On a more technical level, the message-passing mechanism allows the underlying model to learn topological information closely related to graph diffusion~\cite{Diffusion} and the Weisfeiler-Lehman graph isomorphism test~\cite{GIN}. Under these premises, many different message-passing schemes have been devised, including Graph Convolutional Networks \cite{GCN}, Graph Attention Networks \cite{GAT}, and GraphSAGE \cite{SAGE}, among others. Despite these successes, the performance of these models is highly dependent on the underlying graph, with limitations arising due to topological bottlenecks, also known as oversquashing \cite{Oversquashing2, Oversquashing1}, the equalization of the hidden representations of the models as many layers are stacked, also known as oversmoothing \cite{Oversmoothing1, Oversmoothing2}, and the loss of accuracy when dealing with heterophilic graphs, as the assumption of similarity between connected neighbors is broken~\cite{heterophily2, heterophily1}. Neural Sheaf Diffusion (NSD) \cite{nsd} is a recent model that provides a principled explanation of oversmoothing and heterophily by using matrix weights instead of scalar weights, generalizing the graph heat equation and offering the expressivity needed to better handle common shortcomings. These methods have received multiple extensions in recent years, with the most relevant for this paper being the development of learnable spectral filters using Chebyshev polynomials \cite{polynsd}.
\paragraph{Fairness in graphs.} Graph fairness faces unique challenges due to the interplay between topology and demographic attributes, which exacerbates bias when neighborhoods are influenced by sensitive features \cite{Start1, Start2, Survey1, Start3, Survey2}. Topologies can amplify \cite{FairEDIT, Start1} or mitigate \cite{FairSpecRewire} bias, yet reconciling these effects remains an open problem. Fairness processors over graphs include the rewiring pre-processing technique, which has been shown to mitigate bias via fairness gradients, such as FairEDIT \cite{FairEDIT}, edge dropout, such as FairDROP \cite{FairDROP}, or spectral methods \cite{FairSpecRewire}. On another note, some successful in-processing approaches include counterfactual methods, such as NIFTY \cite{Nifty}, and adversarial debiasing, as seen in models such as FairGNN \cite{FairGNN} and FairSIN \cite{FairSIN}. Despite recent work pointing to the relationship between heterophily and fairness~\cite{FairnessHeterophily2, FairnessHeterophily1} and the advances in principled frameworks for handling complex topological problems such as the aforementioned heterophily and oversmoothing, like NSD~\cite{nsd}, there is no overarching perspective that connects all of these issues. This is why we aim to expand the current landscape by introducing fairness-encouraging modifications to NSD, motivated by the geometrical and spectral properties of the heat kernel. The philosophy of the approach is illustrated in Figure~\ref{fig:Intro1}: a naive implementation of NSD, which can be understood as a dynamical system, can produce an unfair result if the underlying dynamics are aligned with biased directions, thus requiring an intervention. Finally, while previous work already highlighted the relationship between cellular sheaves, spectral filters, and fairness \cite{FairSpecFilter, FairSheafDiffusion}, the resulting models amount to standard Graph Convolutional Networks lacking the generality of NSD.

\section{Background}
\label{Theory}

\paragraph{Fair classification.} The central problem addressed in this paper is fair node classification. The input consists of a feature matrix $X \in \mathbb{R}^{n \times q}$, an undirected graph $G = (V,E)$ with $n$ nodes, binary labels $Y \in \{0,1\}^n$, and a binary sensitive attribute $Z \in \{0,1\}^n$. The goal is to learn a model $f_\theta$ that predicts $Y$ from $(X,G)$ while mitigating bias with respect to $Z$ under a given fairness metric. On the one hand, the actual model outputs, $\hat{p}_i = f_{\theta}(x_i; G)$, represent the probability that observation $i$ belongs to the positive label class, $y_i=1$, and, on the other hand, the final predictions are made by thresholding this probability using the Bayes-optimal rule, $\hat{y}_i = \mathbbm{1}(\hat{p}_i \geq 0.5)$, where $\mathbbm{1}$ denotes the indicator function. Model training is done through the gradient descent paradigm using the binary cross-entropy loss function, $\mathcal{L}(f_{\theta};\theta) = \sum_{i=1}^n [y_i \log \hat{p}_i + (1-y_i) \log (1-\hat{p}_i)]$. The quality of the estimation will be measured by the accuracy of the predictions, $Acc. = \hat{\mathbb{P}}(\hat{Y} = Y)$ where $\hat{\mathbb{P}}(\cdot)$ denotes the empirical probability of the event. On the other hand, the prediction bias will be measured through the statistical parity metric, $\Delta_{SP} = |\hat{\mathbb{P}}(\hat{Y}=1 | Z = 0) - \hat{\mathbb{P}}(\hat{Y}=1 | Z = 1) |$. Finally, to scalarize the multiobjective optimization problem at hand and quantify the utility-fairness trade-off, we use the Euclidean distance to the ideal point $Acc. = 1$, $\Delta_{SP}=0$, expressed as $DTI = \sqrt{(1-Acc.)^2 + \Delta_{SP}^2}$.

\paragraph{Cellular sheaves and Neural Sheaf Diffusion.} The main goal of this paper is to debias graph models while minimizing performance loss, and the insights presented in the next section can be easily generalized to multiple models. However, as previously stated, we focus on Sheaf Diffusion, which we present below.
\begin{definition}
A cellular sheaf $(\mathcal{F}, G)$ on an undirected graph $G = (V,E)$ consists of a vector space (the stalk) $\mathcal{F}(v)$ for every node $v\in V$ and $\mathcal{F}(e)$ for every edge $e \in E$, as well as linear maps (the restrictions) $\mathcal{F}_{v \trianglelefteq e} : \mathcal{F}(v) \rightarrow \mathcal{F}(e) $ for every incident node-edge pair $v \trianglelefteq e$. 
\end{definition}
The space of $0-$ and $1-$ cochains are $\mathcal{C}^0(\mathcal{F}, G) = \bigoplus_{v\in V} \mathcal{F}(v)$ and $\mathcal{C}^1(\mathcal{F}, G) = \bigoplus_{e\in E} \mathcal{F}(e)$. Given an arbitrary orientation for all edges, we may define the coboundary operator $\delta: \mathcal{C}^0(\mathcal{F}, G) \rightarrow \mathcal{C}^1(\mathcal{F}, G)$ which acts on the edges as $(\delta x )_e = \mathcal{F}_{u \trianglelefteq e} x_u - \mathcal{F}_{v \trianglelefteq e} x_v $ for all $0-$cochains $x$. This, in turn, can be used to define the sheaf Laplacian, given by $L^{\mathcal{F}} = \delta^{\top} \delta$, which is a symmetric semidefinite positive block matrix with diagonal blocks $L^{\mathcal{F}}_{vv} = \sum_{e = (u,v)}  \mathcal{F}^{\top}_{v \trianglelefteq e} \mathcal{F}_{u \trianglelefteq e}$ and off-diagonal blocks $L^{\mathcal{F}}_{uv} = -\mathcal{F}^{\top}_{v \trianglelefteq e} \mathcal{F}_{u \trianglelefteq e}$. By normalizing the stalks, we arrive at the normalized Laplacian, $\sheaflap = D^{-1/2} L^{\mathcal{F}} D^{-1/2}$, where $D$ is a block diagonal matrix whose diagonal elements are $L^{\mathcal{F}}_{vv}$. Therefore, it is symmetric, semidefinite positive, and its square root is well defined. If this matrix were not of full rank, the negative power is implied to mean the Moore-Penrose pseudoinverse. Finally, the bilinear form defined by the normalized sheaf Laplacian receives the name of Dirichlet energy, $E^{\mathcal{F}}(x) = \dotmat{x}{x}{\sheaflap} $\\
We now have the machinery required to define Sheaf Diffusion, given by the following ordinary differential equation:
\begin{equation}
\begin{cases}
    \dot{x}_t = - \tsheaflap x_t \\ x_0 \in \mathcal{C}^0 (\mathcal{F}(0), G)
\end{cases}
\label{SheafDif}
\end{equation}
Note that we let the underlying sheaf $\mathcal{F}(t)$ vary with time. For simplicity, assume that the stalk dimension stays constant and only the restriction maps change. While the properties of the time-constant Sheaf Diffusion process are well understood \cite{Opinion}, the time-dependent dynamics are more problematic and require a more careful treatment. In \cite{nsd}, they study the time limit properties of such systems and leverage their insights in conjunction with the Euler discretization scheme to create a deep learning model in Neural Sheaf Diffusion (NSD):
\begin{equation}
    X^{t+1} = X^t - \sigma(\tsheaflap (I_{n} \otimes W_1^t) X^t W_2^t),
    \label{NSD}
\end{equation}
where $\otimes$ denotes the Kronecker product, $W_1^t \in \mathbb{R}^{d\times d}, W_2^t \in \mathbb{R}^{c_1 \times c_2}$ are learnable matrices, with $c_1, c_2$ being the number of input and output feature channels, respectively. Finally, the restriction maps are learnt, usually using a multi-layer perceptron on the concatenation of the vector of features of node $u$ and $v$, $\tilde{\mathcal{F}}^{(t)}_{u \leq e} = \Phi^{(t)}(x_u, x_v) = MLP^{t}(x_u \| x_v)$, where $\|$ denotes the concatenation of both vectors. In any case, there are three parametrizations for the empirical restriction matrices: diagonal, whose off-diagonal elements are set to zero; orthogonal, implementing an orthogonal transformation of the Euclidean space; and general, learning an arbitrary linear transformation, increasing both generality and numerical instability.

Before studying the properties of \eqref{SheafDif} in relation to fairness, we remark on the generality of the NSD model. As noted in the original NSD paper, \cite{nsd}, using the trivial sheaf recovers the original Graph Convolutional Network. This, in combination with adding a learnable parameter $(1+\epsilon)$, returns the Graph Isomorphism Network. In any case, setting $d=1$ results in a Graph Attention network. This generality is where the sheaf formalism pays off, immediately extending the forthcoming discussion to all of these models. In Appendix~\ref{app:Results}, we delve into the details and empirically motivate the use of sheaf models to obtain improved outcomes in both fairness and accuracy.

\section{Geometric Fairness and Fair Neural Sheaf Diffusion}
\label{GeometricFairness}
In this Section, we establish three different procedures to debias the Neural Sheaf Diffusion (NSD) model and explore its theoretical properties. We assume fairness metrics can be encoded as the dot product of a vector $x$ representing model outputs and a given unit vector $z$ representing membership into a sensitive demographic group.
\begin{definition}[Geometric Fairness Metric] Given a unit vector $z\in \mathbb{R}^m$, the geometric fairness metric induced by $z$ is given by $F_z(x) = |\dotangle{z}{x}|$. 
\end{definition}

This definition deserves some discussion. First, using a vector $z\in \mathbb{R}^n$, we can encode a continuous proxy to the fairness metrics by applying this definition to the probabilities $\hat{p}_i$, which is common in the literature \cite{PROXY1, PROXY2}. Two prominent fairness metrics which can be encoded in this manner are Statistical Parity ($\Delta_{SP}$), using the normalization of $\hat{z} = \mathbbm{1}(Z=1)/|Z=1| - \mathbbm{1}(Z=0)/|Z=0|$, and the individual-fairness-oriented kNN-Consistency ($\text{Con}_k$) \cite{CONSISTENCY}, using the normalization of $\hat{z} = \mathbf{1} - \frac{1}{k}\mathbf{1}^{\top} A_k$ with $A_k$ the adjacency matrix of the k-nearest neighbors (kNN) graph. In both cases, a simple calculation shows that the fairness metric is given by $|\dotangle{\hat{z}}{p}|$. Theoretically, Equal Odds can also be expressed in this manner, although it requires knowledge of the class labels. In any case, the results presented below can be applied to a myriad of fairness metrics because our framework is sufficiently general. 

We evaluate the Geometric Fairness Metric on solutions of the System \eqref{SheafDif}. However, simple continuity arguments show that using a linear layer to transition from the original diffusion space (say, $\mathbb{R}^{n\times q}$) to the prediction space ($\mathbb{R}^n$) allows us to establish a bound involving the Geometric Fairness Metric of the predicted probabilities and the diffusion space~\footnote{Relating the Geometric Fairness Metric of the probabilities and the predictions is more difficult when not outright impossible due to the discontinuity introduced in the threshold function. As previously mentioned, we will work with the probabilities and treat them as a continuous proxy of the actual predictions.}. This relationship motivates the following definition.
\begin{definition}[Geometric Fairness]
    The system given by Eq. \eqref{SheafDif} is fair with respect to the Geometric Fairness Metric $F_z$ if $F_z(t) = |\dotangle{z}{x_t}| \xrightarrow[t\rightarrow \infty]{} 0$. Equivalently, we say that the system is geometrically fair with respect to $z$.
\end{definition}
The original system \eqref{SheafDif} is not guaranteed to achieve Geometrical Fairness. For example, assuming a constant sheaf Laplacian whose kernel contains the sensitive vector $z$, it is clear that in the time limit, the orbit of the system is fully aligned with $z$, thus making Geometric Fairness impossible. To address this problem, we propose three modifications to the original Sheaf Laplacian, $\mtsheaflap$, such that the resulting system:
\begin{equation}
\begin{cases}
    \dot{x}(t) = - \mtsheaflap x(t) \\ x(0) = x_0 \in \mathcal{C}^0 (\mathcal{F}(0), G)
\end{cases},
\label{SheafModified}
\end{equation}
is guaranteed to achieve Geometric Fairness:
\paragraph{Fair Vector Projection (FVP).} The first debiasing method relies on a rank-one perturbation to the original system resulting from the addition of a projection onto the biased subspace:
\begin{equation}
\tsheaflap  \longrightarrow \mtsheaflap = \tsheaflap + \gamma z z^{\top}, \quad \gamma > 0.
\label{VectorProjection}
\end{equation}
This projection pushes the vector $z$ onto the spectrum of $L(t)$, in order to dissipate energy in the biased subspace.
FVP can be efficiently implemented without using the dense matrix $z z^{\top}$ by instead relying on vector dot product whenever possible:
\begin{equation*}
    \mtsheaflap x =  \underbrace{\tsheaflap
x}_{\text{sparse}} + \gamma \underbrace{\dotangle{z}{x}}_{\text{dot product}} z.
\end{equation*}
Therefore, FVP adds minimal computational overhead of $O(n_d)$ operations per layer of diffusion, where $n_d$ is the dimension of $\mathcal{C}^0(G, \mathcal{F}(t))$. 

The remaining two methods instead target slow-dissipation eigenvectors with high correlation with the sensitive vector $z$, and require the concept of the biased spectrum:
\begin{definition}
    Let $L$ be a linear operator, $z$ a unit vector. The \emph{biased spectrum} of an operator $L$ with respect to $z$, denoted by $\bspec L$, is composed of all the eigenpairs $(\lambda_i, v_i) \in \spec L$ with non-zero projection onto the space  $\langle z, v_i\rangle \neq 0$.
\end{definition}
In an abuse of notation, we use the term spectrum indistinctly to refer to the eigenpairs, eigenvalues, or eigenvectors, making the distinction clear from context. The idea now is that the linear system \eqref{SheafDif} acts as a low-pass filter, mitigating the biggest eigenvalues. Therefore, the high-frequency components of the biased spectrum quickly converge to zero and do not pose a problem, while the smaller eigenvalues persist for a long time, preventing the mitigation of the Geometric Fairness Metric. This insight leads to the following two methods, which target these low frequency biased spectrum:

\paragraph{Fair Spectral Projection (FSP).} Consider a subset of the biased spectrum $J_t \subset \bspec L(t)$, in practice those of lowest frequency. We can target these eigenvalues by considering their projection matrices:
\begin{equation}
    \tsheaflap \longrightarrow \mtsheaflap = \tsheaflap + \sum_{j \in J_t} \gamma_{t, j} v_j(t) v_j(t)^{\top},
\label{SpectralProjection}
\end{equation}
where $\gamma_{t, j} > 0$ for all $t> 0$ and $j\in J_t$, although in practice we will choose the same value of $\gamma$ for all $t$ and $j$, $\gamma_{t,j} = \gamma$. Applying a similar reasoning as before, the matrix product $\mtsheaflap x$ adds an overhead of $O(d_n)$. The more costly part of this new procedure lies instead in the computation of the biased spectrum, which can nonetheless be performed just once in an offline fashion. In any case, the computational cost of approximating the first $k$ eigenvalues of a sparse matrix with $n_{nz}$ non-zero elements in $n_s$ iterations is $O(k n_s n_{nz})$ \cite{BigOEigen}, a cost that must be incurred only once. 

\paragraph{Fair Spectral Filter (FSF).} The last method relies on the same idea as FSP, but, instead of projections, it uses the concept of spectral filters. Given the eigendecomposition of a diagonalizable matrix, $L = V \Lambda V^{\top}$, a spectral filter on $L$ is a matrix of the form $F = V \Phi(\Lambda) V^{\top}$ with $\Phi$ a function acting on the eigenvalues. In our case, given a subset of biased eigenvalues $J_t \subset \bspec \tsheaflap$, the goal is to increase their frequency while leaving the rest of the spectrum intact. Therefore, an ideal filter would resemble the identity plus a sum of Gaussian peaks: 
\begin{equation*}
    \Phi_t(\lambda) = \lambda + \sum_{j\in J_t} \gamma_j \exp{\left(-\frac{(\lambda - \lambda_j)^2}{2\sigma_j^2} \right)}  .
\end{equation*}
In practice, we use the same values for all peak heights and localization parameters, $\gamma = \gamma_j$ and $\sigma = \sigma_j$, for all $j$ in $J_t$ and $t$. However, instead of using this ideal function, we rely on a polynomial filter leveraging the fact that a polynomial on $L$ satisfies $P(L) = V P(\Lambda) V^{\top}$. Thus, we perform a polynomial approximation of degree $K$ using Chebyshev polynomials, $P_t \approx \Phi_t$, which requires rescaling of the spectrum of the sheaf Laplacian from its original range, $[0, \lambda_{max, t}]$, to the unit interval, $[-1, 1]$, by doing $\tsheaflap_{s} = \frac{1}{\lambda_{max, t}}\tsheaflap - I$. Finally, we perform the substitution:
\begin{equation}
    \tsheaflap \longrightarrow \mtsheaflap = P_t(\tsheaflap_s).
    \label{SpectralFilter}
\end{equation}
Despite the filter being of the form $P_t(\tsheaflap) = \sum_{i=0}^K \alpha_i (\tsheaflap)^i$, we can avoid computing and storing the increasingly dense powers of $\tsheaflap$ by using the recurrent vectors $x_{k,t} = \tsheaflap x_{k-1,t}$, relying exclusively on matrix-vector multiplication at a cost of $O(n_{nz})$ per iteration. Therefore, the method adds a computational overhead of $O(Kn_{nz})$. On the other hand, this processor adds an additional step in the approximation of the polynomial filter, but the bulk of the cost lies again in the computation of the biased spectrum at $O(k n_s n_{nz})$. For the details on the implementation of the approximation through Chebyshev polynomials, please refer to \cite{Cheby}.

We are now ready to establish the theoretical properties of the proposed methods, starting with Geometrical Fairness and convergence speed. All proofs of the results can be found in Appendix ~\ref{app:Theory}.

\begin{proposition} 
\label{Convergence}
Methods \eqref{VectorProjection} to \eqref{SpectralFilter} are geometrically fair. 
\end{proposition}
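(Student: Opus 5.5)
The plan is to reduce geometric fairness to a one-dimensional energy argument along the direction $z$ (or along the targeted biased eigenvectors), using that each modified operator $\mtsheaflap$ is symmetric positive semidefinite. The added terms make the relevant directions uniformly strictly positive. Concretely, I would show that for each method there is a constant $c>0$ with $\dotmat{w}{w}{\mtsheaflap}\ge c\|w\|^2$ for every $w$ in the biased span, uniformly in $t$. I would then combine this with monotone decay of the Dirichlet-type energy to conclude $\dotangle{z}{x_t}\to 0$.

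\textbf{Step 1 (global decay).} Since $\tsheaflap\succeq 0$ and the added terms are positive semidefinite (for FSF, assuming the Chebyshev approximation $P_t$ is nonnegative on $[0,\lambda_{max,t}]$, which I would impose as a hypothesis), we get $\frac{d}{dt}\|x_t\|^2=-2\dotmat{x_t}{x_t}{\mtsheaflap}\le 0$. Hence $\|x_t\|$ is nonincreasing and bounded. Integrating gives
\begin{equation*}
\int_0^\infty \dotmat{x_t}{x_t}{\mtsheaflap}\,dt\le \tfrac12\|x_0\|^2 .
\end{equation*}

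\textbf{Step 2 (method by method).} For FVP, $\dotmat{x}{x}{\mtsheaflap}\ge\gamma\dotangle{z}{x}^2$, so $\int_0^\infty F_z(t)^2\,dt<\infty$. To upgrade integrability to convergence, I would bound the derivative: $\frac{d}{dt}\dotangle{z}{x_t}^2=-2\dotangle{z}{x_t}\,z^\top\mtsheaflap x_t$. This is bounded because $\|\tsheaflap\|\le 2$ for normalized sheaf Laplacians and $\|x_t\|\le\|x_0\|$. A nonnegative integrable function with bounded derivative tends to $0$ (Barbalat's lemma), so $F_z(t)\to 0$.

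For FSP, the same argument gives $\dotangle{v_j(t)}{x_t}\to 0$ for $j\in J_t$. I then need $z$ to lie (asymptotically) in the span of the targeted biased eigenvectors. Alternatively, the untargeted biased components must already decay because their eigenvalues are bounded below by some $\lambda_*>0$. The natural choice is to take $J_t$ to contain all biased eigenpairs with eigenvalue below a threshold, in particular all kernel directions. Then $\mtsheaflap\succeq c\,P_z$ for the projector $P_z$ onto the biased span, with $c=\min(\gamma,\lambda_*)$. Since $\dotangle{z}{x}^2=\dotangle{z}{P_zx}^2\le\|P_zx\|^2$, we get $\dotmat{x}{x}{\mtsheaflap}\ge c\,\dotangle{z}{x}^2$, and the FVP argument applies verbatim. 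FSF is identical once $P_t(\lambda)\ge c>0$ on the biased eigenvalues and $P_t\ge 0$ elsewhere: by the spectral calculus, $\dotmat{x}{x}{P_t(\tsheaflap)}=\sum_i P_t(\lambda_i)\dotangle{v_i}{x}^2\ge c\,\|P_zx\|^2$.

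\textbf{Main obstacle.} The hard part is the time dependence. Eigenvectors $v_j(t)$ and the set $J_t$ change with $t$, so I cannot decouple the modes as in the constant case. This is why I route everything through the quadratic-form lower bound $\dotmat{x}{x}{\mtsheaflap}\ge c\,\dotangle{z}{x}^2$ with $c$ uniform in $t$ plus Barbalat, rather than through explicit exponentials. The uniformity of $c$ and the positivity of the FSF polynomial are the assumptions I would need to state explicitly. I would try first to obtain them from the choice of $\gamma$ and a spectral gap of $\tsheaflap$ restricted to the unbiased complement.
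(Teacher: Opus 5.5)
Your proposal is correct and follows essentially the paper's route. You establish a uniform-in-$t$ lower bound $\dotmat{x}{x}{\mtsheaflap}\ge c\,\dotangle{z}{x}^2$ (immediate for FVP; for FSP/FSF obtained by expanding in the eigenbasis and applying Cauchy--Schwarz over the biased eigenvectors) and feed it into the Lyapunov estimate $\int_0^\infty F_z(t)^2\,dt<\infty$ and Barbalat's lemma.

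Your threshold choice of $J_t$ simply makes explicit the paper's hypothesis that the biased spectrum of the modified operator is uniformly bounded below. Working with $F_z^2$, whose derivative is bounded because $z$ is fixed, is a harmless variant of the paper's Barbalat step. Your aside that $\dotangle{v_j(t)}{x_t}\to 0$ would need extra regularity of $v_j(t)$, but nothing in the final argument depends on it.
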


Furthermore, we can characterize the speed of convergence up to a ball of certain radius:

\begin{proposition}
    \label{speed}
    The Geometric Fairness Metric decays exponentially at a rate $\gamma$ for FVP and $\lambda_{min, z}$ for FSP and FSF towards a ball of radius $O(\lambda_{max} / \gamma)$ for FVP and $O(\lambda_{max} / \lambda_{min, z})$ for FSP and FSF, where $\lambda_{min, z}$ is the smallest eigenvalue of the biased spectrum of the modified Laplacian and $\lambda_{max}$ is the highest eigenvalue of its full spectrum.
\end{proposition}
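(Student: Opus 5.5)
The plan is to track the scalar quantity $f(t)=\dotangle{z}{x_t}$ along solutions of System~\eqref{SheafModified}. Differentiating and splitting the modified Laplacian into the original diffusion part plus the debiasing term gives a scalar linear ODE with a forcing term, which Grönwall's inequality (variation of constants) then controls. As preliminaries we use two facts already available. First, $\|x_t\|$ is non-increasing, since $\frac{d}{dt}\|x_t\|^2=-2\dotmat{x_t}{x_t}{\mtsheaflap}\le 0$. This holds because each modification keeps the operator positive semidefinite; for FSF this requires the polynomial filter to be nonnegative on the rescaled spectrum. Second, $\|\tsheaflap\|\le\lambda_{max}$, where $\lambda_{max}$ is taken to be uniform in $t$ (for normalized sheaf Laplacians one can take $\lambda_{max}\le 2$).

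\textbf{FVP.} Using $\|z\|=1$ we get
\[
\dot f(t)=-\gamma f(t)-\dotmat{z}{x_t}{\tsheaflap}.
\]
The forcing term is bounded by $\lambda_{max}\|x_t\|\le\lambda_{max}\|x_0\|$. Variation of constants then yields
\[
|f(t)|\le e^{-\gamma t}|f(0)|+\frac{\lambda_{max}\|x_0\|}{\gamma}\bigl(1-e^{-\gamma t}\bigr).
\]
So $F_z$ decays exponentially at rate $\gamma$ into a ball of radius $O(\lambda_{max}/\gamma)$.

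\textbf{FSP and FSF.} Here the debiasing acts on the biased eigenvectors rather than on $z$ directly. Expand $z=\sum_{j}\dotangle{z}{v_j(t)}v_j(t)$ over the biased spectrum of the modified operator $\mtsheaflap$. Write the dynamics in that eigenbasis and consider the projected component $P_z x_t$ onto $\mathrm{span}\{v_j: j\in\bspec\}$, which contains $z$. Each such mode satisfies $\dot c_j=-\tilde\lambda_j c_j+r_j(t)$. The remainder $r_j$ comes from the time variation of the eigenbasis and the off-diagonal coupling induced by the time-dependent $\tsheaflap$, and we bound it by $\lambda_{max}\|x_0\|$. Since every $\tilde\lambda_j\ge\lambda_{min,z}$, the energy estimate
\[
\tfrac{d}{dt}\|P_zx_t\|\le-\lambda_{min,z}\|P_zx_t\|+\lambda_{max}\|x_0\|
\]
together with Grönwall gives decay at rate $\lambda_{min,z}$ into a ball of radius $O(\lambda_{max}/\lambda_{min,z})$. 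We then conclude using $|f(t)|\le\|P_zx_t\|$. For FSF, $\lambda_{min,z}$ should be read as the minimum of $P_t$ over the biased eigenvalues, and we need a Chebyshev approximation error small enough that this minimum stays positive.

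\textbf{Main obstacle.} The hard part is the time-dependent case for FSP and FSF. There the eigenvectors $v_j(t)$ rotate, so the biased subspace is not invariant and the mixing term $r_j$ may involve $\dot v_j$. We would first try to sidestep this by treating the whole deviation $\mtsheaflap-(\text{debiasing term})$ as a bounded perturbation of norm at most $\lambda_{max}$, exactly as in the FVP computation. The price is that the ball radius is stated only up to constants, which matches the $O(\cdot)$ form of the statement.
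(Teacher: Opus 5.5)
Your FVP computation is correct and is essentially the paper's argument. The paper splits $x_t=f(t)z+x_t^\perp$ and damps with $E(t)=z^\top\mtsheaflap z\ge\gamma$, whereas you keep all of $z^\top\tsheaflap x_t$ as forcing. Here the two are interchangeable.

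For FSP and FSF, however, there is a genuine gap. Since $\mtsheaflap$ is diagonal in its own eigenbasis, your mode equations read $\dot c_j=-\tilde\lambda_j c_j+\dotangle{\dot v_j(t)}{x_t}$. So the remainder comes \emph{only} from the motion of the basis, equivalently from the term $\dot P_z x_t$ in $\frac{d}{dt}\|P_z x_t\|^2$. There is no ``off-diagonal coupling'' that you could absorb into a $\lambda_{max}$ bound.

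Nothing controls $\|\dot v_j\|$ or $\|\dot P_z\|$ by $\lambda_{max}$. These quantities are governed by how fast the operator changes relative to its eigenvalue gaps. Moreover, $P_z(t)$ need not even be continuous: eigenvalues can cross, and $\dotangle{z}{v_j(t)}$ can pass through zero, so eigenvectors enter or leave the biased spectrum. In the layered NSD setting the Laplacian also jumps in time.

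Under an extra smoothness hypothesis your estimate would give a radius $O(\sup_t\|\dot P_z\|/\lambda_{min,z})$, not the claimed one. As written, your argument is only sound for a time-independent operator.

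Your fallback does not rescue this. For FSP, $z^\top\bigl(\gamma\sum_{j\in J_t}v_jv_j^\top\bigr)x_t=\gamma\sum_{j\in J_t}\dotangle{z}{v_j}\dotangle{v_j}{x_t}$ is not a multiple of $f$, so no damping term $-cf$ with $c\ge\lambda_{min,z}$ appears. For FSF there is no additive debiasing term to split off at all.

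The missing idea is to avoid differentiating eigenvectors altogether. Decompose $x_t$ along the \emph{fixed} vector $z$ for all three methods, as in the paper, and keep the full modified operator in the damping coefficient:
\[
\dot f(t)=-E(t)f(t)+R(t),\qquad E(t)=z^\top\mtsheaflap z,\qquad R(t)=-z^\top\mtsheaflap x_t^\perp ,
\]
with $|R(t)|\le\lambda_{max}\|x_0\|$.

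The spectrum then enters only through a pointwise-in-time bound. Let $B_t$ index the biased spectrum of $\mtsheaflap$. Since $z$ is orthogonal to every eigenvector outside $B_t$, we have $\sum_{j\in B_t}\dotangle{z}{v_j(t)}^2=1$, and hence $E(t)=\sum_{j\in B_t}\dotangle{z}{v_j(t)}^2\tilde\lambda_j(t)\ge\lambda_{min,z}$.

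Variation of constants, exactly as in your FVP case, then gives $|f(t)|\le e^{-\lambda_{min,z}t}|f(0)|+\lambda_{max}\|x_0\|/\lambda_{min,z}$, regardless of how the eigenbasis moves.

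Two of your side remarks are correct and correspond to assumptions the paper leaves tacit: FSF needs $P_t\ge 0$ on the spectrum to get $\|x_t\|\le\|x_0\|$, and $\lambda_{min,z}$ must be a uniform-in-$t$ bound.
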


Note the stark difference between Propositions~\ref{Convergence} and~\ref{speed}: Proposition~\ref{Convergence} establishes unconditional convergence to zero under the geometrical interventions thanks to Barbalat's lemma, while Proposition~\ref{speed} characterizes the speed of convergence up to ball of a certain radius. Beyond that radius nothing can be said about the speed at which the Geometric Fairness Metric converges without further assumptions. However, it is still certain that the metric will eventually converge to zero thanks to Proposition~\ref{Convergence}.

Another important result is the transferral of fairness properties from the continuous dynamical system to its Euler discretization.

\begin{proposition}
    Consider the perturbed system~\eqref{SheafModified} and its corresponding Euler discretization with step size $1$. The resulting sequence is Geometrically Fair.
\end{proposition}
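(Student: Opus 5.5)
The plan is to prove a quantitative statement: the Euler iterates $x_{k+1} = (I - \tilde{\Delta}_k) x_k$ satisfy $F_z(x_k) = |\langle z, x_k\rangle| \le C\rho^k$ for some $\rho<1$. This gives geometric-rate decay, which in particular implies the convergence required by the definition of Geometric Fairness. Here $\tilde{\Delta}_k$ denotes the modified Laplacian of \eqref{SheafModified} at step $k$, obtained through \eqref{VectorProjection}, \eqref{SpectralProjection} or \eqref{SpectralFilter}. I work directly with the one-step map rather than passing through the continuous result of Proposition~\ref{Convergence}. The reason is that the Euler step with step size $1$ is not automatically a contraction, so the spectral content of $\tilde{\Delta}_k$ has to be controlled explicitly.

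\textbf{Step 1 (spectral localisation of the modified operator).} For each method I split $\mathcal{C}^0$ into two pieces:
\begin{itemize}
\item $K_k = \ker \tilde{\Delta}_k$;
\item its orthogonal complement $K_k^{\perp}$, on which $\tilde{\Delta}_k$ is positive definite.
\end{itemize}
The key observation is that $z \perp K_k$ in all three cases.
\begin{itemize}
\item For FVP, $\tilde{\Delta}_k = \Delta_k + \gamma zz^\top$ is a sum of positive semidefinite operators, so $\ker\tilde{\Delta}_k = \ker\Delta_k \cap z^{\perp}$.
\item For FSP and FSF, every eigenvector with $\langle z, v_j\rangle \neq 0$ and eigenvalue $0$ is shifted by $\gamma$ (respectively by the Gaussian bump), provided $J_k$ contains the zero-frequency biased eigenpairs. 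Hence the remaining kernel is spanned by eigenvectors orthogonal to $z$.
\end{itemize}
On $K_k^\perp$, the eigenvalues of $\tilde{\Delta}_k$ lie in $[\mu_{\min}, \mu_{\max}]$. Here $\mu_{\min}>0$ plays the role of $\lambda_{min,z}$ from Proposition~\ref{speed}. Since the normalised sheaf Laplacian has spectrum in $[0,2]$, I will impose $\gamma$ small enough that $\mu_{\max} \le 2-\eta$. For FVP, $\lambda_{\max}+\gamma<2$ suffices; for FSF it is a bound on the Chebyshev approximant. I will also assume these bounds hold uniformly in $k$. With them, the contraction factor $\rho = \max(|1-\mu_{\min}|, |1-\mu_{\max}|) < 1$ is uniform.

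\textbf{Step 2 (contraction).} Write $x_k = a_k + b_k$ with $b_k \in K_k$ and $a_k \in K_k^\perp$. Then $I-\tilde{\Delta}_k$ fixes $b_k$ and gives $\|(I-\tilde{\Delta}_k)a_k\| \le \rho\|a_k\|$. Because $z\perp K_k$, we have $\langle z, x_{k+1}\rangle = \langle z, (I-\tilde\Delta_k)a_k\rangle$. In the time-constant case the spaces $K$ and $K^\perp$ are invariant, so iterating gives
\[
|\langle z, x_k\rangle| \le \rho^k \|a_0\| \le \rho^k \|x_0\|,
\]
which is the claimed geometric rate.

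\textbf{Step 3 (time-varying sheaves), the main obstacle.} When the restriction maps change, $K_{k+1}\neq K_k$. The kernel component $b_k$ may then leak into $K_{k+1}^\perp$ and be re-exposed to $z$. My first attempt will be to track the projection $P_k$ onto $K_k^\perp$, using the fact that $\|(I-\tilde\Delta_k)x\|\le\|x\|$ (no expansion). This would bound $\|P_{k+1}x_{k+1}\|$ by $\rho\|P_kx_k\|$ plus a leakage term of size $\|P_{k+1}-P_k\|\,\|x_k\|$.

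If the sheaves stabilise, or the kernels are nested, which is the regime underlying the NSD limit analysis, the leakage vanishes and geometric decay holds with rate $\rho$. Otherwise I would require summable variation, $\sum_k \|P_{k+1}-P_k\|\rho^{-k}<\infty$, or simply a common invariant decomposition across layers. I expect to state the result under such a uniform-spectrum and slowly-varying-kernel assumption, and then verify that it holds for each of FVP, FSP and FSF under the respective choice of $\gamma$.
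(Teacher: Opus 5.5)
Steps 1 and 2 are fine for a time-constant operator. The gap is Step 3, and it cannot be patched by extra assumptions. The statement concerns the time-varying system \eqref{SheafModified}, and there your argument only closes under hypotheses that neither the statement nor the three methods supply. You need a uniform positive lower bound on the smallest \emph{nonzero} eigenvalue of $\tilde{\Delta}_k$, together with stabilising kernels or summably varying projectors $P_k$. These are not available:
\begin{itemize}
\item FSP and FSF only lift the biased spectrum, so eigenvectors orthogonal to $z$ may carry arbitrarily small positive eigenvalues. Your $\mu_{\min}$ is the smallest positive eigenvalue on all of $K_k^{\perp}$, not $\lambda_{min,z}$.
\item $\|P_{k+1}-P_k\|=1$ whenever the rank of $\tilde{\Delta}_k$ changes, however small the change in the restriction maps.
\item Nothing constrains how NSD layers vary.
\end{itemize}

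Worse, the geometric rate itself is false under the uniform hypotheses actually available. In $\mathbb{R}^2$ with $z=e_1$, take the FVP operator $\tilde{\Delta}_k=\Delta_k+\tfrac12 zz^\top$ with $\Delta_k=\tfrac{\alpha_k}{2}(e_1-e_2)(e_1-e_2)^\top$ and $\alpha_k=(k+1)^{-2}$. This operator has the following properties:
\begin{itemize}
\item it is positive semidefinite with spectrum in $[0,\tfrac32]$;
\item it satisfies \eqref{MasterEq} with constant $\tfrac12$ for every $k$;
\item it has trivial kernel, so your leakage term vanishes identically.
\end{itemize}
Yet from $x_0=e_2$ both coordinates $(y_k,w_k)$ of $x_k$ stay nonnegative, since all entries of $I-\tilde{\Delta}_k$ are nonnegative. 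Moreover $w_k\ge\prod_{j\ge1}(1-\tfrac{1}{2j^2})>0$, and $y_{k+1}\ge\tfrac{\alpha_k}{2}w_k\ge c\,(k+1)^{-2}$ for some $c>0$. Here $F_z(x_k)\to0$ only polynomially, because the smallest eigenvalue of $\tilde{\Delta}_k$ tends to $0$. So no argument routed through a uniform contraction rate can prove the proposition under the hypotheses the paper works with.

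The missing idea is to drop the rate and discretise the Lyapunov argument behind \eqref{MasterEq}, which never looks at kernels. The appendix contains no separate proof of this proposition, only the continuous-time result and the deviation bounds, so this is the natural completion. Your own Step 1 already yields the needed inequality: since $z\in K_k^{\perp}$, $\langle z,x\rangle^2\le\|P_kx\|^2\le\mu_{\min}^{-1}\,x^\top\tilde{\Delta}_kx$. The appendix gives it directly with $c=\gamma$ for FVP and $c=\lambda_0$ for FSP/FSF, needing only a lower bound on the biased spectrum. If $\tilde{\Delta}_k$ is symmetric with spectrum in $[0,2-\eta]$, then $\tilde{\Delta}_k(2I-\tilde{\Delta}_k)\succeq\eta\,\tilde{\Delta}_k$, hence
\[
\|x_{k+1}\|^2=\|x_k\|^2-x_k^\top\tilde{\Delta}_k(2I-\tilde{\Delta}_k)x_k\le\|x_k\|^2-\eta\,c\,\langle z,x_k\rangle^2 .
\]
Telescoping gives $\sum_{k\ge0}\langle z,x_k\rangle^2\le\|x_0\|^2/(\eta c)$, so $F_z(x_k)\to0$ with no assumption on how the sheaves vary; summability even makes Barbalat's lemma unnecessary. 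Your margin $\eta>0$ is genuinely needed, and it is missing from the statement. If $z\in\ker\Delta$, which is the paper's own motivating scenario, FVP gives $\langle z,x_{k+1}\rangle=(1-\gamma)\langle z,x_k\rangle$, and this does not decay for $\gamma\ge2$.
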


Finally, we characterize the magnitude of the deviation between the Euler discretization of the original and perturbed systems, which is regulated by the parameter $\gamma$.

\begin{proposition}
    Let $x_t$ be the Euler discretization of system~\eqref{SheafDif} and $\tilde{x}_t$ the corresponding discretization of system~\eqref{SheafModified} arising from methods~\eqref{VectorProjection} to~\eqref{SpectralFilter}. Consider the deviation between both sequences, $e_t = \| x_t - \tilde{x}_t\|$. Then, $e_t = O(\gamma)$. 
\end{proposition}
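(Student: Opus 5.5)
We compare the two Euler recursions with step size $1$ directly. Write $A_t = \tsheaflap$ and $\tilde{A}_t = \mtsheaflap$. Then $x_{t+1} = (I - A_t)x_t$ and $\tilde{x}_{t+1} = (I - \tilde{A}_t)\tilde{x}_t$, with the same initial condition $x_0 = \tilde{x}_0$. The first step is to write each modified operator as $\tilde{A}_t = A_t + \gamma B_t$, where $\|B_t\|$ is bounded independently of $\gamma$:
\begin{itemize}
\item For FVP this is exact, with $B_t = zz^{\top}$ and $\|B_t\| = 1$.
\item For FSP, with $\gamma_{t,j}=\gamma$, we have $B_t = \sum_{j\in J_t} v_j(t)v_j(t)^{\top}$. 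This is an orthogonal projector, so $\|B_t\|\le 1$.
\item For FSF we use linearity of the Chebyshev approximation in the target function. The filter is $\Phi_t(\lambda) = \lambda + \gamma \sum_j \exp(-(\lambda-\lambda_j)^2/2\sigma^2)$, so $P_t(\tsheaflap) = Q_t(\tsheaflap) + \gamma R_t(\tsheaflap)$. Here $Q_t$ approximates the identity and $R_t$ approximates the bounded sum of Gaussians, whose coefficients do not depend on $\gamma$. We take $Q_t$ to reproduce $\lambda$ exactly, which holds for degree $K\ge 1$ after undoing the rescaling. Then $B_t = R_t(A_t)$, and $\|B_t\| \le \sup_{[0,\lambda_{max,t}]}|R_t| \le C_R$, uniformly in $t$ because the Chebyshev coefficients of the Gaussian sum are bounded.
\end{itemize}

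Next we subtract the two recursions:
\begin{equation*}
x_{t+1} - \tilde{x}_{t+1} = (I - A_t)(x_t - \tilde{x}_t) + \gamma B_t \tilde{x}_t .
\end{equation*}
Taking norms gives
\begin{equation*}
e_{t+1} \le \|I - A_t\|\, e_t + \gamma \|B_t\|\,\|\tilde{x}_t\|.
\end{equation*}
The normalized sheaf Laplacian has spectrum in $[0,2]$, so $\|I - A_t\|\le 1$. With $e_0 = 0$, unrolling yields
\begin{equation*}
e_t \le \gamma \sum_{s<t} \|B_s\|\,\|\tilde{x}_s\| \le \gamma\, C\, t \max_{s<t}\|\tilde{x}_s\|.
\end{equation*}
For a fixed number of layers $t$ (the NSD depth), this is $O(\gamma)$, provided $\|\tilde{x}_s\|$ is bounded uniformly in $\gamma$.

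That uniform bound is the main obstacle. $\|I - \tilde{A}_s\|$ can exceed $1$ when $\gamma$ pushes eigenvalues above $2$, which happens for large $\gamma$. I would handle this in two ways:
\begin{itemize}
\item For $\gamma$ in a bounded range, say $\gamma\le\gamma_0$, we have $\|I-\tilde{A}_s\|\le 1+\gamma_0 C$. This gives $\|\tilde{x}_s\|\le (1+\gamma_0C)^s\|x_0\|$, a constant for fixed depth, and the $O(\gamma)$ statement is understood as $\gamma\to 0$ with depth fixed.
\item Alternatively, we invoke the stability underlying the previous proposition on Euler discretization, which needs the step-size condition $\lambda_{max}(\tilde{A}_s)\le 2$. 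That condition gives $\|I-\tilde{A}_s\|\le 1$, so $\|\tilde{x}_s\|\le\|x_0\|$. In that case the constant becomes $C\,t\,\|x_0\|$.
\end{itemize}

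If the nonlinear NSD update \eqref{NSD} is intended, the same argument applies. We insert the Lipschitz constant of $\sigma$ and the norms of $W_1^t, W_2^t$ into the recursion, which changes only the constants.
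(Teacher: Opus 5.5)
Your proof is correct. It follows a unified route that differs from the paper's mainly in how the perturbation is split.

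For FVP and FSP, your unrolled recursion is exactly the discrete variation-of-constants formula the paper uses, $\tilde{x}_t = x_t - \gamma\sum_{s<t}\mathcal{D}(t,s+1)B_s\tilde{x}_s$ with $\|\mathcal{D}(t,s+1)\|\le 1$. The only difference is that the paper keeps $\|B_s\tilde{x}_s\|$ rather than $\|B_s\|\,\|\tilde{x}_s\|$. That is $|\langle z,\tilde{x}_s\rangle|$ for FVP, and at most $\sum_{j\in J_s}|\langle v_j(s),\tilde{x}_s\rangle|$ for FSP. This ties the deviation to the fairness metric along the perturbed orbit, and you could recover it simply by not splitting the norm.

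For FSF, the paper splits the other way, adding and subtracting $\tilde{\mathcal{D}}_t x_t$. The error is then propagated by the modified operator, under the hypothesis $\|I-P_t(L(t))\|\le\rho\le 1$. The perturbation acts on the original orbit, which is bounded by $\|x_0\|$ for free. This gives $\gamma\|x_0\|\sum_{j<t}\rho^j$, and a depth-independent bound when $\rho<1$.

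Your splitting gets the contraction for free but must control $\|\tilde{x}_s\|$, and you correctly flag this as the real issue. The paper's FVP/FSP bounds yield $O(\gamma)$ only if the sums on their right-hand sides are bounded uniformly in $\gamma$, which the paper leaves implicit. Your small-$\gamma$, fixed-depth argument settles this with no extra hypothesis, at the price of a constant exponential in depth. Your second option is precisely the paper's FSF hypothesis with $\rho=1$. Note that for FVP it fails for every $\gamma>0$ whenever $\lambda_{\max}(A_t)=2$ with a top eigenvector not orthogonal to $z$.

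Finally, your FSF step is more rigorous than the paper's. The paper asserts $\|P_t(L(t))-L(t)\|\le\gamma$ from the Gaussian peak heights, ignoring the Chebyshev approximation error. Your linearity argument gives the honest bound $\gamma C_R$, with $C_R$ depending on $K$ and $|J_t|$ but not on $\gamma$.
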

A more detailed discussion of the above results, as well as full proofs for all the claims, can be found in Appendix \ref{app:Theory}.


\begin{figure*}[t]
    \centering
    \includegraphics[width=0.29\linewidth]{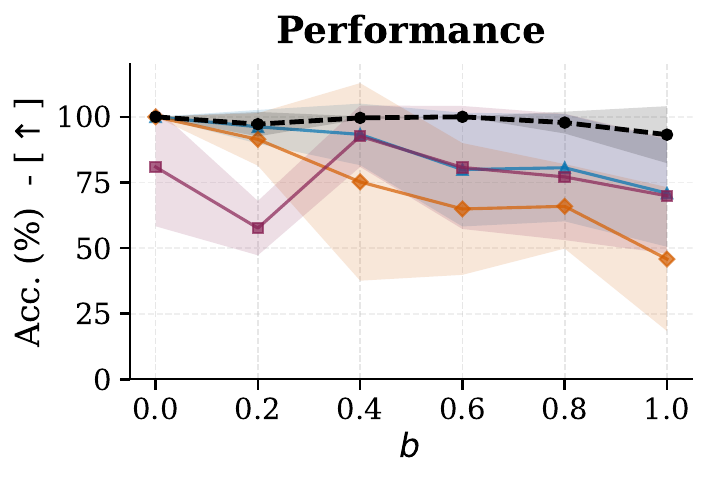}
    \includegraphics[width=0.29\linewidth]{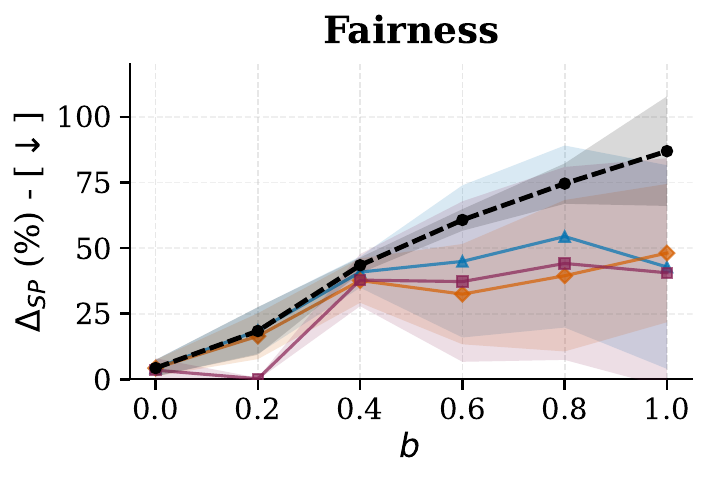}
    \includegraphics[width=0.38\linewidth]{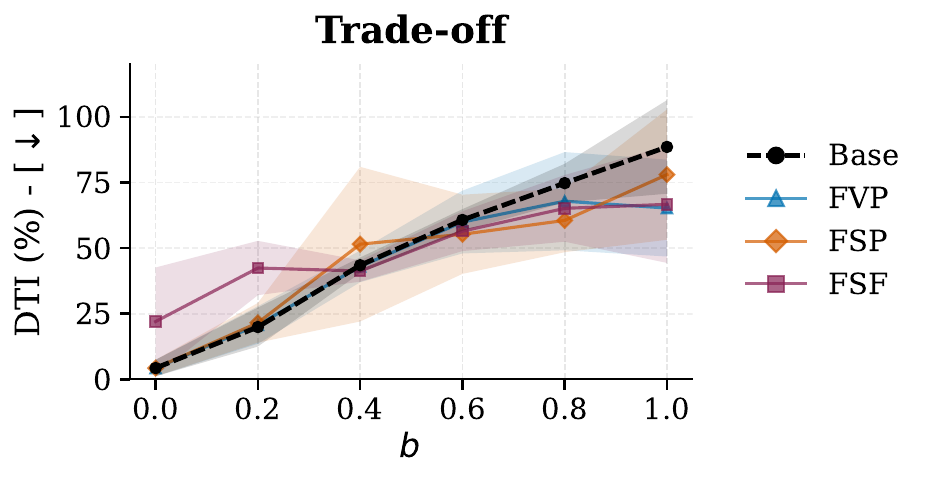}
    \caption{\textbf{Synthetic experiment.} $b$ controls for bias on the data, with $b=0$ representing no bias in the graph and $b=1$ configuring a high-bias setting in which it is difficult to make correct predictions without compromising statistical parity. As bias increases, our models drop performance while maintaining greater fairness, dominating the baseline in the utility-fairness trade-off in the high-bias regime.}
    \label{fig:synthetic}
\end{figure*}

\begin{figure*}[t]
    \centering
    \includegraphics[width=0.3\linewidth]{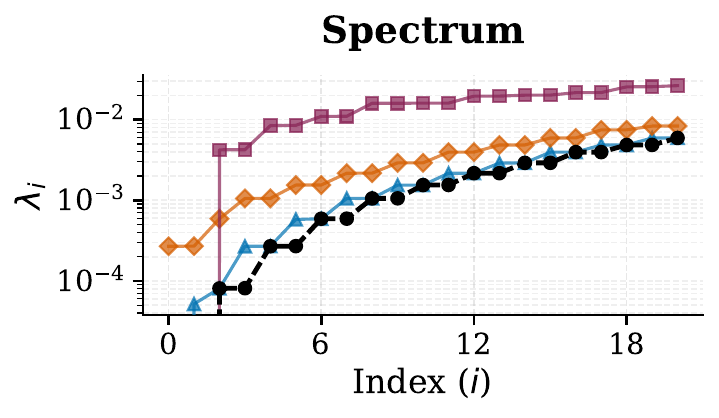}
    \includegraphics[width=0.3\linewidth]{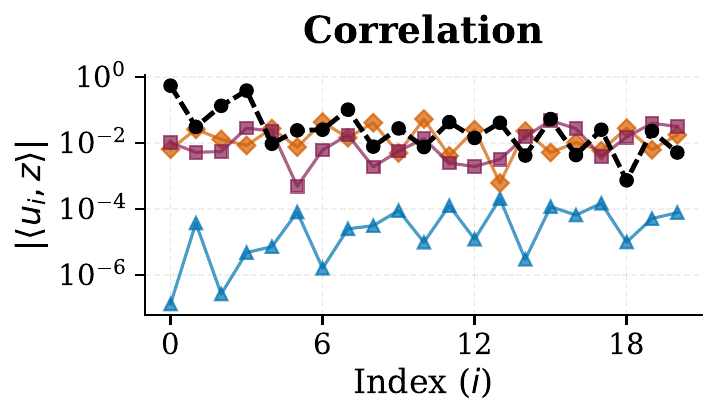}
    \includegraphics[width=0.36\linewidth]{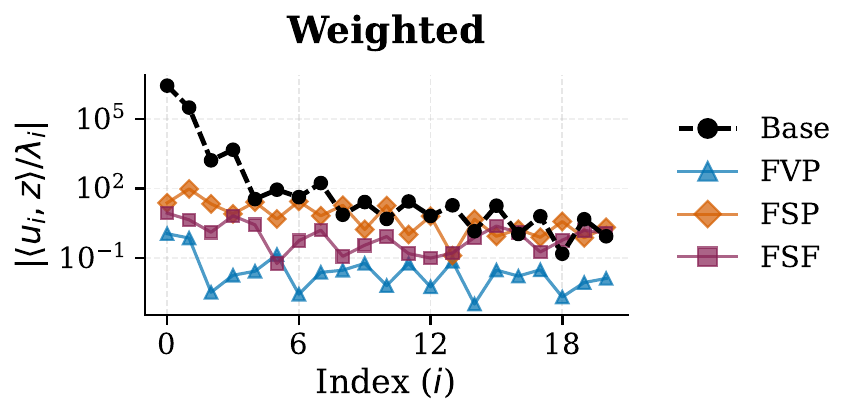}
    \caption{\textbf{First eigenvalues and correlations with the sensitive vector, ordered from smallest to largest eigenvalue.} Smaller eigenvalues dissipate slowly, so low-frequency eigenvectors highly correlated with the sensitive attribute induce long-term bias. The quotient of the correlation divided by the eigenvalues provides a scale on bias dissipation, which our models consistently reduce.}
    \label{fig:spectrum}
\end{figure*}

\begin{figure}[t]
    \centering
    \includegraphics[width=\linewidth]{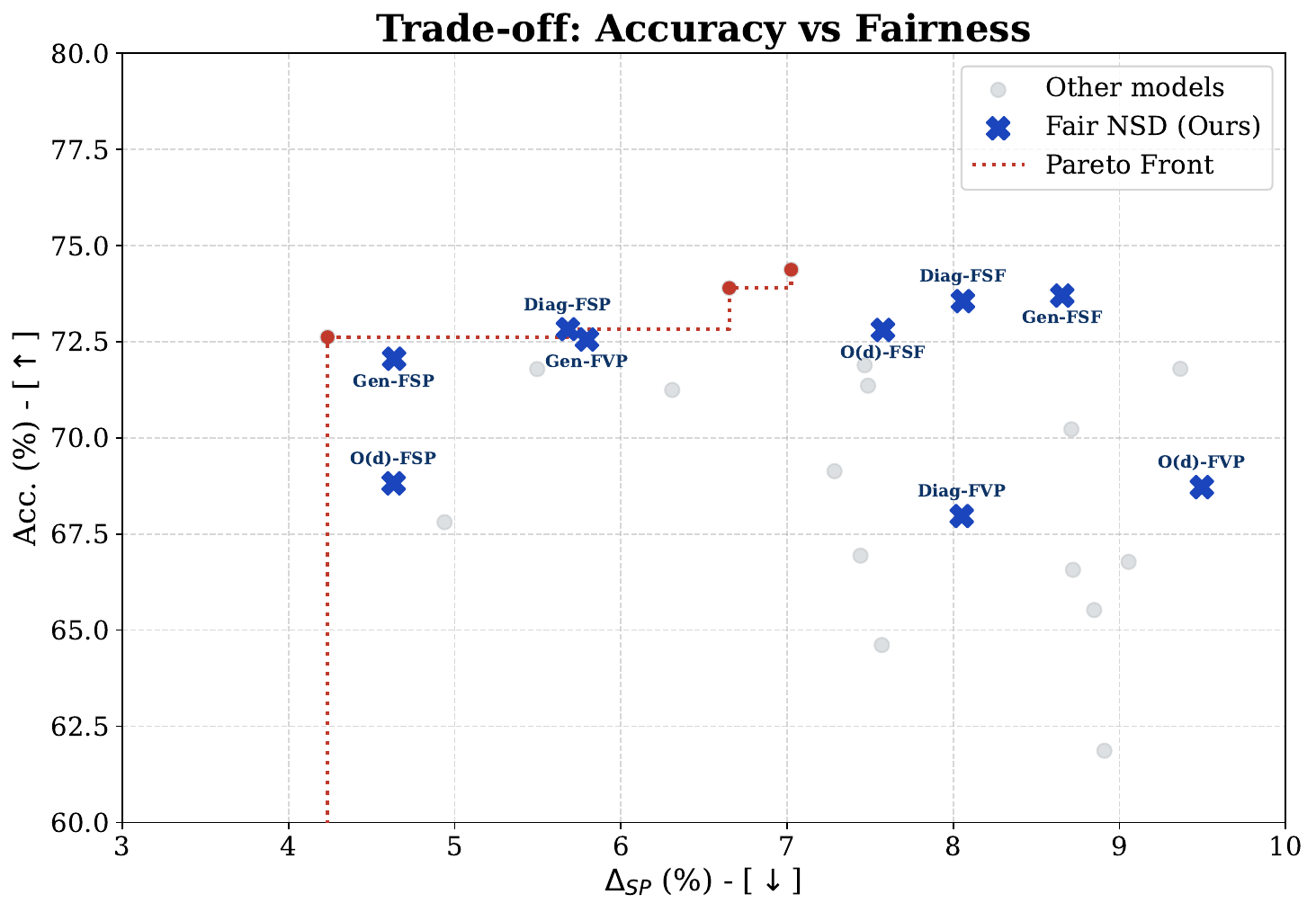}
    \caption{\textbf{Pareto efficient frontier of the methods.} Our models are highlighted in blue, and the Pareto front in red. Our models are close to the frontier, with diagonal models mostly dominating other methods and the general parametrization close behind.}
    \label{fig:pareto}
\end{figure}

\begin{table*}[t]
\centering
\caption{\textbf{Aggregated results for all datasets}. We use accuracy (Acc.) to measure performance, statistical parity ($\Delta_{SP}$) to quantify fairness, and the distance between these two metrics and the ideal point (DTI), $\text{Acc.}=1$, $\Delta_{SP}=0$, to establish the trade-off. The up-arrow ($\uparrow$) indicates that the metric is better the higher it is, and similarly for the down-arrow ($\downarrow$).  The three best results are highlighted in red, blue, and green, respectively.}
\resizebox{0.59\textwidth}{!}{
\begin{tabular}{llccc}
\toprule
& {} & \multicolumn{3}{c}{Metrics} \\
\cmidrule(lr){3-5}
Family & Model & Acc. ($\uparrow$) & $\Delta_{SP}$ ($\downarrow$) & DTI ($\downarrow$) \\
\midrule
\multirow{9}{*}{Fair NSD (Ours)} & Diag-FVP & $66.91 \, {\scriptscriptstyle \pm 4.92}$ & \third{$5.10 \, {\scriptscriptstyle \pm 8.83}$} & $34.10 \, {\scriptscriptstyle \pm 7.04}$ \\
 & O(d)-FVP & $59.40 \, {\scriptscriptstyle \pm 16.43}$ & $10.99 \, {\scriptscriptstyle \pm 9.25}$ & $43.16 \, {\scriptscriptstyle \pm 15.43}$ \\
 & Gen-FVP & $67.75 \, {\scriptscriptstyle \pm 4.76}$ & $17.10 \, {\scriptscriptstyle \pm 21.58}$ & $39.34 \, {\scriptscriptstyle \pm 14.82}$ \\
 & Diag-FSP & $68.20 \, {\scriptscriptstyle \pm 3.81}$ & $7.44 \, {\scriptscriptstyle \pm 10.55}$ & $33.74 \, {\scriptscriptstyle \pm 6.04}$ \\
 & O(d)-FSP & $62.58 \, {\scriptscriptstyle \pm 13.55}$ & $5.54 \, {\scriptscriptstyle \pm 7.57}$ & $38.48 \, {\scriptscriptstyle \pm 13.36}$ \\
 & Gen-FSP & $67.53 \, {\scriptscriptstyle \pm 3.51}$ & \second{$4.69 \, {\scriptscriptstyle \pm 4.17}$} & $33.00 \, {\scriptscriptstyle \pm 3.75}$ \\
 & Diag-FSF & \third{$70.18 \, {\scriptscriptstyle \pm 3.62}$} & $5.99 \, {\scriptscriptstyle \pm 5.29}$ & \third{$30.68 \, {\scriptscriptstyle \pm 4.57}$} \\
 & O(d)-FSF & $63.39 \, {\scriptscriptstyle \pm 9.61}$ & $9.07 \, {\scriptscriptstyle \pm 14.76}$ & $39.69 \, {\scriptscriptstyle \pm 10.90}$ \\
 & Gen-FSF & $67.69 \, {\scriptscriptstyle \pm 1.37}$ & $9.16 \, {\scriptscriptstyle \pm 4.61}$ & $33.78 \, {\scriptscriptstyle \pm 2.54}$ \\
\midrule
\multirow{3}{*}{NSD} & Diag-NSD & $58.48 \, {\scriptscriptstyle \pm 16.95}$ & $11.42 \, {\scriptscriptstyle \pm 15.18}$ & $45.30 \, {\scriptscriptstyle \pm 16.43}$ \\
 & O(d)-NSD & $62.26 \, {\scriptscriptstyle \pm 13.52}$ & $12.75 \, {\scriptscriptstyle \pm 20.48}$ & $43.09 \, {\scriptscriptstyle \pm 16.30}$ \\
 & Gen-NSD & $66.10 \, {\scriptscriptstyle \pm 5.80}$ & $10.41 \, {\scriptscriptstyle \pm 14.21}$ & $37.15 \, {\scriptscriptstyle \pm 9.11}$ \\
\midrule
\multirow{4}{*}{GNN} & GCN & $66.36 \, {\scriptscriptstyle \pm 3.00}$ & $8.57 \, {\scriptscriptstyle \pm 12.56}$ & $36.21 \, {\scriptscriptstyle \pm 5.80}$ \\
 & GAT & $57.79 \, {\scriptscriptstyle \pm 16.53}$ & $11.99 \, {\scriptscriptstyle \pm 18.56}$ & $46.72 \, {\scriptscriptstyle \pm 17.23}$ \\
 & GraphSAGE & $57.78 \, {\scriptscriptstyle \pm 16.40}$ & $12.20 \, {\scriptscriptstyle \pm 20.66}$ & $47.29 \, {\scriptscriptstyle \pm 17.71}$ \\
 & H2GCN & $67.04 \, {\scriptscriptstyle \pm 1.19}$ & $7.50 \, {\scriptscriptstyle \pm 8.59}$ & $34.53 \, {\scriptscriptstyle \pm 3.63}$ \\
\midrule
\multirow{5}{*}{Fair graph} & Fair Drop & $68.31 \, {\scriptscriptstyle \pm 1.96}$ & $9.07 \, {\scriptscriptstyle \pm 12.57}$ & $34.41 \, {\scriptscriptstyle \pm 6.31}$ \\
 & Undersampling & $66.75 \, {\scriptscriptstyle \pm 2.73}$ & \first{$3.73 \, {\scriptscriptstyle \pm 3.75}$} & $33.61 \, {\scriptscriptstyle \pm 2.91}$ \\
 & FairGNN & $67.96 \, {\scriptscriptstyle \pm 2.16}$ & $5.20 \, {\scriptscriptstyle \pm 5.28}$ & $32.77 \, {\scriptscriptstyle \pm 2.62}$ \\
 & FairSIN & $66.48 \, {\scriptscriptstyle \pm 4.15}$ & $8.52 \, {\scriptscriptstyle \pm 13.71}$ & $36.22 \, {\scriptscriptstyle \pm 7.79}$ \\
 & NIFTY & $59.94 \, {\scriptscriptstyle \pm 14.33}$ & $10.09 \, {\scriptscriptstyle \pm 16.80}$ & $43.11 \, {\scriptscriptstyle \pm 17.26}$ \\
\midrule
\multirow{4}{*}{Tabular} & Log. Reg. & $66.26 \, {\scriptscriptstyle \pm 3.71}$ & $11.51 \, {\scriptscriptstyle \pm 12.74}$ & $36.78 \, {\scriptscriptstyle \pm 8.58}$ \\
 & XGBoost & $70.03 \, {\scriptscriptstyle \pm 5.04}$ & $9.24 \, {\scriptscriptstyle \pm 7.09}$ & $31.85 \, {\scriptscriptstyle \pm 6.12}$ \\
 & LightGBM & $69.35 \, {\scriptscriptstyle \pm 3.81}$ & $8.63 \, {\scriptscriptstyle \pm 8.68}$ & $32.52 \, {\scriptscriptstyle \pm 5.91}$ \\
 & MLP & $67.44 \, {\scriptscriptstyle \pm 5.57}$ & $8.47 \, {\scriptscriptstyle \pm 9.93}$ & $34.99 \, {\scriptscriptstyle \pm 3.74}$ \\
\midrule
\multirow{3}{*}{Fair tabular} & Reweighting & \second{$70.30 \, {\scriptscriptstyle \pm 4.47}$} & $5.20 \, {\scriptscriptstyle \pm 5.06}$ & \first{$30.52 \, {\scriptscriptstyle \pm 4.20}$} \\
 & adv-MLP & $67.30 \, {\scriptscriptstyle \pm 2.21}$ & $12.63 \, {\scriptscriptstyle \pm 10.02}$ & $36.15 \, {\scriptscriptstyle \pm 2.75}$ \\
 & Rej. Opt. & \first{$71.44 \, {\scriptscriptstyle \pm 3.14}$} & $8.85 \, {\scriptscriptstyle \pm 6.76}$ & \second{$30.59 \, {\scriptscriptstyle \pm 1.91}$} \\
\bottomrule
\end{tabular}}
\label{tab:results}
\end{table*}

\section{Experiments}
\label{Experiments}
This Section presents the empirical assessment of the proposed methods. Further details on the datasets used can be found in Appendix~\ref{app:Datasets}, and for a more thorough explanation of the training procedure, see Appendix~\ref{app:TrainingDetails}.
\paragraph{Synthetic experiment.} We consider a simple setup given by a Stochastic Block Model~\cite{SBM} with two communities where the label is given by the community each node belongs to, the features are generated as random Gaussian noise, and the sensitive attribute is generated as a Bernoulli variable. A single parameter $b$ in $[0,1]$ controls the bias present in the data, simultaneously modulating the inter-group edge probability, the correlation between the sensitive attribute and the underlying community, and the pairwise correlations between the features, sensitive variable and label. In summary, when $b\rightarrow0$, the inter-group edge probability is considerably larger than the intra-group edge probability, and the sensitive attribute is assigned at random, thus facilitating unbiased community prediction. On the other hand, when $b\rightarrow1$, the inter- and intra-group probabilities almost equal, and there is a strong correlation between the sensitive attribute and the underlying community, thus making fair prediction harder. The results are displayed in Figure ~\ref{fig:synthetic}, which shows our methods barely modify the underlying baseline in the low $b$ regime, but when $b$ is close to one they compromise accuracy to attain better statistical parity, achieving results with lower distance to the ideal point, thus achieving a better balance in the accuracy-fairness trade-off than the baseline.
\paragraph{Spectrum.} We apply our models using the general parametrization in the German dataset to build intuition. Figure ~\ref{fig:spectrum} shows the modifications our methods introduce to the smallest eigenvalues of the Sheaf Laplacian and the correlation of the corresponding eigenvectors. While FVP barely modifies the spectrum, it greatly reduces correlations, thus reducing bias in the resulting solution. On the other hand, FSF and FSP have a greater effect on the spectrum, while leaving the correlations relatively intact. Nonetheless, the final effect on the correlations, weighted by the inverse of the eigenvalues (which gives the magnitude of the overall effect on bias), is similar across all three methods and decreases significantly relative to the base algorithm.
\paragraph{Real-world experiments.} We test our models on a series of real-world datasets standard in the literature of fair Machine Learning over graphs \cite{POKEC, FairGNN, GERMAN, COMPAS2}, more details in Appendix~\ref{app:Datasets}. We perform a thorough comparison with many models present in the literature. Our main baseline is the original NSD \cite{nsd} using all three parametrizations: diagonal (Diag), orthogonal (O(d)) and general (Gen). However, we also compare our models against many standard graph models like Graph Convolutional Networks (GCN) \cite{GCN}, Graph Attention Networks (GAT) \cite{GAT}, GraphSAGE \cite{SAGE} and H2GCN \cite{H2GCN}, and against FairDrop \cite{FairDROP}, undersampling \cite{POKECSMALL}, FairGNN \cite{FairGNN}, FairSIN \cite{FairSIN}, and NIFTY \cite{Nifty} in terms of fairness-aware graph models. We also compare our models against standard tabular benchmarks, namely, logistic regression (Log. Reg.) \cite{lr}, XGBoost \cite{XGBoost}, LightGBM \cite{LGBM}, and a multi-layer perceptron (MLP) \cite{MLP}, and against reweighting ~\cite{Reweighting}, adversarial debiasing (adv-MLP) \cite{ADVERSARIAL}, and the reject option classifier (Rej. Opt.) \cite{RejectOption} in terms of bias mitigation methods. All models are trained for $100$ epochs on a train-val-test split of $50\%/25\%/25\%$ with label reweighting, early stop, and a tolerance of $40$ epochs. After the initial search, a $5-$fold cross-validation is performed to obtain an estimate of the error, followed by a final run on the full train-val split for $400$ epochs. More details on the training procedure can be found in Appendix~\ref{app:TrainingDetails}.

The aggregated results can be found in Table ~\ref{tab:results}. Our models achieve competitive results, placing in the top positions across most metrics. More importantly, our models achieve the top three spots for statistical parity, with Diag-FVP and Diag-FSP doing so at a cost in accuracy between $2\%$ and $3\%$, while Diag-FSF achieves the third best mean accuracy with a respectable statistical parity of $5.64\%$, awarding it the best position in the ranking in terms of DTI, thus showing the efficacy of the approach and how it can achieve better results in the utility-fairness trade-off. This can also be seen in Figure ~\ref{fig:pareto}, highlighting the fact that many of our methods lie close to the Pareto front of statistical parity and accuracy. In general, the O(d) bundle parametrization is the worst-performing among our proposed methods, clearly dominated by other approaches, whereas the diagonal parametrization with FVP and FSF lies exactly on the Pareto front, making them the superior approach among all methods. The general parametrization yields two models close to the efficient frontier, though the FVP performs particularly poorly on fairness. While the error bars prevent us from claiming that our models achieve state-of-the-art performance in fair Machine Learning, they are competitive with both tabular and graph models. A positive side effect of the proposed methods is improved accuracy relative to the standard NSD. However, this could be explained by noting that our methods are prone to increasing the spectral gap of the sheaf Laplacian (see Figure ~\ref{fig:spectrum}), which is known to mitigate oversquashing and improve the quality of message-passing \cite{FOSR}.

Finally, due to space constraints, we refer readers interested in further results to Appendix~\ref{app:Results}. First, we report a broader set of performance and fairness metrics,  where our method remains competitive. Moreover, we also show results for each individual dataset, showcasing that our method remains robust.

\section{Conclusion}
\label{Conclusion}
\paragraph{Conclusions.} In this work, we frame algorithmic fairness in graph models through the lens of dynamical systems and introduce a geometric notion of bias tied to diffusion dynamics. We show that the Laplacian operator, which underlies a broad class of graph models, can encode bias through low-frequency components aligned with a sensitive direction, leading to persistent unfairness under standard diffusion. To address this, we propose three modifications of the Neural Sheaf Diffusion (NSD) Laplacian and develop theoretical guarantees in a time-dependent setting that ensure the decay of bias along diffusion trajectories. Empirically, we validate the approach on controlled synthetic graphs and standard fairness benchmarks, achieving competitive performance even with tabular methods, and enabling a tunable trade-off between predictive utility and fairness. Finally, by analyzing the resulting spectral structure, we provide interpretable diagnostics that clarify how fairness arises from operator-level modifications within the NSD framework.
\paragraph{Limitations and future work.} The main limitation of this work, which serves as a direction for further research, is its emphasis on the Statistical Parity Metric, suggesting that future work should integrate other fairness metrics compatible with the framework, such as kNN consistency or Equal Odds. However, using the latter requires knowledge of the class labels, just as our current framework does for the sensitive attribute, which is another weakness. To address this limitation, future research should introduce uncertainty into the Geometric Fairness Metric, for example, by replacing the indicator functions for demographic attributes with the probabilities provided by a binary classifier.

\section*{Impact statement}

This paper addresses current limitations in graph models to achieve fair results. In particular, we present a flexible methodology that encodes fairness metrics as dot products in an inner product space. This perspective motivates a bias-mitigation algorithm in modifications to the Laplacian matrix, central to many GNN architectures. In particular, applying our ideas to the Neural Sheaf Diffusion framework results in a fair model capable of handling heterophilic data and oversmoothing. The fairness-utility trade-off is largely controlled by a single parameter, which can be tuned to meet the needs of Fair Machine Learning practitioners.

The proposed methods are backed by solid theoretical insights and achieve competitive results in an empirical assessment, enhancing the fairness metrics while preserving accuracy, providing an additional tool to handle bias in Machine Learning. Potential misuse of this model includes the common dangers of fairness processors. In particular, the text considers the statistical parity metric, which is known to be inappropriate in settings like organizational justice and its misuse, either by malice or accident, can create situations like the so-called \emph{glass cliff}~\cite{FairMLBook}. Furthermore, its use requires knowledge of the sensitive attribute, which might be protected and unusable in some contexts. While we believe the proposed models can be applied to achieve a better compromise in fairness-utility, it is also important to underscore the importance of responsible guidelines and safeguards to prevent their misuse.

\bibliographystyle{plainnat} 

\bibliography{bibliography}

@inproceedings{RULE,
  title={Certifying and removing disparate impact},
  author={Feldman, Michael and Friedler, Sorelle A and Moeller, John and Scheidegger, Carlos and Venkatasubramanian, Suresh},
  booktitle={proceedings of the 21th ACM SIGKDD international conference on knowledge discovery and data mining},
  pages={259--268},
  year={2015}
}

@inproceedings{OPTUNA,
  title={Optuna: A next-generation hyperparameter optimization framework},
  author={Akiba, Takuya and Sano, Shotaro and Yanase, Toshihiko and Ohta, Takeru and Koyama, Masanori},
  booktitle={Proceedings of the 25th ACM SIGKDD international conference on knowledge discovery \& data mining},
  pages={2623--2631},
  year={2019}
}

@inproceedings{INCOMPATIBLE,
  title={Fairness metrics: A comparative analysis},
  author={Garg, Pratyush and Villasenor, John and Foggo, Virginia},
  booktitle={2020 IEEE international conference on big data (Big Data)},
  pages={3662--3666},
  year={2020},
  organization={IEEE}
}

@inproceedings{PROXY2,
  title={Fairness constraints: Mechanisms for fair classification},
  author={Zafar, Muhammad Bilal and Valera, Isabel and Rogriguez, Manuel Gomez and Gummadi, Krishna P},
  booktitle={Artificial intelligence and statistics},
  pages={962--970},
  year={2017},
  organization={PMLR}
}

@inproceedings{PROXY1,
  title={Controlling attribute effect in linear regression},
  author={Calders, Toon and Karim, Asim and Kamiran, Faisal and Ali, Wasif and Zhang, Xiangliang},
  booktitle={2013 IEEE 13th international conference on data mining},
  pages={71--80},
  year={2013},
  organization={IEEE}
}

@inproceedings{CONSISTENCY,
  title={Beyond Consistency: Nuanced Metrics for Individual Fairness},
  author={Waller, Madeleine and Rodrigues, Odinaldo and Cocarascu, Oana},
  booktitle={Proceedings of the 2025 ACM Conference on Fairness, Accountability, and Transparency},
  pages={2087--2097},
  year={2025}
}

@article{POKECSMALL,
  title={Benchmarking Fairness-aware Graph Neural Networks in Knowledge Graphs},
  author={Sasaki, Yuya},
  journal={arXiv preprint arXiv:2510.18473},
  year={2025}
}

@inproceedings{POKEC,
  title={Data analysis in public social networks},
  author={Takac, Lubos and Zabovsky, Michal},
  booktitle={International scientific conference and international workshop present day trends of innovations},
  volume={1},
  number={6},
  year={2012}
}

@inproceedings{AGE,
  title={Classifying without discriminating},
  author={Kamiran, Faisal and Calders, Toon},
  booktitle={2009 2nd international conference on computer, control and communication},
  pages={1--6},
  year={2009},
  organization={IEEE}
}

@inproceedings{ADVERSARIAL,
  title={Mitigating unwanted biases with adversarial learning},
  author={Zhang, Brian Hu and Lemoine, Blake and Mitchell, Margaret},
  booktitle={Proceedings of the 2018 AAAI/ACM Conference on AI, Ethics, and Society},
  pages={335--340},
  year={2018}
}

@article{GERMAN,
  title={Fairness in credit scoring: Assessment, implementation and profit implications},
  author={Kozodoi, Nikita and Jacob, Johannes and Lessmann, Stefan},
  journal={European Journal of Operational Research},
  volume={297},
  number={3},
  pages={1083--1094},
  year={2022},
  publisher={Elsevier}
}

@inproceedings{COMPAS2,
  title={A case-based-reasoning analysis of the COMPAS Dataset},
  author={van Woerkom, Wijnand and Grossi, Davide and Prakken, Henry and Verheij, Bart},
  booktitle={37th Annual Conference on Legal Knowledge and Information Systems, JURIX 2024},
  pages={180--190},
  year={2024},
  organization={IOS Press}
}

@misc{COMPAS,
    title = {Machine Bias},
    author={ProPublica},
    howpublished = {\url{http://web.archive.org/web/20080207010024/http://www.808multimedia.com/winnt/kernel.htm}},
    year={2016},
    note = {Accessed: 2025-11-11}
}

@book{Cheby,
  title={Chebyshev polynomials},
  author={Mason, John C and Handscomb, David C},
  year={2002},
  publisher={Chapman and Hall/CRC}
}

@book{BigOEigen,
  title={Numerical methods for large eigenvalue problems: revised edition},
  author={Saad, Yousef},
  year={2011},
  publisher={SIAM}
}

@book{Variation,
  title={An introduction to difference equations},
  author={Elaydi, Saber},
  year={2005},
  publisher={Springer}
}

@article{Barbalat,
  title={Variations on Barb{\u{a}}lat's lemma},
  author={Farkas, B{\'a}lint and Wegner, Sven-Ake},
  journal={The American Mathematical Monthly},
  volume={123},
  number={8},
  pages={825--830},
  year={2016},
  publisher={Taylor \& Francis}
}

@misc{FairSheafDiffusion,
      title={On the use of graph models to achieve individual and group fairness}, 
      author={Arturo Pérez-Peralta and Sandra Benítez-Peña and Rosa E. Lillo},
      year={2026},
      eprint={2601.08784},
      archivePrefix={arXiv},
      primaryClass={stat.ML},
      url={https://arxiv.org/abs/2601.08784}, 
}

@inproceedings{Start3,
author = {Liu, Zheyuan and Zhang, Chunhui and Tian, Yijun and Zhang, Erchi and Huang, Chao and Ye, Yanfang and Zhang, Chuxu},
title = {Fair Graph Representation Learning via Diverse Mixture-of-Experts},
year = {2023},
isbn = {9781450394161},
publisher = {Association for Computing Machinery},
address = {New York, NY, USA},
url = {https://doi.org/10.1145/3543507.3583207},
doi = {10.1145/3543507.3583207},
booktitle = {Proceedings of the ACM Web Conference 2023},
pages = {28–38},
numpages = {11},
location = {Austin, TX, USA},
series = {WWW '23}
}

@inproceedings{Start2,
  author={Kose, O. Deniz and Shen, Yanning},
  booktitle={2022 30th European Signal Processing Conference (EUSIPCO)}, 
  title={Fairness-aware Adaptive Network Link Prediction}, 
  year={2022},
  volume={},
  number={},
  pages={677-681},
  doi={10.23919/EUSIPCO55093.2022.9909546}}

@inproceedings{Start1,
  title={Eqgnn: Equalized node opportunity in graphs},
  author={Singer, Uriel and Radinsky, Kira},
  booktitle={Proceedings of the AAAI conference on artificial intelligence},
  volume={36},
  number={8},
  pages={8333--8341},
  year={2022}
}

@article{Survey2,
  title={Fairness-aware graph neural networks: A survey},
  author={Chen, April and Rossi, Ryan A and Park, Namyong and Trivedi, Puja and Wang, Yu and Yu, Tong and Kim, Sungchul and Dernoncourt, Franck and Ahmed, Nesreen K},
  journal={ACM Transactions on Knowledge Discovery from Data},
  volume={18},
  number={6},
  pages={1--23},
  year={2024},
  publisher={ACM New York, NY}
}

@article{Survey1,
  title={Fairness in graph mining: A survey},
  author={Dong, Yushun and Ma, Jing and Wang, Song and Chen, Chen and Li, Jundong},
  journal={IEEE Transactions on Knowledge and Data Engineering},
  volume={35},
  number={10},
  pages={10583--10602},
  year={2023},
  publisher={IEEE}
}

@article{FairEDIT,
  title={Fairedit: Preserving fairness in graph neural networks through greedy graph editing},
  author={Loveland, Donald and Pan, Jiayi and Bhathena, Aaresh Farrokh and Lu, Yiyang},
  journal={arXiv preprint arXiv:2201.03681},
  year={2022}
}

@inproceedings{Nifty,
  title={Towards a unified framework for fair and stable graph representation learning},
  author={Agarwal, Chirag and Lakkaraju, Himabindu and Zitnik, Marinka},
  booktitle={Uncertainty in artificial intelligence},
  pages={2114--2124},
  year={2021},
  organization={PMLR}
}

@inproceedings{FairSIN,
  title={Fairsin: Achieving fairness in graph neural networks through sensitive information neutralization},
  author={Yang, Cheng and Liu, Jixi and Yan, Yunhe and Shi, Chuan},
  booktitle={Proceedings of the AAAI Conference on Artificial Intelligence},
  volume={38},
  number={8},
  pages={9241--9249},
  year={2024}
}

@inproceedings{FairGNN,
  title={Say no to the discrimination: Learning fair graph neural networks with limited sensitive attribute information},
  author={Dai, Enyan and Wang, Suhang},
  booktitle={Proceedings of the 14th ACM international conference on web search and data mining},
  pages={680--688},
  year={2021}
}

@inproceedings{FairSpecFilter,
  title={Fairness-aware graph filter design},
  author={Kose, O Deniz and Shen, Yanning and Mateos, Gonzalo},
  booktitle={2023 57th Asilomar Conference on Signals, Systems, and Computers},
  pages={330--334},
  year={2023},
  organization={IEEE}
}

@inproceedings{FairSpecRewire,
  title={Filtering as Rewiring for Bias Mitigation on Graphs},
  author={Kose, O Deniz and Mateos, Gonzalo and Shen, Yanning},
  booktitle={2024 IEEE 13rd Sensor Array and Multichannel Signal Processing Workshop (SAM)},
  pages={1--5},
  year={2024},
  organization={IEEE}
}

@article{FairDROP,
  title={Fairdrop: Biased edge dropout for enhancing fairness in graph representation learning},
  author={Spinelli, Indro and Scardapane, Simone and Hussain, Amir and Uncini, Aurelio},
  journal={IEEE Transactions on Artificial Intelligence},
  volume={3},
  number={3},
  pages={344--354},
  year={2021},
  publisher={IEEE}
}

@inproceedings{FOSR,
  title={FoSR: First-order spectral rewiring for addressing oversquashing in GNNs},
  author={Karhadkar, Kedar and Banerjee, Pradeep and Montufar, Guido},
  booktitle={International Conference on Learning Representations},
  year={2023}
}

@article{H2GCN,
  title={Beyond homophily in graph neural networks: Current limitations and effective designs},
  author={Zhu, Jiong and Yan, Yujun and Zhao, Lingxiao and Heimann, Mark and Akoglu, Leman and Koutra, Danai},
  journal={Advances in neural information processing systems},
  volume={33},
  pages={7793--7804},
  year={2020}
}

@article{SBM,
  title={Community detection and stochastic block models: recent developments},
  author={Abbe, Emmanuel},
  journal={Journal of Machine Learning Research},
  volume={18},
  number={177},
  pages={1--86},
  year={2018}
}

@inproceedings{FraudDetection,
  title={Pick and choose: a GNN-based imbalanced learning approach for fraud detection},
  author={Liu, Yang and Ao, Xiang and Qin, Zidi and Chi, Jianfeng and Feng, Jinghua and Yang, Hao and He, Qing},
  booktitle={Proceedings of the web conference 2021},
  pages={3168--3177},
  year={2021}
}

@inproceedings{Recommendation,
  title={Graph convolutional neural networks for web-scale recommender systems},
  author={Ying, Rex and He, Ruining and Chen, Kaifeng and Eksombatchai, Pong and Hamilton, William L and Leskovec, Jure},
  booktitle={Proceedings of the 24th ACM SIGKDD international conference on knowledge discovery \& data mining},
  pages={974--983},
  year={2018}
}

@article{Social,
  title={Enhancing social and collaborative learning using a stacked GNN-based community detection},
  author={Ben Yahia, Nesrine},
  journal={Social Network Analysis and Mining},
  volume={14},
  number={1},
  pages={205},
  year={2024},
  publisher={Springer}
}

@article{ParticlePhysics,
  title={Graph neural networks in particle physics},
  author={Shlomi, Jonathan and Battaglia, Peter and Vlimant, Jean-Roch},
  journal={Machine Learning: Science and Technology},
  volume={2},
  number={2},
  pages={021001},
  year={2020},
  publisher={IOP Publishing}
}

@article{DrugDesign,
  title={Graph neural networks for automated de novo drug design},
  author={Xiong, Jiacheng and Xiong, Zhaoping and Chen, Kaixian and Jiang, Hualiang and Zheng, Mingyue},
  journal={Drug discovery today},
  volume={26},
  number={6},
  pages={1382--1393},
  year={2021},
  publisher={Elsevier}
}

@inproceedings{QuantumChemistry,
  title={Neural message passing for quantum chemistry},
  author={Gilmer, Justin and Schoenholz, Samuel S and Riley, Patrick F and Vinyals, Oriol and Dahl, George E},
  booktitle={International conference on machine learning},
  pages={1263--1272},
  year={2017},
  organization={Pmlr}
}

@article{Opinion,
  title={Opinion dynamics on discourse sheaves},
  author={Hansen, Jakob and Ghrist, Robert},
  journal={SIAM Journal on Applied Mathematics},
  volume={81},
  number={5},
  pages={2033--2060},
  year={2021},
  publisher={SIAM}
}

@article{Diffusion,
  title={Adaptive diffusion in graph neural networks},
  author={Zhao, Jialin and Dong, Yuxiao and Ding, Ming and Kharlamov, Evgeny and Tang, Jie},
  journal={Advances in neural information processing systems},
  volume={34},
  pages={23321--23333},
  year={2021}
}

@inproceedings{
GIN,
title={How Powerful are Graph Neural Networks?},
author={Keyulu Xu and Weihua Hu and Jure Leskovec and Stefanie Jegelka},
booktitle={International Conference on Learning Representations},
year={2019},
url={https://openreview.net/forum?id=ryGs6iA5Km},
}

@inproceedings{GCN,
  title={Semi-supervised learning with graph learning-convolutional networks},
  author={Jiang, Bo and Zhang, Ziyan and Lin, Doudou and Tang, Jin and Luo, Bin},
  booktitle={Proceedings of the IEEE/CVF conference on computer vision and pattern recognition},
  pages={11313--11320},
  year={2019}
}

@inproceedings{
GAT,
title={Graph Attention Networks},
author={Petar Veličković and Guillem Cucurull and Arantxa Casanova and Adriana Romero and Pietro Liò and Yoshua Bengio},
booktitle={International Conference on Learning Representations},
year={2018},
url={https://openreview.net/forum?id=rJXMpikCZ},
}

@article{SAGE,
  title={Inductive representation learning on large graphs},
  author={Hamilton, Will and Ying, Zhitao and Leskovec, Jure},
  journal={Advances in neural information processing systems},
  volume={30},
  year={2017}
}

@article{Oversquashing1,
  title={Over-squashing in graph neural networks: A comprehensive survey},
  author={Akansha, Singh},
  journal={Neurocomputing},
  pages={130389},
  year={2025},
  publisher={Elsevier}
}

@inproceedings{Oversquashing2,
  title={Revisiting over-smoothing and over-squashing using ollivier-ricci curvature},
  author={Nguyen, Khang and Hieu, Nong Minh and Nguyen, Vinh Duc and Ho, Nhat and Osher, Stanley and Nguyen, Tan Minh},
  booktitle={International Conference on Machine Learning},
  pages={25956--25979},
  year={2023},
  organization={PMLR}
}

@article{Oversmoothing1,
  title={Not too little, not too much: a theoretical analysis of graph (over) smoothing},
  author={Keriven, Nicolas},
  journal={Advances in Neural Information Processing Systems},
  volume={35},
  pages={2268--2281},
  year={2022}
}

@article{Oversmoothing2,
  title={Another perspective of over-smoothing: Alleviating semantic over-smoothing in deep GNNs},
  author={Li, Jin and Zhang, Qirong and Liu, Wenxi and Chan, Antoni B and Fu, Yang-Geng},
  journal={IEEE Transactions on Neural Networks and Learning Systems},
  volume={36},
  number={4},
  pages={6897--6910},
  year={2024},
  publisher={IEEE}
}

@article{heterophily1,
  title={Revisiting heterophily for graph neural networks},
  author={Luan, Sitao and Hua, Chenqing and Lu, Qincheng and Zhu, Jiaqi and Zhao, Mingde and Zhang, Shuyuan and Chang, Xiao-Wen and Precup, Doina},
  journal={Advances in neural information processing systems},
  volume={35},
  pages={1362--1375},
  year={2022}
}

@inproceedings{heterophily2,
  title={Graph neural networks with heterophily},
  author={Zhu, Jiong and Rossi, Ryan A and Rao, Anup and Mai, Tung and Lipka, Nedim and Ahmed, Nesreen K and Koutra, Danai},
  booktitle={Proceedings of the AAAI conference on artificial intelligence},
  volume={35},
  number={12},
  pages={11168--11176},
  year={2021}
}

@inproceedings{FairnessHeterophily1,
  title={Unveiling the impact of local homophily on gnn fairness: In-depth analysis and new benchmarks},
  author={Loveland, Donald and Koutra, Danai},
  booktitle={Proceedings of the 2025 SIAM International Conference on Data Mining (SDM)},
  pages={608--617},
  year={2025},
  organization={SIAM}
}

@article{FairnessHeterophily2,
  title={On graph neural network fairness in the presence of heterophilous neighborhoods},
  author={Loveland, Donald and Zhu, Jiong and Heimann, Mark and Fish, Ben and Schaub, Michael T and Koutra, Danai},
  journal={arXiv preprint arXiv:2207.04376},
  year={2022}
}

@article{nsd,
  title={Neural sheaf diffusion: A topological perspective on heterophily and oversmoothing in gnns},
  author={Bodnar, Cristian and Di Giovanni, Francesco and Chamberlain, Benjamin and Lio, Pietro and Bronstein, Michael},
  journal={Advances in Neural Information Processing Systems},
  volume={35},
  pages={18527--18541},
  year={2022}
}

@article{polynsd,
  title={Polynomial Neural Sheaf Diffusion: A Spectral Filtering Approach on Cellular Sheaves},
  author={Borgi, Alessio and Silvestri, Fabrizio and Li{\`o}, Pietro},
  journal={arXiv preprint arXiv:2512.00242},
  year={2025}
}

@article{Academic,
  title={Big data's disparate impact},
  author={Barocas, Solon and Selbst, Andrew D},
  journal={Calif. L. Rev.},
  volume={104},
  pages={671},
  year={2016},
  publisher={HeinOnline}
}

@inproceedings{Geometric2,
  title={A geometric solution to fair representations},
  author={He, Yuzi and Burghardt, Keith and Lerman, Kristina},
  booktitle={Proceedings of the AAAI/ACM Conference on AI, Ethics, and Society},
  pages={279--285},
  year={2020}
}

@inproceedings{Geoffair,
  title={A geometric framework for fairness},
  author={Maggio, Alessandro and Giuliani, Luca and Calegari, Roberta and Lombardi, Michele and Milano, Michela and others},
  booktitle={CEUR workshop proceedings},
  volume={3523},
  pages={1--17},
  year={2023},
  organization={CEUR-WS}
}

@article{Finance,
  title={Fairness in credit scoring: Assessment, implementation and profit implications},
  author={Kozodoi, Nikita and Jacob, Johannes and Lessmann, Stefan},
  journal={European Journal of Operational Research},
  volume={297},
  number={3},
  pages={1083--1094},
  year={2022},
  publisher={Elsevier}
}

@inproceedings{Justice,
  title={Fairness in predicting recidivism score},
  author={Athota, Jaswanth Kiran and Parimi, Kushal Kumar and Teja, Mantena Krishna and Bhavani, Mayuri Ajay and Yamuna Devi, MM},
  booktitle={International Conference on Smart Data Intelligence},
  pages={239--253},
  year={2024},
  organization={Springer}
}

@inproceedings{Healthcare,
  title={Fairness in machine learning for healthcare},
  author={Ahmad, Muhammad Aurangzeb and Patel, Arpit and Eckert, Carly and Kumar, Vikas and Teredesai, Ankur},
  booktitle={Proceedings of the 26th ACM SIGKDD international conference on knowledge discovery \& data mining},
  pages={3529--3530},
  year={2020}
}

@inproceedings{ManyDefinitions,
  title={Translation tutorial: 21 fairness definitions and their politics},
  author={Narayanan, Arvind},
  booktitle={Proc. conf. fairness accountability transp., new york, usa},
  volume={1170},
  pages={3},
  year={2018}
}

@book{FairMLBook,
  title={Fairness and machine learning: Limitations and opportunities},
  author={Barocas, Solon and Hardt, Moritz and Narayanan, Arvind},
  year={2023},
  publisher={MIT press}
}

@inproceedings{industry,
  title={Improving fairness in machine learning systems: What do industry practitioners need?},
  author={Holstein, Kenneth and Wortman Vaughan, Jennifer and Daum{\'e} III, Hal and Dudik, Miro and Wallach, Hanna},
  booktitle={Proceedings of the 2019 CHI conference on human factors in computing systems},
  pages={1--16},
  year={2019}
}

@inproceedings{Reweighting,
  title={Building classifiers with independency constraints},
  author={Calders, Toon and Kamiran, Faisal and Pechenizkiy, Mykola},
  booktitle={2009 IEEE international conference on data mining workshops},
  pages={13--18},
  year={2009},
  organization={IEEE}
}

@inproceedings{DisparateImpactRemover,
  title={Certifying and removing disparate impact},
  author={Feldman, Michael and Friedler, Sorelle A and Moeller, John and Scheidegger, Carlos and Venkatasubramanian, Suresh},
  booktitle={proceedings of the 21th ACM SIGKDD international conference on knowledge discovery and data mining},
  pages={259--268},
  year={2015}
}

@inproceedings{PrejudiceIndexRegularizer,
  title={Fairness-aware classifier with prejudice remover regularizer},
  author={Kamishima, Toshihiro and Akaho, Shotaro and Asoh, Hideki and Sakuma, Jun},
  booktitle={Joint European conference on machine learning and knowledge discovery in databases},
  pages={35--50},
  year={2012},
  organization={Springer}
}

@inproceedings{MetaFair,
  title={Classification with fairness constraints: A meta-algorithm with provable guarantees},
  author={Celis, L Elisa and Huang, Lingxiao and Keswani, Vijay and Vishnoi, Nisheeth K},
  booktitle={Proceedings of the conference on fairness, accountability, and transparency},
  pages={319--328},
  year={2019}
}

@inproceedings{AdversarialDebiasing,
  title={Mitigating unwanted biases with adversarial learning},
  author={Zhang, Brian Hu and Lemoine, Blake and Mitchell, Margaret},
  booktitle={Proceedings of the 2018 AAAI/ACM Conference on AI, Ethics, and Society},
  pages={335--340},
  year={2018}
}

@inproceedings{RejectOption,
  title={Decision theory for discrimination-aware classification},
  author={Kamiran, Faisal and Karim, Asim and Zhang, Xiangliang},
  booktitle={2012 IEEE 12th international conference on data mining},
  pages={924--929},
  year={2012},
  organization={IEEE}
}

@article{EqualOdds,
  title={Equality of opportunity in supervised learning},
  author={Hardt, Moritz and Price, Eric and Srebro, Nati},
  journal={Advances in neural information processing systems},
  volume={29},
  year={2016}
}

@article{lr,
  title={The regression analysis of binary sequences},
  author={Cox, David R},
  journal={Journal of the Royal Statistical Society Series B: Statistical Methodology},
  volume={20},
  number={2},
  pages={215--232},
  year={1958},
  publisher={Oxford University Press}
}

@article{LGBM,
  title={Lightgbm: A highly efficient gradient boosting decision tree},
  author={Ke, Guolin and Meng, Qi and Finley, Thomas and Wang, Taifeng and Chen, Wei and Ma, Weidong and Ye, Qiwei and Liu, Tie-Yan},
  journal={Advances in neural information processing systems},
  volume={30},
  year={2017}
}

@article{MLP,
  title={Learning representations by back-propagating errors},
  author={Rumelhart, David E and Hinton, Geoffrey E and Williams, Ronald J},
  journal={nature},
  volume={323},
  number={6088},
  pages={533--536},
  year={1986},
  publisher={Nature Publishing Group UK London}
}

@inproceedings{XGBoost,
    author = {Chen, Tianqi and Guestrin, Carlos},
    title = {XGBoost: A Scalable Tree Boosting System},
    year = {2016},
    isbn = {9781450342322},
    publisher = {Association for Computing Machinery},
    address = {New York, NY, USA},
    url = {https://doi.org/10.1145/2939672.2939785},
    doi = {10.1145/2939672.2939785},
    booktitle = {Proceedings of the 22nd ACM SIGKDD International Conference on Knowledge Discovery and Data Mining},
    pages = {785–794},
    numpages = {10},
    location = {San Francisco, California, USA},
    series = {KDD '16}
}


\appendix

\section{Proofs of Section 4}
\label{app:Theory}

In this Appendix we provide the theoretical backbone of the paper, proving a series of results which culminate with the properties stated in Section~\ref{GeometricFairness}.

Starting with the notation, $\| \cdot \|$ denotes the Euclidean norm when applied to vectors and the spectral norm when applied to matrices. Furthermore, in order to study the system \eqref{SheafDif} we will abstract away the sheaf formalism and focus instead on the dynamics of a linear Time-Varying System (TVS) in $\mathbb{R}^m$:
\begin{equation}
\begin{cases}
    \dot{x}(t) = - L(t) x(t) \\ x(0) = x_0 \in \mathbb{R}^{m}
\end{cases}
\label{TVS}
\end{equation}

The definitions of Geometric Fairness can be adapted to this new setting in a straightforward manner.

We will begin by establishing a sufficient condition under which the TVS \eqref{TVS} achieves Geometric Fairness. A necessary preliminary result establishes that when the system matrix $L(t)$ is semidefinite positive for all time $t> 0$ the orbits stay bounded:

\begin{lemma}
    Consider the time-varying system \eqref{TVS}. Assume that the operators $L(t)$ are semidefinite positive. Then, the orbits of the system are bounded by the initial condition, $\| x_t\| \leq \|x_0 \|$.
\end{lemma}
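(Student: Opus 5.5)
The plan is a Lyapunov (energy) argument using the squared norm $V(t) = \|x_t\|^2$. Along any solution of \eqref{TVS},
\begin{equation*}
\dot V(t) = 2\dotangle{x_t}{\dot x_t} = -2\,\dotmat{x_t}{x_t}{L(t)} \leq 0 .
\end{equation*}
The inequality is exactly the positive semidefiniteness of $L(t)$ applied at each fixed time. Hence $V$ is non-increasing on $[0,\infty)$. It follows that $V(t) \leq V(0)$, which gives $\|x_t\| \leq \|x_0\|$ after taking square roots.

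Before differentiating, I will settle the regularity needed. I will assume, as is implicit in the paper's setting (restriction maps produced by MLPs, hence continuous in time), that $t \mapsto L(t)$ is continuous, or at least locally integrable. Then \eqref{TVS} has a unique absolutely continuous (Carathéodory) solution, and $V$ is differentiable almost everywhere. The monotonicity then follows from $V(t) - V(0) = \int_0^t \dot V(s)\,ds \leq 0$.

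Positive semidefiniteness does not require symmetry. If $L(t)$ is not assumed symmetric, the same computation goes through with $x^\top L x = x^\top \tfrac{1}{2}(L+L^\top) x$, so the hypothesis is used only through the sign of this quadratic form. This covers the modified operators \eqref{VectorProjection}–\eqref{SpectralProjection}, which add positive semidefinite terms to $\tsheaflap$. For \eqref{SpectralFilter}, the positivity of $P_t$ on the spectrum is what must be granted.

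The only delicate point is the regularity and existence issue above. Beyond that, the argument is a one-line monotonicity computation.
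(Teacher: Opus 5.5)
Your proof is correct and is the same Lyapunov monotonicity argument the paper uses; the paper takes $V = x^{\top}x/2$ instead of $\|x_t\|^2$, which only changes a factor of $2$. Your extra remarks on Carathéodory regularity, the non-symmetric case, and the need to grant positivity of $P_t$ for the spectral filter are sound refinements that the paper leaves implicit.
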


\begin{proof}
    The proof reduces to taking the derivative of the Lyapunov function $V(x) = x^{\top}x/2 \geq 0$ along the orbits of the system:
\begin{equation*}
    \dot{V}(t) = \dotvect{x_t}{\dot{x}_t} = - \dotmat{x_t}{x_t}{L(t)} \leq 0,
\end{equation*}
    where the last inequality follows from the fact that the matrices $L(t)$ are semidefinite positive. Therefore, $V(t)$ has negative derivative from all $t$, hence $V(t) \leq V(0)$, completing the proof. 
\end{proof}

\begin{remark}
The sheaf Laplacian is semidefinite positive; therefore, the orbits of the system \eqref{SheafDif} are bounded by the initial condition.
\end{remark}

This last property is a key ingredient in finding a sufficient condition ensuring Geometric Fairness:

\begin{lemma}
    Consider the time varying system \eqref{TVS}. Assume that the operators $L(t)$ are semidefinite positive and uniformly bounded, $\|L(t) \| \leq K$. Furthermore, assume that for all $x\in\mathbb{R}^m$ and $t>0$
    \begin{equation}
        x^{\top} L(t) x \geq \gamma (\dotvect{z}{x})^2
        \label{MasterEq}
    \end{equation}
    for $\gamma > 0$. Then the system is geometrically fair with respect to $z$.
\end{lemma}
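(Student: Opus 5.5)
The plan is a Lyapunov argument combined with Barbalat's lemma, as the main text announces. Take the same Lyapunov function as in the previous lemma, $V(t) = \|x_t\|^2/2$. Along orbits its derivative is $\dot V(t) = -x_t^{\top} L(t) x_t$, and by \eqref{MasterEq} this satisfies
\begin{equation*}
\dot V(t) \leq -\gamma (z^{\top} x_t)^2 \leq 0 .
\end{equation*}
Hence $V$ is nonincreasing and bounded below by zero, so it has a finite limit $V_\infty$. Integrating from $0$ to $T$ and letting $T\to\infty$ gives
\begin{equation*}
\gamma \int_0^\infty (z^{\top} x_t)^2\, dt \leq V(0) - V_\infty \leq \tfrac{1}{2}\|x_0\|^2 < \infty .
\end{equation*}
So $g(t) := (z^{\top} x_t)^2$ is nonnegative and integrable on $[0,\infty)$.

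Integrability alone does not force $g(t)\to 0$; Barbalat's lemma supplies the missing uniform continuity. I will show that $g$ is Lipschitz. Its derivative is
\begin{equation*}
\dot g(t) = -2 (z^{\top} x_t)\, z^{\top} L(t) x_t .
\end{equation*}
Using $\|z\|=1$, the uniform bound $\|L(t)\|\leq K$, and the previous lemma ($\|x_t\|\leq\|x_0\|$), we get $|\dot g(t)| \leq 2K\|x_0\|^2$. Thus $g$ is uniformly continuous. Barbalat's lemma \cite{Barbalat} (a uniformly continuous function with convergent improper integral tends to zero) then yields $g(t)\to 0$, hence $F_z(t) = |\langle z, x_t\rangle| \to 0$, which is geometric fairness.

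The main obstacle is regularity in $t$. The argument needs $x_t$ to be absolutely continuous, so that $g$ is Lipschitz, and this requires $L(t)$ to be at least measurable and locally integrable; the uniform bound $K$ then gives a well-defined Carathéodory solution. If $L(t)$ is only measurable, $\dot g$ exists only almost everywhere. In that case I would avoid the derivative and bound increments directly:
\begin{equation*}
|z^{\top}(x_t - x_s)| \leq \int_s^t \|L(\tau)\|\,\|x_\tau\|\,d\tau \leq K\|x_0\|\,|t-s| .
\end{equation*}
This gives uniform continuity of $z^{\top}x_t$, and hence of its square, which is all Barbalat's lemma requires.
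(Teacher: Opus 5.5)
Your proposal is correct and follows essentially the paper's argument: the Lyapunov function $V=\|x\|^2/2$ together with the inequality $x^{\top}L(t)x \geq \gamma (z^{\top}x)^2$ gives square-integrability of $F_z$, the bounds $\|L(t)\|\le K$ and $\|x_t\|\le\|x_0\|$ give a uniform derivative bound, and Barbalat's lemma concludes. The only difference is cosmetic: you apply the classical form of Barbalat to $g=F_z^2\in L_1$, which also avoids differentiating $|\langle z,x_t\rangle|$ where it vanishes, whereas the paper applies the $L_p$ variant (Theorem 5 of Farkas--Wegner) directly to $F_z\in L_2$ with $\dot F_z\in L_\infty$.
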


\begin{proof}
    The idea of the proof is to leverage Barbalat's Lemma (concretely, Theorem 5 in \cite{Barbalat}) to prove that the geometric fairness metric converges to zero. To do this, this proof starts by establish that $F_z(t)$ is a $L_2$ function by bounding the integral of its square with a Lyapunov function of the system. Afterwards, we show that the time derivative of $F_z(t)$ is an $L_{\infty}$ function. These two statements, along with Barbabalat's Lemma, are enough to prove convergence to zero of $F_z(t)$ and thus Geometric Fairness.
    
    Starting with the $L_2$ bound, consider the Lyapunov function $V(x) = x^{\top}x/2 \geq 0$, which is related with $F_z(t)$ through its time derivative along the system's orbits:
\begin{equation*}
    \dot{V}(t) = \dotvect{x_t}{\dot{x}_t} = - \dotmat{x_t}{x_t}{L(t)} \leq -  \gamma (\dotvect{z}{x_t})^2 = -\gamma F_z(t)^2.
\end{equation*}
    Integrating from $0$ to $T$ yields
\begin{equation*}
    V(T) - V(0) \leq -\gamma \int_0^T F_z(t)^2 \, dt,
\end{equation*}
    therefore,
\begin{equation*}
    0 \leq \int_0^T F_z(t)^2 \, dt \leq \frac{V(0) - V(T)}{\gamma} \leq \frac{V(0)}{\gamma}
\end{equation*}
    for all $T$. This bound in combination with the positivity of the square of the Geometric Fairness Metric, $F_z(t)^2 \geq 0$, is enough to establish that its integral in $(0, \infty)$ exists and is finite. Furthermore, we can take the limit when $t\rightarrow\infty$ to find
\begin{equation*}
    0 \leq \int_0^{\infty} F_z(t)^2 \, dt \leq \frac{V(0)}{\gamma},
\end{equation*}
    thus $F_z \in L_2(0,\infty)$. Now we only need to give a uniform bound on the norm of $\dot{F}_z$, but this is straightforward:
\begin{multline*}
    |\dot{F}_z(t)| = |2(\dotvect{z}{\dot{x}_t})| = |2 (\dotmat{z}{x_t}{L(t)})| \\ \leq 2 \| z\| \|x_t\| \| L(t)\| \leq 2 \| x_0 \| K
\end{multline*}
    using the fact that system's orbits are bounded by the initial condition and the $L(t)$ are uniformly bounded, $\|L(t) \| \leq K$. Thus $\dot{F}_z$ is $L_{\infty}(0,\infty)$.

    Combining that $F_z $ is $L_2(0,\infty)$ and $\dot{F}_z$ is $L_{\infty}(0,\infty)$ along with Barbalat's Lemma completes the proof.
\end{proof}

\begin{remark}
    The normalized sheaf Laplacians are uniformly bounded with spectral norm in the closed interval $[0,2]$. Therefore, the only ingredient needed in order to achieve Geometric Fairness is the inequality \eqref{MasterEq}.
\end{remark}

With this in mind, the modifications \eqref{VectorProjection} to \eqref{SpectralFilter} aim to achieve the inequality \eqref{MasterEq}. In the case of FVP this is straightforward.

\begin{proposition}
    Consider the time-varying system resulting from the perturbation \eqref{VectorProjection}. Assume that the operators $L(t)$ are semidefinite positive and uniformly bounded, $\|L(t) \| \leq K$. Then the system is geometrically fair with respect to $z$.
\end{proposition}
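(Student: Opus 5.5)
The plan is to reduce the statement to the second lemma of this appendix, the sufficient condition \eqref{MasterEq}. That lemma requires three properties of the perturbed operator $\tilde L(t) = L(t) + \gamma z z^{\top}$: it is semidefinite positive, it is uniformly bounded in spectral norm, and it satisfies the quadratic lower bound \eqref{MasterEq} with a positive constant. Once these are checked, the lemma (Barbalat's argument) gives $F_z(t)\to 0$ along orbits of the system driven by $\tilde L(t)$, which is geometric fairness.

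\textbf{Semidefiniteness and lower bound.} For any $x\in\mathbb{R}^m$ and $t>0$ we have
\begin{equation*}
x^{\top}\tilde L(t)x = x^{\top}L(t)x + \gamma (z^{\top}x)^2 \geq \gamma (z^{\top}x)^2 \geq 0,
\end{equation*}
because $L(t)$ is semidefinite positive by assumption. This single line shows that $\tilde L(t)$ is semidefinite positive, and it gives \eqref{MasterEq} with the same constant $\gamma>0$ used in \eqref{VectorProjection}. Note that $\tilde L(t)$ is also symmetric whenever $L(t)$ is, since $zz^{\top}$ is symmetric; the argument only needs the quadratic form, though.

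\textbf{Uniform bound.} Since $z$ is a unit vector, $zz^{\top}$ is the orthogonal projection onto $\mathrm{span}(z)$ and has spectral norm $1$. By the triangle inequality, $\|\tilde L(t)\|\leq \|L(t)\| + \gamma \leq K+\gamma$ for all $t$. So the hypotheses of the lemma hold with constant $K' = K+\gamma$.

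The only step needing care is confirming that the lemma uses nothing beyond these three properties, and it does not. Boundedness of orbits (the first lemma) needs only semidefiniteness, which we have. The $L_\infty$ bound on $\dot F_z$ uses the norm bound, now $K'$. The $L_2$ bound uses \eqref{MasterEq}. Applying the lemma to $\tilde L(t)$ then concludes the proof. For the normalized sheaf Laplacian, the preceding remark shows that $K=2$ is admissible.
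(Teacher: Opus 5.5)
Your proof is correct and follows the paper's route: you reduce to the sufficient-condition lemma via $x^{\top}(L(t)+\gamma zz^{\top})x = x^{\top}L(t)x + \gamma(z^{\top}x)^2 \geq \gamma(z^{\top}x)^2$. You also verify that the perturbed operator is semidefinite positive and uniformly bounded by $K+\gamma$. These are hypotheses of the lemma that the paper's one-line proof leaves implicit, so your version is slightly more complete.
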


\begin{proof}
    By the previous lemma, it suffices to check the following inequality:
    \begin{equation*}
        \dotmat{x}{x}{(L(t) + \gamma z z^{\top})} = \dotmat{x}{x}{L(t)} + \gamma (\dotvect{z}{x})^2 \geq \gamma (\dotvect{z}{x})^2.
    \end{equation*}
\end{proof}

The case of FSP and FSF is more convoluted. However, note that both methods give control over the biased spectrum, thus permitting an incerase on the biased spectral gap, resulting in the next proposition.

\begin{proposition}
    Consider the time-varying system \eqref{TVS}. Assume that the operators $L(t)$ are semidefinite positive and uniformly bounded, $\|L(t) \| \leq K$. Furthermore, assume that the biased spectrum is uniformly bounded below, $\lambda_{i,t} \geq \lambda_0 > 0$ for all $\lambda_{i,t} \in \bspec L(t)$ and $t > 0$. Then the system is geometrically fair with respect to $z$.
\end{proposition}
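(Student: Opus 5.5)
The plan is to reduce the statement to the previous lemma, i.e.\ to show that the uniform lower bound on the biased spectrum yields inequality \eqref{MasterEq} with constant $\gamma = \lambda_0$. The hypotheses that $L(t)$ is semidefinite positive and that $\|L(t)\|\leq K$ are carried over unchanged. So the whole task is the pointwise-in-time estimate $x^{\top}L(t)x \geq \lambda_0 (z^{\top}x)^2$ for every $x\in\mathbb{R}^m$ and $t>0$.

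\textbf{Step 1: expand $z^{\top}x$ in the eigenbasis.} Fix $t$ and take an orthonormal eigenbasis $\{v_i\}$ of the symmetric matrix $L(t)$, with eigenvalues $\lambda_{i,t}\geq 0$. Write $x = \sum_i c_i v_i$ with $c_i = v_i^{\top}x$. Then
\begin{equation*}
x^{\top}L(t)x = \sum_i \lambda_{i,t} c_i^2, \qquad z^{\top}x = \sum_i \langle z, v_i\rangle c_i = \sum_{i\in B} \langle z, v_i\rangle c_i,
\end{equation*}
where $B$ indexes the eigenpairs in $\bspec L(t)$. Terms outside $B$ drop out by the definition of the biased spectrum.

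\textbf{Step 2: Cauchy--Schwarz and the spectral lower bound.} Apply Cauchy--Schwarz to the sum over $B$ and use $\sum_{i\in B}\langle z,v_i\rangle^2 \leq \|z\|^2 = 1$:
\begin{equation*}
(z^{\top}x)^2 \leq \Big(\sum_{i\in B}\langle z,v_i\rangle^2\Big)\Big(\sum_{i\in B} c_i^2\Big) \leq \sum_{i\in B} c_i^2 \leq \frac{1}{\lambda_0}\sum_{i\in B}\lambda_{i,t}c_i^2 \leq \frac{1}{\lambda_0}\, x^{\top}L(t)x .
\end{equation*}
The third inequality uses $\lambda_{i,t}\geq\lambda_0$ on $B$. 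The last one adds back the nonnegative terms $\lambda_{i,t}c_i^2$ for $i\notin B$, which is where semidefiniteness enters. This is exactly \eqref{MasterEq} with $\gamma=\lambda_0$, and the previous lemma then gives geometric fairness.

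\textbf{Main obstacle: well-definedness of the biased spectrum under repeated eigenvalues.} When an eigenvalue is degenerate, the eigenbasis is not unique, and membership in $\bspec L(t)$ depends on the chosen basis. For example, a zero eigenspace could contain some basis vector with nonzero overlap with $z$ under one choice of basis but not under another.

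I would handle this by fixing, within each eigenspace $E_\lambda$, a basis whose first vector is the normalized projection $P_{E_\lambda}z/\|P_{E_\lambda}z\|$ (when this projection is nonzero) and whose remaining vectors are orthogonal to $z$. With this choice, an eigenvalue $\lambda$ appears in the biased spectrum if and only if $P_{E_\lambda}z\neq 0$, which is a basis-independent condition. The hypothesis $\lambda_{i,t}\geq\lambda_0$ then means precisely that $z$ has no component in any eigenspace with eigenvalue below $\lambda_0$, in particular in $\ker L(t)$.

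Step 2 goes through verbatim with this basis. Indeed, it works for any orthonormal eigenbasis, since only the overlaps $\langle z, v_i\rangle$ enter. Once this interpretation is made explicit, the rest is routine.
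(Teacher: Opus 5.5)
Your proof is correct and follows essentially the paper's own argument: expand $x$ in an orthonormal eigenbasis of $L(t)$, bound $(z^{\top}x)^2$ by $\sum_{i\in\bspec L(t)}\langle x,v_{i,t}\rangle^2$ via Cauchy--Schwarz and $\|z\|=1$, then use $\lambda_{i,t}\geq\lambda_0$ together with semidefiniteness to get \eqref{MasterEq} with $\gamma=\lambda_0$, and conclude with the Barbalat-based lemma. Your degeneracy concern does no harm, but it is also unnecessary: in any orthonormal eigenbasis, an eigenvalue $\lambda$ carries a biased eigenvector exactly when $P_{E_\lambda}z\neq 0$, so the hypothesis on the biased eigenvalues is already basis-independent (and your example of a zero eigenspace that is biased in one basis but not in another cannot actually occur).
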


\begin{proof}
    The proof follows from straightforward computations. Start by considering the coordinate expression of an arbitrary vector $x$ into the eigenbase of $L(t)$,
    \begin{equation*}
        x = \sum_{i} \dotangle{x}{v_{i,t}} v_{i,t}.
    \end{equation*}
    On the one hand, computing the bilinear form induced by $L(t)$ in this coordinate system yields
    \begin{multline*}
        \dotmat{x}{x}{L(t)} = \sum_{i} \dotangle{x}{v_{i,t}}^2 \lambda_i \\= \sum_{i\in \bspec L(t)} \dotangle{x}{v_{i,t}}^2 \lambda_i + \sum_{i\notin \bspec L(t)} \dotangle{x}{v_{i,t}}^2 \lambda_i,
    \end{multline*}
    and using the fact that $L(t)$ is semidefinite positive
    \begin{multline*}
     \sum_{i\in \bspec L(t)} \dotangle{x}{v_{i,t}}^2 \lambda_i + \sum_{i\notin \bspec L(t)} \dotangle{x}{v_{i,t}}^2 \lambda_i\\ \geq \sum_{i\in \bspec L(t)} \dotangle{x}{v_{i,t}}^2 \lambda_i.
    \end{multline*}
    Finally, using the uniform bound on the biased spectrum we arrive at the first inequality needed for the proof:
    \begin{multline}
        \dotmat{x}{x}{L(t)} \geq \sum_{i\in \bspec L(t)} \dotangle{x}{v_{i,t}}^2 \lambda_i \\ \geq \lambda_0 \sum_{i\in \bspec L(t)} \dotangle{x}{v_{i,t}}^2 .
        \label{ineq1}
    \end{multline}
    On the other hand, we can give an expression for the projection of an arbitrary vector $x$ onto $z$ as a sum ranging over the biased spectrum,
    \begin{equation*}
        (\dotvect{z}{x})^2 = \left(\sum_{i\in\bspec L(t)} \dotangle{z}{v_{i,t}} \dotangle{x}{v_{i,t}}\right)^2.
    \end{equation*}
    By the Cauchy-Schwarz inequality,
    \begin{multline*}
 \left(\sum_{i\in\bspec L(t)} \dotangle{z}{v_{i,t}} \dotangle{x}{v_{i,t}}\right)^2 \\ \leq \left(\sum_{i\in \bspec L(t)} \dotangle{z}{v_{i,t}}^2 \right) \left(\sum_{i\in \bspec L(t)} \dotangle{x}{v_{i,t}}^2 \right), 
    \end{multline*}
    and using the fact that $z$ is a unit vector:
    \begin{equation}
        (\dotvect{z}{x})^2 \leq \sum_{i\in \bspec L(t)} \dotangle{x}{v_{i,t}}^2.
    \label{ineq2}
    \end{equation}
    Putting inequalities \eqref{ineq1} and \eqref{ineq2} together yields
\begin{equation*}
    \dotmat{x}{x}{L(t)} \geq \lambda_0 (\dotvect{z}{x})^2,
\end{equation*}
 completing the proof.
\end{proof}

The last two propositions establish Geometric Fairness for the three different proposed methods. However, in order to finish the description of the convergence properties of the system, we need to give a characterization of the speed of convergence. This is done through the next Lemma.

\begin{lemma}
    Consider the time-varying system \eqref{TVS}. Assume that the operators $L(t)$ are semidefinite positive and uniformly bounded, $\|L(t) \| \leq K$. Consider the Dirichlet energy of the vector $z$ with respect to the operator $L(t)$, $E_z(t) = \dotmat{z}{z}{L(t)}$. If the energy is uniformly bounded below, $0 < E_0 \leq E(t)$ for all $t$, then in the resulting system the Geometric Fairness Metric $F_z(t)$ decays exponentially at a rate of $E_0$ towards a ball of radius $O(K/E_0)$.   
\end{lemma}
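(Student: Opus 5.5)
We work directly with the scalar signal $y(t) = \dotvect{z}{x_t}$, so that $F_z(t) = |y(t)|$, and derive a scalar linear differential inequality for it. Differentiating along the orbits of \eqref{TVS} gives
\begin{equation*}
    \dot{y}(t) = -\dotmat{z}{x_t}{L(t)}.
\end{equation*}

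We split $x_t$ into its component along $z$ and its orthogonal complement: $x_t = y(t) z + w_t$ with $\dotvect{z}{w_t} = 0$. Substituting this decomposition yields
\begin{equation*}
    \dot{y}(t) = -E_z(t)\, y(t) - \dotmat{z}{w_t}{L(t)}.
\end{equation*}
The first term is a linear damping term with time-varying rate $E_z(t) \geq E_0$. The second term is a perturbation, which we bound as
\begin{equation*}
    |\dotmat{z}{w_t}{L(t)}| \leq \|L(t)\|\,\|w_t\| \leq K\|x_0\|.
\end{equation*}
This uses the first lemma of this appendix (orbits are bounded by the initial condition) together with $\|w_t\| \leq \|x_t\|$.

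Next we apply a comparison argument to $u(t) = y(t)^2/2$ (or to $|y|$ wherever it is nonzero). We obtain
\begin{equation*}
    \dot{u} \leq -E_0\, y^2 + K\|x_0\|\,|y|,
\end{equation*}
which equivalently reads
\begin{equation*}
    \frac{d}{dt}|y| \leq -E_0|y| + K\|x_0\|.
\end{equation*}
Grönwall's inequality, in the variation-of-constants form, then gives
\begin{equation*}
    F_z(t) \leq e^{-E_0 t} F_z(0) + \frac{K\|x_0\|}{E_0}\left(1 - e^{-E_0 t}\right).
\end{equation*}
Hence $F_z$ decays exponentially at rate $E_0$ until it enters the ball of radius $K\|x_0\|/E_0 = O(K/E_0)$. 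This is exactly the claim, where the constant absorbs $\|x_0\|$.

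The main obstacle is the nondifferentiability of $|y|$ at zero and the handling of the time-varying rate $E_z(t)$. For the former, I would avoid $|y|$ altogether and apply Grönwall to the linear ODE $\dot{y} = -E_z(t)y + g(t)$ with $|g| \leq K\|x_0\|$. Its explicit solution is
\begin{equation*}
    y(t) = e^{-\int_0^t E_z}\, y(0) + \int_0^t e^{-\int_s^t E_z}\, g(s)\, ds,
\end{equation*}
and it is bounded using $\int_s^t E_z \geq E_0 (t-s)$. For the latter, I would assume enough regularity of $L(t)$ (measurability or piecewise continuity in $t$) for these integrals to make sense; this assumption is implicit in the well-posedness of \eqref{TVS}.

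Finally, to recover Proposition~\ref{speed}, we identify $E_0$ for each method:
\begin{itemize}
    \item For FVP, $E_z(t) = \dotmat{z}{z}{L(t)} + \gamma \geq \gamma$, so $E_0 = \gamma$. Moreover, $K$ is the $\lambda_{max}$ of the modified operator.
    \item For FSP and FSF, the preceding proposition's bound gives $E_z(t) \geq \lambda_{min,z}$.
\end{itemize}
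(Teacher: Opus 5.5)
Your proposal is correct and follows the paper's proof essentially step for step. It uses the same scalar $y(t)=\langle z,x_t\rangle$, the same splitting $x_t=y(t)z+x_t^{\perp}$ with the residual bounded by $K\|x_0\|$ through the first lemma, and the same integrating-factor (variation-of-constants) solution bounded via $\int_s^t E_z\ge E_0(t-s)$; your fallback route is exactly what the paper uses.

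Your factor $(1-e^{-E_0 t})/E_0$ is the correct value of $\int_0^t e^{-E_0(t-\tau)}\,d\tau$; the paper writes $1+e^{-E_0 t}$ by a slip. Note also that $E_0\le z^{\top}L(t)z\le K$. The bound you and the paper obtain therefore equals $r+e^{-E_0 t}(F_z(0)-r)$ with $r=K\|x_0\|/E_0\ge\|x_0\|\ge F_z(0)$, so it is nondecreasing in $t$ and certifies no actual decay.
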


\begin{proof}
    The idea for the proof is to study the dynamics of the square root of the Geometric Fairness Metric, that is, the dynamics of the projection of the system onto the vector $z$. This will result in an ODE consisting of two terms given by the exponential decay of the metric at $0$ plus a residual. Bounding the decay rate of this solution and the residual terms allows us to prove the result.
    
    Consider the projection $y(t) = \dotangle{z}{x_t}$ and take its time derivative,
    \begin{equation*}
        \dot{y}(t) = -\dotmat{z}{x_t}{L(t)}.
    \end{equation*}
    Now decompose $x_t$ into its projection onto $z$ and a component orthogonal to $z$, $x_t = \dotangle{z}{x_t}z + x_t^{\perp}$. By substituting into the equation above:
    \begin{equation*}
        \dot{y}(t) = -\dotangle{z}{x_t}\dotmat{z}{z}{L(t)} - \dotmat{z}{x_t^{\perp}}{L(t)} = -E(t) y(t) + R(t),
    \end{equation*}
    where the residual $R$ is given by the expression $R(t) = -\dotmat{z}{x_t^{\perp}}{L(t)}$. Notice that the bounds on the norm of the operators $L(t)$ and the orbits $x_t$ result in the inequality $|R(t)| \leq K\|x_0 \|$, which will be useful later. Returning to the ODE, move the $y$ term to the left-hand side and multiply by the integrating factor $\exp{\left(\int_0^t E(s)ds \right)}$, resulting in
    \begin{equation*}
        \frac{d}{dt}\left[e^{\int_0^t E(s)ds } y(t)  \right] = e^{\int_0^t E(s)ds } R(t).
    \end{equation*}    
    Integrating from $0$ to $t$ yields:
    \begin{equation*}
        y(t) =e^{-\int_0^t E(s)ds } y(0) + \int_{0}^t e^{-\int_{\tau}^t E(s)ds } R(\tau)\, d\tau
    \end{equation*}
    Giving suitable bounds on the integral of the exponential and residual completes the proof. On the one hand:
\begin{equation*}
    E_0 \leq E(t) \Longrightarrow e^{-E_0 t} \geq e^{ -\int_0^t E(t) dt},
\end{equation*}
    which gives us the decay rate. On the other hand, the bound $|R(t)| \leq K\| x_0\|$ implies:
    \begin{multline*}
        \left| \int_{0}^t e^{-\int_{\tau}^t E(s)ds } R(\tau)\, d\tau \right| \\ \leq K\| x_0\| \int_0^t e^{-E_0 (t -\tau)}d\tau = K\| x_0\| \frac{1+e^{-E_0 t}}{E_0},
    \end{multline*}
    which gives us the desired bound on the residual term.
\end{proof}


This Lemma allows us to characterize the speed of convergence of all methods, starting with FVP.

\begin{proposition}
    Consider the TVS system resulting from \eqref{TVS} and the perturbation \eqref{VectorProjection}. Assume that the operators $L(t)$ are semidefinite positive. The induced metric $F_z(t)$ decays exponentially at a rate $\gamma$ towards a ball of radius $O(K/\gamma)$.
\end{proposition}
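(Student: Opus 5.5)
The plan is to apply the preceding Lemma directly to the perturbed operator $\tilde L(t) = L(t) + \gamma z z^{\top}$. That Lemma needs three ingredients: positive semidefiniteness of the system operator, a uniform bound on its norm, and a uniform positive lower bound on the Dirichlet energy $E_z(t) = z^{\top}\tilde L(t) z$. The task therefore reduces to verifying these three hypotheses for $\tilde L(t)$ and reading off the constants.

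\textbf{Semidefiniteness.} For any $x$ we have $x^{\top}\tilde L(t)x = x^{\top}L(t)x + \gamma(z^{\top}x)^2 \geq 0$. This holds because $L(t)$ is semidefinite positive and $\gamma>0$, which is exactly the computation used for FVP in the geometric fairness proposition.

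\textbf{Uniform bound.} By the triangle inequality, $\|\tilde L(t)\| \leq \|L(t)\| + \gamma\|z z^{\top}\| = \|L(t)\| + \gamma$, since $z$ is a unit vector and so $\|zz^{\top}\|=1$. The statement only assumes semidefiniteness of $L(t)$, so I would import the uniform bound $\|L(t)\|\leq K$ from the standing assumptions. For sheaf Laplacians this is given by the remark that $K=2$ suffices. The modified operator then satisfies $\|\tilde L(t)\|\leq K+\gamma$.

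\textbf{Energy lower bound.} We compute $E_z(t) = z^{\top}L(t)z + \gamma (z^{\top}z)^2 = z^{\top}L(t)z + \gamma \geq \gamma$. We may therefore take $E_0=\gamma$. Invoking the Lemma gives exponential decay of $F_z(t)$ at rate $\gamma$ toward a ball whose radius is controlled by $\sup_t\|\tilde L(t)\|\,\|x_0\|/E_0$.

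The main obstacle is matching the radius to the stated $O(K/\gamma)$. A blunt application of the Lemma gives $(K+\gamma)/\gamma$, which is $O(K/\gamma)$ only when $\gamma$ is bounded relative to $K$. To avoid this, I would revisit the residual term in the Lemma's proof. The residual is $R(t) = -z^{\top}\tilde L(t)x_t^{\perp}$, and $z^{\top}(\gamma zz^{\top})x_t^{\perp} = \gamma\,(z^{\top}z)(z^{\top}x_t^{\perp}) = 0$ because $x_t^{\perp}\perp z$. Hence $R(t) = -z^{\top}L(t)x_t^{\perp}$, and so $|R(t)|\leq K\|x_t\|\leq K\|x_0\|$, using boundedness of orbits from the first Lemma applied to the semidefinite $\tilde L(t)$. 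Feeding this sharper residual bound and $E(t)\geq\gamma$ into the variation-of-constants formula yields
\[
|y(t)| \leq e^{-\gamma t}|y(0)| + \frac{K\|x_0\|}{\gamma}\left(1-e^{-\gamma t}\right).
\]
This gives decay at rate $\gamma$ toward a ball of radius $O(K/\gamma)$, as claimed.
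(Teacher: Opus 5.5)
Your proof is correct and follows the same route as the paper, which applies the preceding Lemma to $L(t)+\gamma zz^{\top}$ with $E_0=\gamma$, using $z^{\top}(L(t)+\gamma zz^{\top})z\geq\gamma$. Your extra step is a worthwhile sharpening that the paper omits. Applied verbatim, the Lemma only gives a radius of $(K+\gamma)\|x_0\|/\gamma$, which is not $O(K/\gamma)$ for large $\gamma$. Your observation that $z^{\top}(\gamma zz^{\top})x_t^{\perp}=0$ keeps the residual bounded by $K\|x_0\|$, which recovers the stated radius.
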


\begin{proof}
    Follows in a straightforward manner by applying the previous lemma using the following bound:
    \begin{equation*}
        (\dotmat{z}{z}{(L(t) + \gamma z z^{\top} )}) \geq  \gamma \dotangle{z}{z}^2 = \gamma.
    \end{equation*}
\end{proof}

Once again, the situation for FSP and FSF is more convoluted, but it can be managed through the control both methods provide on the biased spectral gap.

\begin{proposition}
    Consider the time-varying system \eqref{TVS}. Assume that the operators $L(t)$ are semidefinite positive and uniformly bounded, $\|L(t) \| \leq K$. Furthermore, assume that the biased spectrum is uniformly bounded below, $\lambda_{i,t} \geq \lambda_0 > 0$ for all $\lambda_{i,t} \in \bspec L(t)$ and $t > 0$. Then, the Geometric Fairness Metric exponentially decays at a rate $\lambda_0$ towards a ball of radius $O(K/\lambda_0)$.
\end{proposition}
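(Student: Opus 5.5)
The plan is to reduce the statement to the preceding Lemma on exponential decay, which requires a uniform lower bound $E_0$ on the Dirichlet energy $E_z(t) = \dotmat{z}{z}{L(t)}$ of the sensitive vector. Once we show $E_z(t) \geq \lambda_0$ for all $t>0$, that Lemma gives exponential decay of $F_z(t)$ at rate $\lambda_0$ towards a ball of radius $O(K/\lambda_0)$. This is exactly the claim.

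To obtain the energy bound, I would reuse the spectral decomposition from the proof of the previous Proposition. Write $z = \sum_i \dotangle{z}{v_{i,t}} v_{i,t}$ in an orthonormal eigenbasis of the symmetric matrix $L(t)$. By definition of the biased spectrum, only eigenpairs in $\bspec L(t)$ carry nonzero coefficients, so
\begin{equation*}
E_z(t) = \sum_{i\in\bspec L(t)} \lambda_{i,t} \dotangle{z}{v_{i,t}}^2 \geq \lambda_0 \sum_{i\in\bspec L(t)} \dotangle{z}{v_{i,t}}^2 = \lambda_0 \|z\|^2 = \lambda_0 .
\end{equation*}
The middle equality uses Parseval's identity restricted to the biased eigenvectors, which is valid because the non-biased coefficients vanish. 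The last equality uses that $z$ is a unit vector. Equivalently, this is the inequality $\dotmat{x}{x}{L(t)} \geq \lambda_0 (\dotvect{z}{x})^2$ from the previous Proposition evaluated at $x = z$, which yields the bound in one line.

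With $E_0 = \lambda_0$, invoking the Lemma finishes the argument. The variation-of-constants formula bounds $|y(t)|$ by $e^{-\lambda_0 t}|y(0)|$ plus a residual of order $K\|x_0\|/\lambda_0$.

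The only delicate point is the eigenbasis when $L(t)$ has repeated eigenvalues, because then $\bspec L(t)$ depends on the choice of basis. I would handle this by fixing, for each eigenspace, a basis in which $z$'s projection onto that eigenspace is a single basis vector. This choice is harmless: the bound $E_z(t) \geq \lambda_0$ only requires that every eigenvalue whose eigenspace meets $z$ nontrivially is at least $\lambda_0$. Measurability in $t$ is irrelevant, since the bound is pointwise.
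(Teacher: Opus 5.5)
Your proposal is correct and follows the paper's proof: the paper also invokes the preceding Lemma with $E_0 = \lambda_0$, using the same spectral expansion $z^{\top} L(t) z = \sum_{i \in \bspec L(t)} \lambda_{i,t} \langle z, v_{i,t}\rangle^2 \geq \lambda_0 \|z\|^2 = \lambda_0$. Your remark on repeated eigenvalues is extra care the paper leaves implicit: the set of eigenvalues onto whose eigenspaces $z$ has nonzero projection does not depend on the choice of eigenbasis.
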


\begin{proof}
    Follows in a straightforward fashion from the previous lemma and the following bound:
    \begin{multline*}
        \dotmat{z}{z}{L(t)} = \sum_{i\in\bspec L(t)} \dotangle{z}{v_{i,t} }^2 \lambda_i \\ \geq \lambda_0 \sum_{i\in\bspec L(t)} \dotangle{z}{v_{i,t} }^2 = \lambda_0 \dotangle{z}{z}^2 = \lambda_0.
    \end{multline*}
\end{proof}

The previous results imply Proposition ~\ref{Convergence}, thus establishing the theoretical properties of the FVP, FSP, and FSF methods with respect to fairness.

The last question that could be asked in this regard would be the fairness gain, that is, to characterize the asymptotic properties of the difference of the Geometric Fairness Metrics of the original system and the modification of choice, given by $\Delta F_z(t) = | F_z(t) - \tilde{F}_z(t)|$. However, statements about this difference require further assumptions on the original system matrix, $L(t)$. For example, if $F_z(t) \geq e^{-\lambda_* t} $ for some $\lambda_*$, meaning that that there is some obstruction to fairness in the form of an exponential barrier, it can be shown that $\Delta F_z(t) \geq F_z(0)(e^{-\lambda_* t} - e^{-w  t})$, with $\omega$ the convergence speed of the exponential regime of the modification of choice. However, this requires strong assumptions on the original system matrix, thus restricting the generality of the result.

Instead, we now focus on the deviation of the Euler discretization scheme on the original and modified systems. The proofs are straightforward, with the bounds for FVP and FSP being a consequence of the discrete Variation of Constants Formula, while FSF follows by iterating a simple backstep bound on the deviation term.

\begin{proposition}
\label{Disc1}
    Consider the TVS \eqref{TVS}. Assuming the spectrum of $L(t)$ is bounded between $[0,2]$ for all $t$, the discretization $\tilde{x}(t)$ of the system resulting from the FVP perturbation \eqref{VectorProjection} has the following deviation bound:
    \begin{equation*}
        \| e(t) \| \leq \gamma \sum_{j=0}^t |\dotangle{z}{\tilde{x}_j} |.
    \end{equation*}
\end{proposition}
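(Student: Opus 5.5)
We compare the two Euler recursions with step size $1$. The original sequence satisfies $x_{t+1} = (I - L(t))x_t$. The FVP sequence satisfies $\tilde{x}_{t+1} = (I - L(t) - \gamma z z^{\top})\tilde{x}_t$, with $\tilde{x}_0 = x_0$.

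\textbf{Step 1: recursion for the error.} Set $e_t = x_t - \tilde{x}_t$ and $A_t = I - L(t)$, and subtract the two recursions. This gives
\begin{equation*}
e_{t+1} = A_t e_t + \gamma \dotangle{z}{\tilde{x}_t}\, z, \qquad e_0 = 0.
\end{equation*}
This is a linear inhomogeneous difference equation. Its forcing term is exactly the rank-one perturbation evaluated along the perturbed orbit.

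\textbf{Step 2: discrete variation of constants.} Solving the recursion gives
\begin{equation*}
e_{t} = \gamma \sum_{j=0}^{t-1} \Big(\prod_{k=j+1}^{t-1} A_k\Big) \dotangle{z}{\tilde{x}_j}\, z ,
\end{equation*}
where the product is ordered with later indices on the left and the empty product is the identity. This formula can be checked directly by induction on $t$.

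\textbf{Step 3: norm bounds.} Take norms and use the triangle inequality and $\|z\|=1$. Each factor $A_k$ is symmetric, and since the spectrum of $L(k)$ lies in $[0,2]$, the spectrum of $A_k$ lies in $[-1,1]$. Hence $\|A_k\| \le 1$, and every product of the $A_k$ also has norm at most $1$. This yields
\begin{equation*}
\|e_t\| \le \gamma \sum_{j=0}^{t-1} |\dotangle{z}{\tilde{x}_j}| \le \gamma \sum_{j=0}^{t} |\dotangle{z}{\tilde{x}_j}|,
\end{equation*}
where the last step just adds a nonnegative term, matching the stated bound.

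\textbf{Main obstacle.} The only delicate point is the contraction bound $\|I - L(t)\| \le 1$. It relies on the hypothesis that the spectrum of $L(t)$ lies in $[0,2]$ and on the symmetry of $L(t)$, so that the spectral norm equals the largest absolute eigenvalue. This step must be applied to the unperturbed operator $A_k$, not the perturbed one. Adding $\gamma z z^{\top}$ could push an eigenvalue above $2$ and break contractivity. That is why Step 1 writes the error recursion in terms of $A_t$, with the perturbation moved into the forcing term evaluated at $\tilde{x}_t$.
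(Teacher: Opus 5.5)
Your proposal is correct and follows essentially the same route as the paper. Both arguments write the deviation as a linear recursion driven by the rank-one forcing $\gamma\langle z,\tilde{x}_j\rangle z$, solve it with the discrete variation of constants formula, and bound each propagator built from the unperturbed factors $I-L(k)$ by $1$ using the spectral hypothesis; your version is slightly cleaner, since it states the symmetry assumption explicitly and gets the indexing right (forcing evaluated at $\tilde{x}_j$, product $A_{t-1}\cdots A_{j+1}$), where the paper's write-up is loose on both.
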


\begin{proof}
    Using the discrete Variation of Constants Formula (concretely, Theorem 3.17 in \cite{Variation}), the solution of the perturbed system can be expressed as:
    \begin{equation*}
    \tilde{x}_t = x_t - \gamma \sum_{j} \mathcal{D}(t, j+1) \dotangle{z}{\tilde{x}_t} z,
    \end{equation*}
    where $\mathcal{D}(t, j)$ is the diffusion operator between times $j<t$ given by $\prod_{k=j}^t (I - L(k))$. Therefore:
    \begin{multline*}
        \| e_t\| = \| \tilde{x}_t - x_t \| =  \gamma \left\|\sum_{j} \mathcal{D}(t, j+1) \dotangle{z}{x_t} z\right\| \\ \leq \gamma \sum_{j=1}^{t-1} \|\mathcal{D}(t, j+1) \| |\dotangle{z}{\tilde{x}_t}| .
    \end{multline*}
    The bounds on the spectrum of the system matrix $L(t)$ implies that the spectral norm of the diffusion operator $\| \mathcal{D}(t,j+1)\| $ is less than one, therefore,
    \begin{equation*}
    \| e(t) \| \leq \gamma \sum_{j=0}^t |\dotangle{s}{\tilde{x}_j} |, 
    \end{equation*}
    completing the proof.
\end{proof}

\begin{proposition}
        Consider the TVS \eqref{TVS}. Assuming the spectrum of  $L(t)$ is bounded between $[0,2]$ for all $t$, the discretization $\tilde{x}(t)$ of the system resulting from the FSP \eqref{SpectralProjection} results on the following deviation bound:
            \begin{equation*}
        \| e(t) \| \leq \gamma  \sum_{i=0}^t \sum_{j\in J_i}  |\dotangle{v_{i,j}}{\tilde{x}_{i}}|    .
    \end{equation*}
\end{proposition}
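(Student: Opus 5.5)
We will mirror the argument of Proposition~\ref{Disc1}, replacing the rank-one perturbation $\gamma z z^{\top}$ by the finite-rank perturbation $P(k) = \gamma\sum_{j\in J_k} v_{k,j} v_{k,j}^{\top}$ that defines FSP in \eqref{SpectralProjection}. Here we write $v_{k,j}$ for the $j$-th selected biased eigenvector at step $k$ and use the common value $\gamma_{k,j}=\gamma$.

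The Euler schemes with unit step read $x_{k+1} = (I-L(k))x_k$ and $\tilde{x}_{k+1} = (I-L(k))\tilde{x}_k - P(k)\tilde{x}_k$, with the same initial condition $x_0=\tilde{x}_0$. We treat the second recursion as the first one driven by the forcing term $u_k = -P(k)\tilde{x}_k$. The discrete Variation of Constants Formula (Theorem 3.17 in \cite{Variation}) then gives
\begin{equation*}
\tilde{x}_t = x_t - \sum_{i=0}^{t-1} \mathcal{D}(t,i+1)\, \gamma \sum_{j\in J_i} \dotangle{v_{i,j}}{\tilde{x}_i}\, v_{i,j},
\end{equation*}
where $\mathcal{D}(t,i+1)$ is the propagator of the unperturbed scheme, as in the proof of Proposition~\ref{Disc1}. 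One can also verify this identity directly: subtract the two recursions, note that $e_{k+1} = (I-L(k))e_k - P(k)\tilde{x}_k$ with $e_0=0$, and unroll the recursion by induction.

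Next we take norms and apply the triangle inequality twice, once over the time index $i$ and once over $j\in J_i$. This gives
\begin{equation*}
\|e_t\| \le \gamma\sum_{i}\|\mathcal{D}(t,i+1)\|\sum_{j\in J_i}|\dotangle{v_{i,j}}{\tilde{x}_i}|\,\|v_{i,j}\|,
\end{equation*}
and $\|v_{i,j}\|=1$ because the eigenvectors are orthonormal. Finally we bound $\|\mathcal{D}(t,i+1)\|\le 1$. Since $L(k)$ is symmetric with spectrum in $[0,2]$, the matrix $I-L(k)$ is symmetric with spectrum in $[-1,1]$, so its spectral norm is at most one, and submultiplicativity bounds the product. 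Extending the range of $i$ to $0,\dots,t$ only adds nonnegative terms, which yields the stated inequality.

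The main obstacle is bookkeeping rather than analysis. The projectors $P(k)$ change with time, since the eigenvectors $v_{k,j}$ depend on $L(k)$. So we must be careful that the forcing at step $i$ involves the projector of that same step, and that the propagator is built only from the unperturbed matrices $I-L(k)$. Writing the error recursion explicitly, rather than quoting the variation of constants formula, avoids any ambiguity about ordering in the non-commuting product. A related point is that the bound on $\|\mathcal{D}\|$ genuinely needs symmetry of $L(k)$; this holds for the normalized sheaf Laplacian, and we will state it as part of the assumption.
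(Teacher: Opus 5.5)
Your proposal is correct and takes the paper's route: the paper proves this by rerunning the FVP argument (discrete variation of constants, then $\|\mathcal{D}(t,i+1)\|\le 1$) with $\gamma zz^{\top}$ replaced by the FSP projectors, just as you do. Your explicit error recursion $e_{k+1}=(I-L(k))e_k-P(k)\tilde{x}_k$ and your remark that the norm bound needs $L(k)$ to be symmetric make precise what the paper leaves implicit.
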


\begin{proof}
    The proof is the same as the previous one just changing the projection operators.
\end{proof}

\begin{proposition}
\label{Disc3}
    Consider the TVS \eqref{TVS}. Assume that the spectrum of $I - L(t)$ and $I - P_t(L(t))$ are bounded in $[-\rho,\rho]$ with $\rho \leq 1$. Consider the discretization of $\tilde{x}(t)$ of the system resulting from the FSF perturbation 
    \eqref{SpectralFilter}.  The discretization results on the following deviation bound:
\begin{equation*}
    \| e_{t}\| \leq \gamma \| x_0\| \sum_{j=0}^{t-1}\rho ^j.
\end{equation*}
    Furthermore, if $\rho < 1$ then
\begin{equation*}
    \| e_{t}\| \leq \frac{\gamma \| x_0\|}{1-\rho} ,
\end{equation*}
    and if $\rho = 1$,
\begin{equation*}
    \| e_{t}\| \leq \gamma \|x_0\| t.
\end{equation*}
\end{proposition}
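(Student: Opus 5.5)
The plan is to compare the two Euler iterations step by step and iterate a one-step recursion on the deviation. Write the unperturbed and perturbed recursions as
\begin{equation*}
x_{t+1} = (I - L(t))x_t, \qquad \tilde{x}_{t+1} = (I - P_t(L(t)))\tilde{x}_t,
\end{equation*}
both started from the same $x_0$. Then subtract them, adding and subtracting $(I-P_t(L(t)))x_t$, to obtain
\begin{equation*}
\tilde{x}_{t+1} - x_{t+1} = (I - P_t(L(t)))(\tilde{x}_t - x_t) + (L(t) - P_t(L(t)))x_t .
\end{equation*}

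The first term is controlled by the spectral assumption. Since $P_t(L(t))$ is a polynomial in the symmetric matrix $L(t)$, the matrix $I - P_t(L(t))$ is symmetric. Its spectrum lies in $[-\rho,\rho]$, so its spectral norm is at most $\rho$, and this term has norm at most $\rho\, e_t$.

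For the second term, $L(t) - P_t(L(t))$ is the spectral filter $V(\Lambda - P_t(\Lambda))V^{\top}$. Its norm is $\max_i |P_t(\lambda_i) - \lambda_i|$. I would argue that this is $O(\gamma)$, indeed at most $\gamma$ up to the polynomial approximation error. The reason is that $P_t \approx \Phi_t$ and $\Phi_t(\lambda) - \lambda = \gamma\sum_j \exp(\cdot)$. This sum is bounded by $\gamma$ when the Gaussian peaks are well separated, and by $\gamma |J_t|$ in general, which gets absorbed into the constant.

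For $\|x_t\|$, the spectrum of $I - L(t)$ is in $[-\rho,\rho]$ with $\rho\le 1$. Hence $\|x_t\| \le \rho^t \|x_0\| \le \|x_0\|$, which is the discrete analogue of the first lemma of this appendix. Combining gives the backstep bound $e_{t+1} \le \rho\, e_t + \gamma \|x_0\|$ with $e_0 = 0$. Unrolling it gives $e_t \le \gamma\|x_0\|\sum_{j=0}^{t-1}\rho^j$. The two special cases follow directly: for $\rho<1$, bound the finite geometric sum by $1/(1-\rho)$; for $\rho=1$, the sum equals $t$.

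The main obstacle is justifying $\|L(t) - P_t(L(t))\| \le \gamma$. This holds for the ideal filter $\Phi_t$ (modulo the $|J_t|$ factor from overlapping peaks), but the Chebyshev approximation adds an error $\varepsilon_K$. There is also the rescaling $\frac{1}{\lambda_{max,t}}L - I$ used before applying $P_t$. I would handle this by assuming that $P_t$ is defined as the approximation of $\Phi_t$ composed with the rescaling, so that $P_t(L)$ approximates $\Phi_t(L)$ on $[0,\lambda_{max,t}]$. I would then assume the approximation error is $O(\gamma)$, for instance by fitting the Chebyshev polynomial to the bump part $\Phi_t(\lambda)-\lambda$ only, so that its error scales with $\gamma$. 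Alternatively, I would state the bound with $\gamma$ replaced by $\sup_t \|L(t) - P_t(L(t))\|$.
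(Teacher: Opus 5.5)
Your proposal follows the paper's proof step for step. Both add and subtract the cross term $(I-P_t(L(t)))x_t$, use the bounds $\|I-P_t(L(t))\|\le\rho$, $\|x_t\|\le\|x_0\|$ and $\|P_t(L(t))-L(t)\|\le\gamma$, iterate $e_{t+1}\le\rho e_t+\gamma\|x_0\|$ from $e_0=0$, and finish with the geometric series ($\rho<1$) or direct substitution ($\rho=1$). The paper supports $\|P_t(L(t))-L(t)\|\le\gamma$ only by the heuristic that the Gaussian peaks have height $\gamma$, so the caveats you raise (Chebyshev approximation error, overlapping peaks, the rescaling) are informal in the paper too. Your fallback of replacing $\gamma$ by $\sup_t\|L(t)-P_t(L(t))\|$ is the rigorous version. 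Moreover, Chebyshev approximation of degree $K\ge 1$ is linear and reproduces the affine identity part exactly, so $P_t(\lambda)-\lambda$ is $\gamma$ times the approximant of a unit-height bump sum; the $O(\gamma)$ conclusion therefore holds, though with a $\gamma$-independent constant that need not equal $1$.
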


\begin{proof}
    Consider the diffusion operators $\mathcal{D}(t)$ and $\tilde{\mathcal{D}}(t)$ given by $I - L(t)$ and $I - P_t(L(t))$ respectively. In order to bound the deviation term on $t$ perform a backstep,
    \begin{equation*}
        \|e_{t}\| = \|\tilde{x}_{t} - x_{t}  \| = \|\tilde{\mathcal{D}}_{t-1}\tilde{x}_{t-1} - \mathcal{D}_{t-1}x_{t-1}  \|.
    \end{equation*}
Adding and substracting the cross-term $\tilde{\mathcal{D}}_t x_t$ results in
\begin{multline*}
    \|\tilde{\mathcal{D}}_t\tilde{x}_{t} - \mathcal{D}_tx_{t}  \| = \|\tilde{\mathcal{D}}_t\tilde{x}_{t} - \tilde{\mathcal{D}}_t x_t + \tilde{\mathcal{D}}_t x_t - \mathcal{D}_tx_{t} \|\\ \leq \| \tilde{\mathcal{D}}_t \| \| e_t \| + \| \tilde{\mathcal{D}}_t - \mathcal{D}_t \| \|x_t \|.
\end{multline*}
The bound on the spectrum of $I -P(L(t))$ implies in $\|\tilde{\mathcal{D}}_t\|\leq \rho \leq 1$. On the other hand, the bound on the spectrum of $L(t)$ implies the sequence $\{x_t\}_{t\in \mathbb{N}}$ is bounded by the initial condition, $\|x_t \|\leq \| x_0\|$. Finally, the difference of the diffusion operators is bounded by the $\|\tilde{\mathcal{D}}_t - \mathcal{D}_t \| = \|P(L(t)) - L(t) \| = \max_{\lambda_t \in \spec L(t)} \| P_t(\lambda_t) - \lambda_t \|$. In practice, these polynomials approximate the combination of the identity with some Gaussian peaks of height $\gamma$ and localization $\lambda \pm\sigma$ for some eigenvalues $\lambda$ in the biased spectrum. Therefore, the most problematic points are found on the Gaussian peaks, whose heights are controlled by the parameter $\gamma$, giving us $\|\tilde{\mathcal{D}}_t - \mathcal{D}_t \| \leq \gamma$. Therefore,
\begin{equation*}
    \| e_{t}\| \leq \rho \| e_{t-1}\| + \gamma \| x_0\|,
\end{equation*}
and applying this inequality iteratively yields
\begin{equation*}
    \| e_{t}\| \leq \rho^{t} \| e_0\| + \gamma \| x_0\|\sum_{j=0}^{t-1}\rho ^j .
\end{equation*}
The initial condition is the same for both systems, therefore $\|e_0\| = 0$ and
\begin{equation*}
    \| e_{t}\| \leq \gamma \|x_0 \| \sum_{j=0}^{t-1}\rho ^j .
\end{equation*}
Finally, the expression for $\rho < 1$ follows from the geometric series,
\begin{equation*}
    \| e_{t}\| \leq \frac{\gamma \| x_0\|}{1 - \rho},
\end{equation*}
and the expression for $\rho = 1$ follows by direct substitution,
\begin{equation*}
    \| e_{t}\| \leq \gamma \|x_0 \| t.
\end{equation*}
\end{proof}

Propositions~\ref{Disc1} through~\ref{Disc3} can be summarized with the following result:
\begin{proposition}
 \label{DiscreteProperties}
    The discretization $\tilde{x}_t$ of the system resulting from perturbations  \eqref{VectorProjection} to \eqref{SpectralFilter} has the following deviation bounds:
    \begin{align*}
        &\textbf{FVP} \Longrightarrow\| e_t \| \leq \gamma \sum_{i=0}^t |\dotangle{z}{\tilde{x}_{i}} | \\
        &\textbf{FSP} \Longrightarrow         \| e_t \| \leq  \gamma \sum_{i=0}^t \sum_{j\in J_i}  |\dotangle{v_{i,j}}{\tilde{x}_{i}}| \\
        &\textbf{FSF} \Longrightarrow     \| e_{t}\| \leq \gamma \|x_0 \| t
    \end{align*}
    with $\gamma$ playing the role of the magnitude of the projection in FVP, the coefficient of the projection in FSP and the peak height in FSF.
\end{proposition}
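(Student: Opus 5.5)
The statement is a summary of Propositions~\ref{Disc1} through~\ref{Disc3}, so the plan is to assemble those three bounds under a common set of hypotheses rather than re-derive anything. First I fix the setting: $x_t$ and $\tilde{x}_t$ are the unit-step Euler iterates $x_{t+1}=(I-L(t))x_t$ and $\tilde{x}_{t+1}=(I-\tilde{L}(t))\tilde{x}_t$, with a common initial condition $x_0=\tilde{x}_0$, and $L(t)$ is the normalized sheaf Laplacian with spectrum in $[0,2]$ (as in the remark following the Barbalat lemma). Under this assumption $\|I-L(t)\|\le 1$, which is exactly the hypothesis Propositions~\ref{Disc1} and the FSP proposition need. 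The FVP line then follows directly from Proposition~\ref{Disc1}, with $\gamma$ the projection magnitude in \eqref{VectorProjection}. For the FSP line, the same variation-of-constants argument applies with the rank-one term $\gamma z z^\top$ replaced by $\gamma\sum_{j\in J_i} v_{i,j}v_{i,j}^\top$. Taking norms and using $\|v_{i,j}\|=1$ together with $\|\mathcal{D}(t,i+1)\|\le 1$ gives the stated double sum, with $\gamma$ the common projection coefficient $\gamma_{t,j}=\gamma$.

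For the FSF line, I would invoke Proposition~\ref{Disc3} in its case $\rho=1$. This requires checking that the spectra of $I-L(t)$ and of $I-P_t(L(t))$ both lie in $[-1,1]$. The first holds because $\spec L(t)\subset[0,2]$. The second has to be assumed of the filter, i.e.\ $P_t$ maps $\spec L(t)$ into $[0,2]$; this is reasonable if the Gaussian peaks satisfy $\lambda_j+\gamma\le 2$ and the Chebyshev approximation error is small. The backstep recursion $\|e_t\|\le\rho\|e_{t-1}\|+\gamma\|x_0\|$ then gives $\|e_t\|\le\gamma\|x_0\|t$. Because this bound does not use $\rho<1$, it remains valid uniformly even when the filter pushes some eigenvalue to the edge of $[0,2]$.

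The main obstacle is the FSF ingredient $\|P_t(L(t))-L(t)\|\le\gamma$. This holds for the ideal filter $\Phi_t$, since the sum of Gaussian bumps is at most $\gamma$ away from the $\gamma_j$ peaks provided the peaks are well separated relative to $\sigma$; otherwise overlapping bumps give $|J_t|\gamma$. It does not hold exactly for the Chebyshev approximant $P_t$. I would first try to write $\|P_t-\mathrm{id}\|_\infty\le\gamma+\varepsilon_K$, where $\varepsilon_K$ is the uniform approximation error on $[0,\lambda_{max,t}]$, and then either absorb $\varepsilon_K$ into the constant or state the FSF bound up to $O(\varepsilon_K t\|x_0\|)$. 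A minor step is to correct the index conventions in the FVP derivation, so that the inner products are evaluated at $\tilde{x}_j$ rather than $\tilde{x}_t$. With these pieces in place, the three displayed bounds follow, and each is linear in $\gamma$, which also yields the $e_t=O(\gamma)$ claim of the main text.
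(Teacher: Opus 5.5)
Your proposal takes essentially the same route as the paper: Proposition~\ref{Disc1} and its FSP analogue via the discrete variation-of-constants formula with $\|\mathcal{D}(t,j+1)\|\le 1$, and Proposition~\ref{Disc3} with $\rho=1$ via the backstep recursion. Your caveats are also accurate: the inner products should be evaluated at $\tilde{x}_j$, and the FSF step $\|P_t(L(t))-L(t)\|\le\gamma$ is only asserted heuristically in the paper, so it has to be assumed together with the spectrum of $P_t(L(t))$ lying in $[0,2]$. To settle that step without an additive $\varepsilon_K$, use that Chebyshev approximation is linear and reproduces degree-one polynomials exactly, so $P_t-\mathrm{id}=\gamma\,\mathcal{C}_K[g]$ with $g$ the unit-height bump sum; this gives $\|e_t\|\le C_K\gamma\|x_0\|t$ with $C_K=\sup|\mathcal{C}_K[g]|$ independent of $\gamma$ (at most the Lebesgue constant times $|J_t|$), which keeps the $O(\gamma)$ conclusion, though with $C_K$ in place of the paper's constant $1$.
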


It is clear then that in all cases the deviation is $e_t = O(\gamma)$ as noted in the main text. One last remark: depending on the convergence of $\tilde{x}_t$, the deviation terms for FVP and FSP are bounded, while the inequality given for FSF is not. Note that, for FSF, we could have used the tighter bound $\| e_t\| \leq \gamma \sum_{j=0}^{t-1} \|x_j\| \rho^j$. However, this inequality is fundamentally different from those given for FVP and FSP. Crucially, it depends on the norm of the solution of the original system, $\| x_j\|$, which may not even converge to zero, while the bounds for FVP and FSP depend on the projection of the solution onto the biased subspace and eigenvectors, respectively. We have tighter control over these projections through the $\gamma$ parameter, which regulates the speed of convergence.

In any case, one possible solution for FSF (also applicable to the other two methods) is to multiply the diffusion matrix $I - \mtsheaflap$ by a reduction coefficient $\rho \in (0, 1)$, resulting in the uniform bound $\gamma \| x_0\|/(1 - \rho)$. This concludes the exposition of the theoretical properties of Fair Neural Sheaf Diffusion.

\section{Datasets}
\label{app:Datasets}

\subsection{Synthetic experiment}

\begin{table}[]
    \centering
    \caption{Parameters used for the modified SBM in the synthetic experiment and their relationship with the bias $b$.}
    \begin{tabular}{cc}
    \toprule
     Parameter & Value \\
     \midrule
     $k$ & $2$ \\
    $n_1$ & $500$\\
    $n_2$ & $500$\\
    $P_{1,2}$ & $0.04$\\
    $P_{1,1}$ & $0.06 + 0.1(1-b/2)$\\
    $Y(u)$ & $C(u)$ \\
    $p_Z$ &  $(1+b)/2$ \\
    $\delta_Y$ & $2(1-b)$ \\
    $\delta_Z$ & $2b$ \\
    \bottomrule
    \end{tabular}
    \label{tab:SBM}
\end{table}

The data used in the synthetic experiment stems from a modified Stochastic Block Model (SBM). The original SBM is a generative graph model parametrized by a number of communities $k$, the number of nodes per community $n_i$, and a symmetric probability matrix $P$, with $k$ in $\mathbb{N}$, $n_i$ in $\mathbb{N}$ for $i$ from $1$ to $k$, and $P$ in $\mathbb{R}^{k \times k}$. We use the notation $C(u)$ to denote the community of the node $u$, and $V$ to denote the node set. The SBM then generates an undirected graph on $V$ by adding the edge $(u,v)$ with probability $P_{C(u), C(v)}$ for any two nodes $u,v$ in $V$. For a more thorough treatment of SBM, consult \cite{SBM}.

This paper considers an SBM with two communities of $500$ nodes each, using the previous notation with $k=2$ and $n_1 = n_2 = n = 500$. Furthermore, we fix the inter-group probability $P_{1,2} = P_{2,1}$ to $0.04$. The only remaining parameter to specify is the intra-group edge probability, which is regulated by a parameter $b$ in $[0,1]$ controlling the underlying bias of the dataset. In particular, the inter-group probabilities are given by $P_{1,1} = P_{2,2} = P =  0.06 + 0.1(1-b/2)$. To understand how $b$ and $P$ relate to bias we first need to explain how to integrate the resulting graph into the framework of fair node classification. In order to do this, we need node features, labels, and a sensitive attribute. For starters, the labels are given by the community of each node, $Y(u) = C(u)$, thus resulting in a natural binary label depending on the underlying topology. The sensitive attribute is also given by the community, although we introduce a flip probability $p_{Z} = \frac{1-b}{2}$ which can break the correlation between $Y$ and $Z$ depending on $b$. Finally, the node features are generated from independent standard normal distributions plus a contribution from the label and the sensitive attribute, $X \sim \mathcal{N}(0, I) + \delta_{Y} \mathbbm{1}(Y=1) + \delta_{Z} \mathbbm{1}(Z=1)$, thus introducing a mechanism through which the features can be informative of the label and contain a certain ammount of bias. The magnitude of these contributions are also controled by $b$ through the relations $\delta_Y = 2(1-b)$, $\delta_Z = 2b$. Finally, using this specification we generate $32$ features. A summary of the chosen parameters can be found in Table~\ref{tab:SBM}.

To close off this discussion on the synthetic experiment, we explain the intuition on how the $b$ parameter controls the bias of the underlying dataset. On one hand, note that as $b$ gets closer to $0$, the inter- and intra-group probabilities drift apart, thus making the classification problem easier. On the other hand, the flip probability $p_Z$ becomes $0.5$, making the sensitive attribute independent from the community, and the features $X$ become highly correlated with $Y$ but not $Z$. Therefore, training a GNN on the resulting graph will yield a fair model with high probability. However, in the high-bias regime when $b$ gets closer to $1$, the inter- and intra- group probabilities get closer together, thus making it harder for a GNN to distinguish between nodes from either community. Furthermore, the flip probability becomes $0$, thus creating a high correlation between the label and the sensitive attribute, and the features $X$ also become highly correlated with $Z$, thus impeding the task of fair classification. The parameter $b$ allows us to interpolate between these two scenarios, controlling the bias with precision. 

\begin{table*}[t]
\centering
    \caption{\textbf{Characteristics of the datasets used.} NBA and German configure the most heterophilic graphs out of the considered datasets.}
    \begin{tabular}{cccccccccc}
    \toprule
      Dataset & Size & Edges & Features & $Y$ rate & $Y$ homophily & $Z$ rate  & $Z$ homophily & $Corr(Y, Z)$ \\
    \midrule
    NBA & $403$ & $21242$ & $97$ & $0.395$ & $0.508$ & $0.734$ & $0.724$ & $-0.025$\\
    Pokec-n & $6185$ & $30642$ & $267$ & $0.445$ & $0.589$ & $0.653$ & $0.954$ & $0.032$\\
    Pokec-z & $7659$ & $41100$ & $278$ & $0.497$ & $0.585$ & $0.633$ & $0.958$ & $0.031$\\
    \midrule
    German & $1000$ & $11392$ & $63$ & $0.700$ & $0.597$ & $0.149$ & $0.746$ & $0.100$\\
    Compas & $6172$ & $85436$ & $410$ & $0.455$ & $0.888$ & $0.486$ & $0.850$ & $0.141$\\
    \bottomrule
    \end{tabular}
    \label{tab:datasets}
\end{table*}

\subsection{Real-world datasets}

The rest of the experiments are performed on a series of real-world datasets standard in the practice of Fair Machine Learning over graphs, summarized below:
\begin{itemize}
    \item \textbf{NBA}\footnote{\tiny\url{https://www.kaggle.com/datasets/noahgift/social-power-nba}}: This dataset provides a list of statistics on $400$ NBA players, which was further extended to include graph data based on the twitter interactions of the players~\cite{FairGNN}. The classification task is to predict whether each player earns more than the median salary, and the sensitive attribute is nationality (USA or overseas).
    \item \textbf{Pokec}\footnote{ \tiny\url{https://snap.stanford.edu/data/soc-Pokec.html}}: This dataset compiles data of Pokec, one of the biggest social networks in Slovakia \cite{POKEC}. There are two versions sampled by province (Pokec-n, Pokec-z), although we use the downsampled versions used by \citealt{POKECSMALL} due to time constraints. Following \citealt{FairGNN}, we use the region as a sensitive attribute and try to predict the working field of the users.
\end{itemize}
Another common practice in fairness over graphs is to create a synthetic graph from tabular data. In our case, we have chosen to use the kNN graph with $10$ neighbors using the euclidean distance with the following datasets:
\begin{itemize}
    \item \textbf{German}\footnote{\tiny \url{https://archive.ics.uci.edu/dataset/144/statlog+german+credit+data}}: This dataset is comprised of data belonging to $1000$ individuals who are labeled according to their credit risk. Following \citealt{AGE}, we use age as a sensitive variable, considering an individual privileged if he or she is older than $25$.
    \item \textbf{Compas}\footnote{\tiny\url{https://github.com/propublica/compas-analysis}}: This dataset provides a list of demographic data of criminal offenders being labeled by whether they recidivated within two years or not. Following \citealt{COMPAS}, we use race as a sensitive variable, considering non-African-American individuals privileged.
\end{itemize}
The relevant characteristics for each dataset are shown in Table ~\ref{tab:datasets}.

\section{Training details}
\label{app:TrainingDetails}

This Appendix delves into the training details of all the different experiments. All training was done using the AdamW optimizer and performed on an NVIDIA Tesla T4 GPU with $16$GB of VRAM. The system was running NVIDIA driver version $565.57.01$ and CUDA $12.7$. Unless stated otherwise, we use a learning rate of $3\cdot 10^{-3} $, a weight decay of $10^{-4}$ and momentum parameters $\beta_1 = 0.9$, $\beta_2 = 0.9999$.

\paragraph{Synthetic experiment.} We use the diagonal parametrization with the hyperparameters shown in Table~\ref{tab:param_synth}. We generate a grid of bias levels between $0$ and $1$ with a stepsize of $0.2$ and, for each value of $b$, we use five different seeds to generate five different datasets. We train all models for $200$ epochs using early stop with a patience of $30$ epochs, using a $80\%/20\%$ train-test split with a learning rate of $0.01$.

\begin{table}[t]
    \centering
    \caption{Hyperparameter configuration for the \textbf{synthetic experiment.}}
    \label{tab:param_synth}
    \begin{tabular}{llr}
        \toprule
        Category & Parameter & Value \\
        \midrule
        \multirow{10}{*}{Architecture} 
          & Hidden Channels & $64$ \\
          & Bundle Dimension & $2$ \\
          & Layers & $5$ \\
          & Sheaf Activation & Identity \\
          & Dropout (Hidden) & $0.5$ \\
          & Dropout (Input) & $0.1$ \\
          & Left weights & No \\
          & Right weights & No \\
          & Second linear & No \\
          & Sparse Learner & Yes \\
        \midrule
        \multirow{1}{*}{FVP} 
          & $\gamma$ & $1.5$ \\
        \midrule
        \multirow{3}{*}{FSP} 
          & N. Eigenvalues & $200$ \\
          & Selected Frequencies & Top $5$ \\
          & $\gamma$ & $0.2$ \\
        \midrule
        \multirow{5}{*}{FSF} 
          & N. Eigenvalues & $100$ \\
        & Selected Frequencies & Top $1$ \\
          & $\gamma$ & $0.3$ \\
          & Polynomial degree & $71$ \\
          & $\sigma$ & $10^{-3}$ \\
        \bottomrule
    \end{tabular}
\end{table}

\begin{table}[t]
    \centering
    \caption{Hyperparameter configuration for the \textbf{experiments on the spectrum and energy} on the German dataset.}
    \label{tab:param_ger}
    \begin{tabular}{llr}
        \toprule
        Category & Parameter & Value \\
        \midrule
        \multirow{10}{*}{Architecture} 
          & Hidden Channels & $32$ \\
          & Bundle Dimension & $3$ \\
          & Layers & $4$ \\
          & Sheaf Activation & Identity \\
          & Dropout (Hidden) & $0.5$ \\
          & Dropout (Input) & $0.05$ \\
          & Left weights & Yes \\
          & Right weights & Yes \\
          & Second linear & Yes \\
          & Sparse Learner & Yes \\
        \midrule
        \multirow{1}{*}{FVP} 
          & $\gamma$ & $30b$ \\
        \midrule
        \multirow{3}{*}{FSP} 
          & N. Eigenvalues & $100$ \\
          & Selected Frequencies & Top $10$ \\
          & $\gamma$ & $10b$ \\
        \midrule
        \multirow{5}{*}{FSF} 
          & N. Eigenvalues & $100$ \\
        & Selected Frequencies & Top $1$ \\
          & $\gamma$ & $0.6b$ \\
          & Polynomial degree & $71$ \\
          & $\sigma$ & $10^{-3}$ \\
        \bottomrule
    \end{tabular}
\end{table}

\begin{table}[t]
    \centering
    \caption{Hyperparameter configuration for the \textbf{sensitivity analysis} on the Pokec-z dataset.}
    \label{tab:param_pokec}
    \begin{tabular}{llr}
        \toprule
        Category & Parameter & Value \\
        \midrule
        \multirow{10}{*}{Architecture} 
          & Hidden Channels & $8$ \\
          & Bundle Dimension & $3$ \\
          & Layers & $2$ \\
          & Sheaf Activation & ELU \\
          & Dropout (Hidden) & $0.5$ \\
          & Dropout (Input) & $0.1$ \\
          & Left weights & Yes \\
          & Right weights & Yes \\
          & Second linear & Yes \\
          & Sparse Learner & Yes \\
        \midrule
        \multirow{1}{*}{FVP} 
          & $\gamma$ & $[0, 60]$ \\
        \midrule
        \multirow{3}{*}{FSP} 
          & N. Eigenvalues & $200$ \\
          & Selected Frequencies & Top $3$ \\
          & $\gamma$ & $[0, 40]$ \\
        \midrule
        \multirow{5}{*}{FSF} 
          & N. Eigenvalues & $100$ \\
        & Selected Frequencies & Top $3$ \\
          & $\gamma$ & $[0,3]$ \\
          & Polynomial degree & $71$ \\
          & $\sigma$ & $10^{-4}$ \\
        \bottomrule
    \end{tabular}
\end{table}

\paragraph{Spectrum and energy.} For the smaller experiments in the German dataset, namely, those involving the calculations of the sheaf Laplacian spectrum in Section~\ref{Experiments} and the Dirichlet energy in Appendix~\ref{app:Results}, we use the general parametrization with the hyper-parameters shown in~\ref{tab:param_ger}. We choose the general parametrization of the Neural Sheaf Diffusion, using a $70\%/30\%$ train-test split, training the models for $100$ epochs.

\paragraph{Sensitivity analysis.} For the sensitivity analysis performed in Appendix~\ref{app:Results}, we use the general parametrization with the hyper-parameters shown in~\ref{tab:param_pokec}. We use a $85\%/15\%$ train-test split running for 200 epochs with early stop and a patience of $40$ epochs.

\paragraph{Real-world datasets and NSD ablation.} For the real-world datasets we perform an extensive random-search with the parameters shown in Table~\ref{tab:grid_search}, with $200$ iterations using the \emph{Optuna} software \cite{OPTUNA}. We perform a $50\%/25\%/25\%$ train-val-test split, and run each model for $100$ epochs with early stop and a patience of $40$ epochs to obtain preliminary results on the validation set. In accordance with the \emph{$80\%$ rule} \cite{RULE}, we select the model with the highest accuracy in the validation set with Statistical Parity under $20\%$. If there was no model fulfilling this fairness constraint, we simply use the model with the highest accuracy. After selecting the best model, we perform $5$-fold cross-validation on the train-val split to obtain a measurement of the error, followed with a final run of $400$ epochs on the whole train-val split evaluated against the test set.

The ablation study on the NSD architecture in Appendix~\ref{app:Results} follows the previously outlined methodology limited to the Pokec-n dataset with restrictions on the stalk dimension ($d=1$) and left weights (set to false) to emulate the GCN and GAT architectures using Sheaf Diffusion.

\begin{figure*}[!ht]
    \centering
    \includegraphics[width=0.21\linewidth]{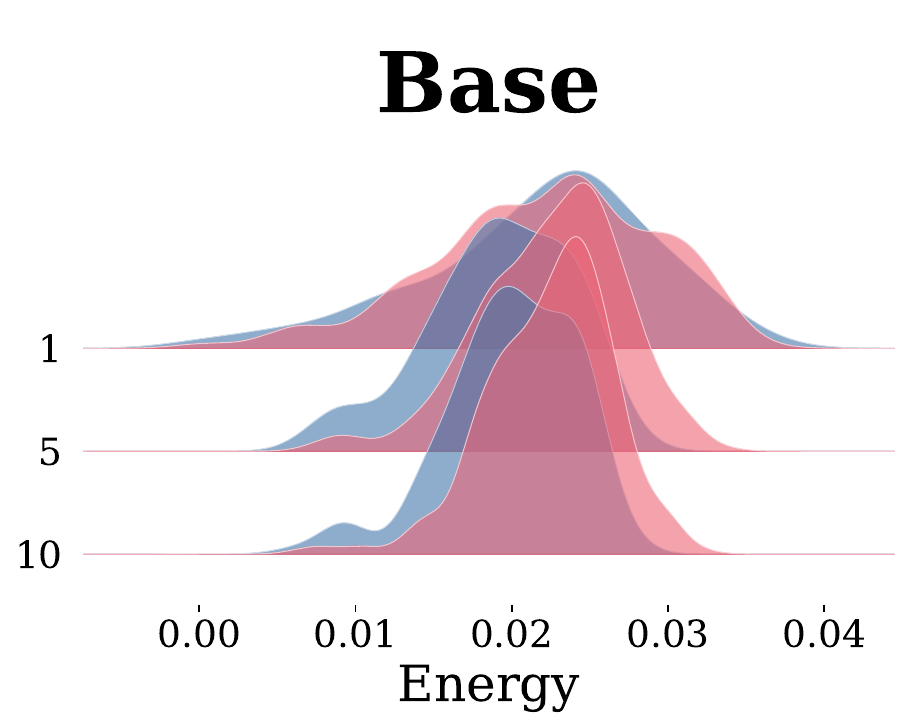}
    \includegraphics[width=0.21\linewidth]{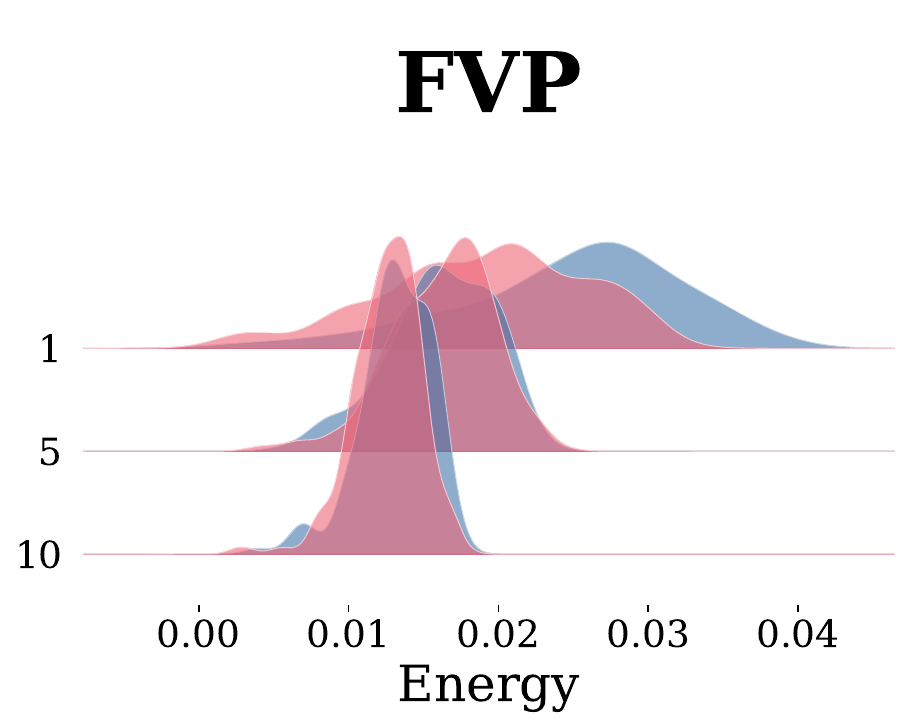} 
    \includegraphics[width=0.21\linewidth]{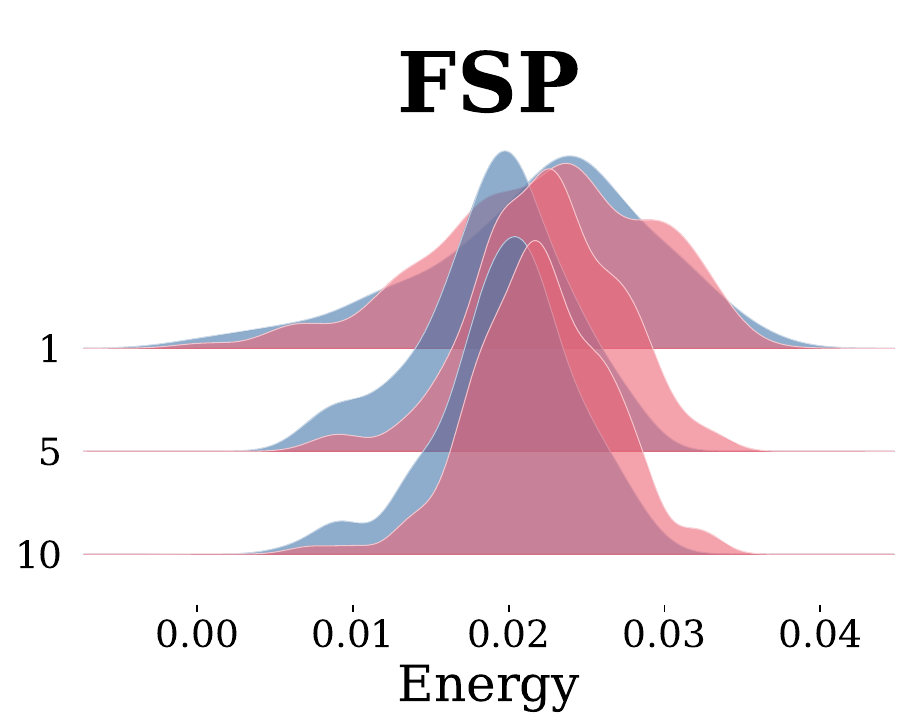} 
    \includegraphics[width=0.31\linewidth]{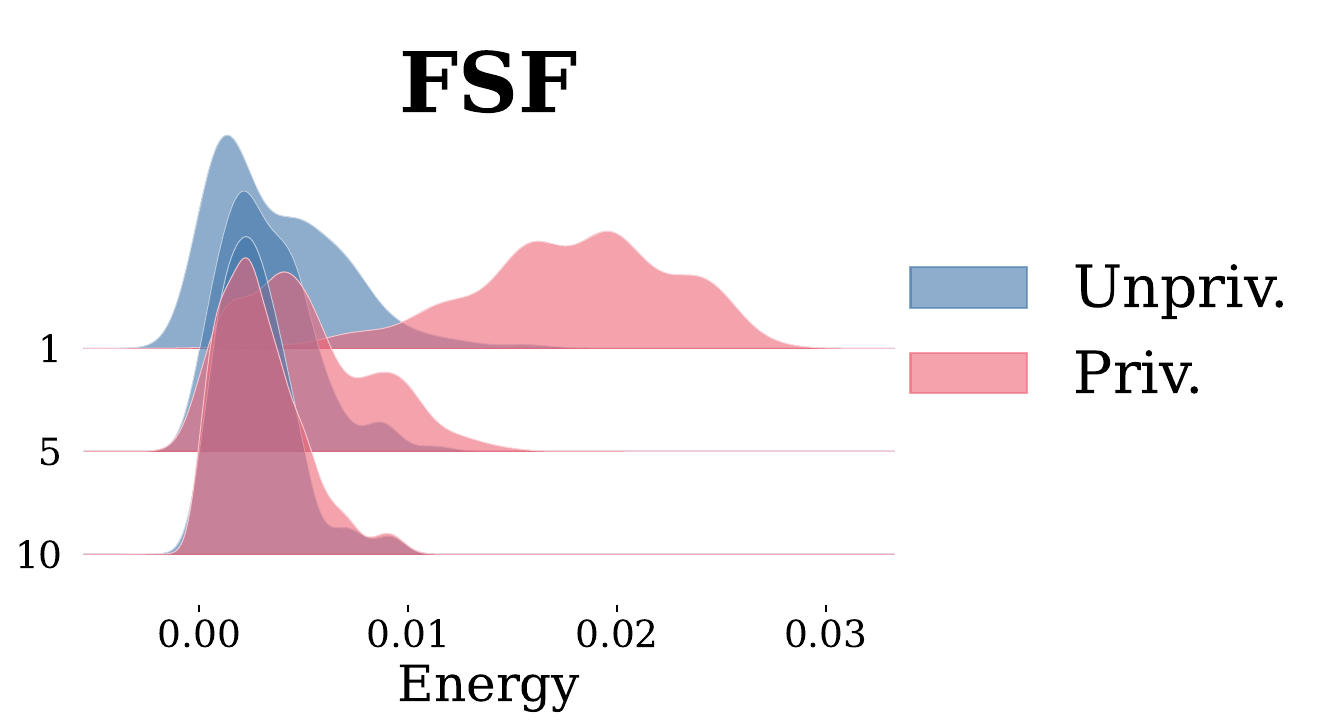}
    
    \caption{Ridge plots of the Dirichlet energies of the original NSD and the different interventions. Our models close the distance between the distributions of both sensitive groups. This plot shows that FSP is the method that modifies the underlying Laplacian the least.}
    \label{fig:ridge}
\end{figure*}

\begin{figure}[!ht]
    \centering
    \includegraphics[width=0.4\textwidth]{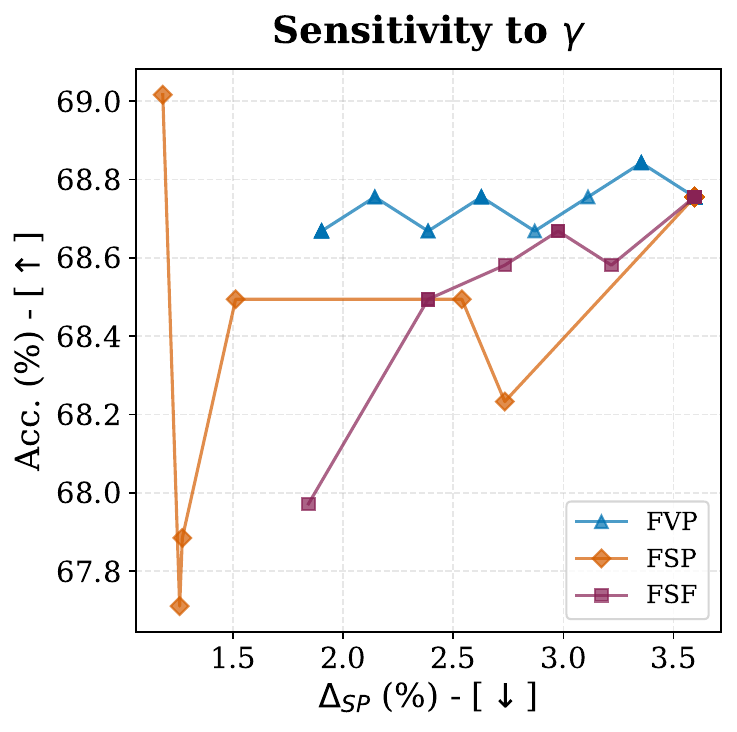}
    \caption{Accuracy-Fairness trade-offs available when varying the $\gamma$ parameter. All modifications start from the same point when $\gamma=0$. As $\gamma$ increases, fairness tends to increase and accuracy tends to decrease. However, the effect of modifications on the spectral gap can lead to spontaneous improvements in accuracy due to its relationship to oversquashing.}
    \label{fig:tradeoff}
\end{figure}

\begin{table*}[t]
        \caption{Results for the Laplacian modifications on \textbf{GCN} for the \textbf{NBA} dataset.}
    \centering
\begin{tabular}{lccccccc}
\toprule
 & \multicolumn{7}{c}{Metrics} \\
 \cmidrule(lr){2-8}  & \multicolumn{3}{c}{Utility} & \multicolumn{3}{c}{Fairness} & Trade-off \\
\cmidrule(lr){2-4} \cmidrule(lr){5-7} \cmidrule(lr){8-8} 
  Model &  Acc. ($\uparrow$) &  F1 ($\uparrow$) & AUC ($\uparrow$) & $\Delta_{SP}$ ($\downarrow$) & $\Delta_{EO}$ ($\downarrow$) &  $\text{Con}_{k=10} $ ($\uparrow$) &  DTI ($\downarrow$) \\
\midrule
GCN (Base)     &  $61.25 \, {\scriptscriptstyle \pm 2.85}$ &  $56.49 \, {\scriptscriptstyle \pm 5.54}$ &  $66.05 \, {\scriptscriptstyle \pm 2.75}$ &  $11.34 \, {\scriptscriptstyle \pm 8.23}$ &  $19.05 \, {\scriptscriptstyle \pm 13.17}$ &  $68.05 \, {\scriptscriptstyle \pm 4.81}$ &  $41.06 \, {\scriptscriptstyle \pm 4.49}$ \\
\midrule
GCN-FVP &  $62.25 \, {\scriptscriptstyle \pm 3.10}$ &  $55.30 \, {\scriptscriptstyle \pm 4.65}$ &  $68.37 \, {\scriptscriptstyle \pm 1.90}$ &   $8.06 \, {\scriptscriptstyle \pm 8.20}$ &  $35.26 \, {\scriptscriptstyle \pm 11.05}$ &  $72.67 \, {\scriptscriptstyle \pm 4.29}$ &  $39.43 \, {\scriptscriptstyle \pm 3.44}$ \\
GCN-FSP &  $62.75 \, {\scriptscriptstyle \pm 0.94}$ &  $56.19 \, {\scriptscriptstyle \pm 2.19}$ &  $68.31 \, {\scriptscriptstyle \pm 1.23}$ &   $9.37 \, {\scriptscriptstyle \pm 5.97}$ &  $32.37 \, {\scriptscriptstyle \pm 14.80}$ &  $74.57 \, {\scriptscriptstyle \pm 4.71}$ &  $38.86 \, {\scriptscriptstyle \pm 1.41}$ \\
GCN-FSF &  $62.00 \, {\scriptscriptstyle \pm 3.41}$ &  $56.34 \, {\scriptscriptstyle \pm 2.88}$ &  $70.03 \, {\scriptscriptstyle \pm 0.94}$ &   $8.18 \, {\scriptscriptstyle \pm 2.53}$ &  $27.66 \, {\scriptscriptstyle \pm 14.66}$ &  $74.50 \, {\scriptscriptstyle \pm 2.85}$ &  $38.95 \, {\scriptscriptstyle \pm 3.38}$ \\
\bottomrule
\end{tabular}
    \label{tab:r_gcn}
\end{table*}

\begin{table*}[t]
        \caption{Results for the Laplacian modifications on \textbf{GAT} for the \textbf{NBA} dataset.}
    \centering
    
\begin{tabular}{lccccccc}
\toprule
 & \multicolumn{7}{c}{Metrics} \\
 \cmidrule(lr){2-8}  & \multicolumn{3}{c}{Utility} & \multicolumn{3}{c}{Fairness} & Trade-off \\
\cmidrule(lr){2-4} \cmidrule(lr){5-7} \cmidrule(lr){8-8} 
  Model &  Acc. ($\uparrow$) &  F1 ($\uparrow$) & AUC ($\uparrow$) & $\Delta_{SP}$ ($\downarrow$) & $\Delta_{EO}$ ($\downarrow$) &  $\text{Con}_{k=10} $ ($\uparrow$) &  DTI ($\downarrow$) \\
\midrule
GAT (Base)     &  $61.50 \, {\scriptscriptstyle \pm 2.42}$ &  $57.64 \, {\scriptscriptstyle \pm 2.50}$ &  $67.68 \, {\scriptscriptstyle \pm 1.75}$ &   $4.31 \, {\scriptscriptstyle \pm 2.53}$ &   $18.64 \, {\scriptscriptstyle \pm 6.67}$ &  $71.88 \, {\scriptscriptstyle \pm 2.34}$ &  $38.83 \, {\scriptscriptstyle \pm 2.28}$ \\
\midrule
GAT-FVP &  $65.75 \, {\scriptscriptstyle \pm 3.22}$ &  $58.51 \, {\scriptscriptstyle \pm 6.93}$ &  $69.84 \, {\scriptscriptstyle \pm 3.80}$ &   $9.58 \, {\scriptscriptstyle \pm 7.08}$ &  $24.37 \, {\scriptscriptstyle \pm 16.06}$ &  $70.37 \, {\scriptscriptstyle \pm 2.23}$ &  $36.07 \, {\scriptscriptstyle \pm 4.96}$ \\
GAT-FSP &  $62.75 \, {\scriptscriptstyle \pm 1.84}$ &  $56.74 \, {\scriptscriptstyle \pm 2.31}$ &  $69.64 \, {\scriptscriptstyle \pm 1.46}$ &  $14.17 \, {\scriptscriptstyle \pm 8.36}$ &  $28.06 \, {\scriptscriptstyle \pm 11.52}$ &  $76.90 \, {\scriptscriptstyle \pm 4.22}$ &  $40.63 \, {\scriptscriptstyle \pm 3.29}$ \\
GAT-FSF &  $65.25 \, {\scriptscriptstyle \pm 4.70}$ &  $61.40 \, {\scriptscriptstyle \pm 5.61}$ &  $71.13 \, {\scriptscriptstyle \pm 3.10}$ &   $4.57 \, {\scriptscriptstyle \pm 3.51}$ &  $24.17 \, {\scriptscriptstyle \pm 12.62}$ &  $74.45 \, {\scriptscriptstyle \pm 1.99}$ &  $35.24 \, {\scriptscriptstyle \pm 4.63}$ \\
\bottomrule
\end{tabular}
    \label{tab:r_gat}
\end{table*}

\begin{table*}[t]
        \caption{Results for the Laplacian modifications on \textbf{NSD} for the \textbf{NBA} dataset (extracted from Table~\ref{tab:r_nba}).}
    \centering
    
\begin{tabular}{lccccccc}
\toprule
 & \multicolumn{7}{c}{Metrics} \\
 \cmidrule(lr){2-8}  & \multicolumn{3}{c}{Utility} & \multicolumn{3}{c}{Fairness} & Trade-off \\
\cmidrule(lr){2-4} \cmidrule(lr){5-7} \cmidrule(lr){8-8} 
  Model &  Acc. ($\uparrow$) &  F1 ($\uparrow$) & AUC ($\uparrow$) & $\Delta_{SP}$ ($\downarrow$) & $\Delta_{EO}$ ($\downarrow$) &  $\text{Con}_{k=10} $ ($\uparrow$) &  DTI ($\downarrow$) \\
\midrule
Diag-NSD & $60.40 \, {\scriptscriptstyle \pm 2.42}$ & $52.38 \, {\scriptscriptstyle \pm 3.38}$ & $70.55 \, {\scriptscriptstyle \pm 2.03}$ & $7.33 \, {\scriptscriptstyle \pm 9.12}$ & $17.79 \, {\scriptscriptstyle \pm 9.84}$ & $79.70 \, {\scriptscriptstyle \pm 5.19}$ & $37.94 \, {\scriptscriptstyle \pm 3.76}$ \\
\midrule
Diag-FVP & $66.34 \, {\scriptscriptstyle \pm 3.23}$ & $64.58 \, {\scriptscriptstyle \pm 9.62}$ & $72.37 \, {\scriptscriptstyle \pm 1.37}$ & $1.69 \, {\scriptscriptstyle \pm 2.57}$ & $7.62 \, {\scriptscriptstyle \pm 46.67}$ & $89.11 \, {\scriptscriptstyle \pm 2.46}$ & $40.00 \, {\scriptscriptstyle \pm 0.00}$ \\
Diag-FSP & $62.38 \, {\scriptscriptstyle \pm 4.34}$ & $50.00 \, {\scriptscriptstyle \pm 17.97}$ & $71.59 \, {\scriptscriptstyle \pm 6.57}$ & $2.72 \, {\scriptscriptstyle \pm 4.18}$ & $20.24 \, {\scriptscriptstyle \pm 17.62}$ & $84.16 \, {\scriptscriptstyle \pm 3.78}$ & $35.03 \, {\scriptscriptstyle \pm 3.79}$ \\
Diag-FSF & $75.25 \, {\scriptscriptstyle \pm 3.26}$ & $65.75 \, {\scriptscriptstyle \pm 3.33}$ & $80.27 \, {\scriptscriptstyle \pm 8.19}$ & $3.90 \, {\scriptscriptstyle \pm 2.44}$ & $10.58 \, {\scriptscriptstyle \pm 40.00}$ & $67.92 \, {\scriptscriptstyle \pm 0.90}$ & $40.00 \, {\scriptscriptstyle \pm 0.00}$ \\
\bottomrule
\end{tabular}
    \label{tab:r_nsd}
\end{table*}

\section{Further experimental results}
\label{app:Results}

\paragraph{Ablation on the NSD and exploration of its generality.} Remember the expression for NSD layers in Eq.~\eqref{NSD}:
\begin{equation*}
    X^{t+1} = X^t - \sigma(\tsheaflap (I_{n} \otimes W_1^t) X^t W_2^t).
\end{equation*}
Setting the stalk dimension to one $d=1$, using the standard graph Laplacian $\Delta_{G}$ (i.e. the trivial sheaf), and dropping the left weights results in:
\begin{equation*}
    X^{t+1} = X^t - \sigma(\Delta_{G} X^t W_2^t),
\end{equation*}
which ammounts to the original GCN with residual connections \cite{GCN}.

On the other hand, setting the stalk dimension to one and dropping the left weights again but keeping the learnable structure on the Laplacian results in this model instead:
\begin{equation*}
    X^{t+1} = X^t - \sigma(\Delta^{\mathcal{F}(t)}_{d=1}  X^t W_2^t),
\end{equation*}
ammounting to the original GAT if using a learnable attention mechanism as transport maps \cite{GAT}. 

Theoretically, GraphSAGE models could be implemented by accounting for the concatenation performed at the end of each layer, for example, by doubling the dimension of the node stalks each iteration. However, this inquiry is beyond the scope of this work. In any case, it is clear that the generality of the sheaf paradigm allows the abstraction of notable GNN architectures. This, along with their ability to handle heterophilic data and oversmoothing problems, motivates their use in fairness applications. Moreover, establishing the theoretical properties of our algorithm for NSD generalizes in a straightforward manner to these architectures.

Finally, we establish this fact empirically on the Pokec-z dataset by implementing the models as previously outlined using the $50\%/25\%/25\%$ cross-validation scheme, resulting in Tables~\ref{tab:r_gcn} and~\ref{tab:r_gat}, which serves as an ablation study on the NSD methodology for the proposed methods. For starters, this small experiment is a success on the GCN, with our models showing superior performance in terms of accuracy and AUC, and fairness in terms of statistical parity. The case for the GAT is more complicated, with the filtering methods achieving improved utility at the cost of fairness. However, looking at the characteristics of the dataset in Table~\ref{tab:datasets} it seems like the GAT is close to predicting the majority class, with our methods trying to break from this pattern and scratching a bit of performance. Moreover, looking at the DTI our solutions are closer to the ideal point, thus showcasing their ability to achieve enhanced results in the Pareto front. If we now check the results shown in Table~\ref{tab:r_nsd} for NSD and Fair NSD with diagonal parametrization in the same dataset we can see how the use of sheaf models results in enhanced fairness and even accuracy, with Diag-FSF achieving top results in both accuracy and fairness among all the considered models. In light of these results, we conjecture that the constraint on the bundle dimension of the GAT configures an obstruction to easily obtaining fair, accurate solutions due to the limitation it imposes on the expressivity of the underlying model, thus motivating the use of NSD in general and our geometric processors in particular.

\paragraph{Dirichlet Energy.} In order to build further intuition, we apply the same models as in the \textbf{Spectrum} experiment and check the Dirichlet energy of both sensitive groups. Figure~\ref{fig:ridge} shows the kernel density plot of the Dirichlet energy of the nodes. As more diffusion layers are added, there is a slight offset between the energy distributions of the privileged and unprivileged groups, whereas in our methods they practically overlap; in other words, our models are incapable of distinguishing between privileged and unprivileged nodes using Dirichlet energy, thus encouraging fair classification.

\paragraph{Sensitivity analysis.} The fairness-utility trade-off for all three methods is mainly governed by a single parameter $\gamma$, granting us fine control to find the desired solution. We study the impact of this central parameter on both accuracy and statistical parity for the Pokec-z dataset, with the result shown in Figure~\ref{fig:tradeoff}. In general, the models behave as expected, with accuracy decreasing and fairness increasing as $\gamma$ is increased. Some upticks in accuracy appear, however, this could be explained by noting that our methods are prone to increasing the spectral gap of the sheaf Laplacian (see Figure~\ref{fig:spectrum}), which is known to mitigate oversquashing and improve the quality of message-passing \cite{FOSR}.

\paragraph{Real-world datasets.} We now display other metrics different than the ones displayed in the aggregated results in Table~\ref{tab:r_extended} and the concrete results for each dataset in Tables~\ref{tab:r_nba} to~\ref{tab:r_compass}. We report the AUC and F1 scores as additional utility metrics, Equal Odds ($\Delta_{EO}$) as another group fairness metrics~\cite{FairMLBook}, and the kNN consistency considering $10$ neighbors and the Euclidean metric ($\text{Con.}_{k=10}$) to quantify individual fairness~\cite{CONSISTENCY}. Finally, the methodology to obtain these results is the same as the one described in Section~\ref{Experiments}, and is described in more detail in Appendix~\ref{app:TrainingDetails}.

Table~\ref{tab:r_extended} shows our methods remain robust when introducing new metrics. For starters, our methods achieve very good results in terms of F1 score, achieving the top three positions while mantaining low statistical parity. On the other hand, the results for AUC, while not as stellar, are still competitive for most of the proposed methods, with Gen-FSF achieving a score of $80.16\%$, just shy of the top, while O(d)-FSF configures one of the worst underperformers in this respect with a score of $66.02\%$. In general, the new utility metrics confirm the picture painted in Section~\ref{Experiments}: our methods are competitive despite sacrificing some performance. On the other hand, the Fair NSD models show poor results in terms of Equal Opportunity: while O(d)-FSP achieves a competitive result of $3.47\%$, most of our methods do not outperform the value for Logistic Regression, with O(d)-FVP achieving the disastrous result of $15.82\%$. However, this is not surprising. After all, our methods aim at achieving Statistical Parity, and it is a well-known fact that these two fairness metrics are independent when not incompatible~\cite{INCOMPATIBLE}. Moreover, our methodology could be extended to handle Equal Odds as suggested in Section~\ref{Theory}, although we leave such inquiries for future work. Finally, the introduction of an individual fairness metric shows our methods also achieve good results, with O(d)-FSP achieving the best value for consistency ($85.2\%$) and the rest of our processors achieve values around $80\%$, remaining competitive with respect to the rest of the algorithms and outperforming, most notably, tabular models. However, this outcome is hardly surprising due to the relationship between individual fairness and the graph heat equation \cite{FairSheafDiffusion}.

The situation for each individual dataset is detailed in Tables~\ref{tab:r_nba} to~\ref{tab:r_compass}, which overall repeat the same patterns as the aggregated results. One noteworthy example of the success of our methods is found in the relatively hard NBA dataset, in which our models achieve top performance in both Statistical Parity and utility (with the notable exception of O(d)-FSP), showing how they can remain competitive with the landscape of fair Machine Learning and even achieve new, better results. Its also important to note the case of the German dataset, in which our models struggle to reduce bias without collapsing to the mode. However, note that the rest of the fair architectures also struggle with this dataset, which serves as a sanity check and demonstrates this phenomenom is inherent to the dataset itself, perhaps due to the synthetic nature of the graph. Furthermore, the O(d) parametrization with FVP modifications exhibits severe performance degradation, although less so than other failing architectures like GAT. Moreover, note that the general architectures using FSF filtering manage to slightly break from this pattern, scratching some extra utility while sacrificing minimal fairness, thus achieving a different trade-off. Nonetheless, this is an outlier outshone by the rest of the datasets.

Finally, it is important to note that some of the worst results observed for certain baselines can be attributed to the training procedure. While these models may achieve higher accuracy when fairness is unconstrained, they fail to maintain performance once we enforce Statistical Parity with a tolerance of $20\%$ in the grid search. This suggests a limited ability to navigate the Pareto frontier between utility and fairness, and underscores the contribution of our models in this goal.

\begin{table*}[t]
\centering
\caption{\textbf{Hyperparameter search space for the empirical assesment.}\\ Detailed list of the grids and distributions used to search for the optimal parameters for all models. Unless stated otherwise, assume the uniform distribution on each set or interval.}
\label{tab:grid_search}
%
\end{table*}

\begin{table*}[t]
\centering
\caption{\textbf{Extended aggregated results.} Our methods achieve top results in F1 score and statistical parity, and remain competitive in accuracy, AUC, consistency and DTI. The only models competing with our methods are fair tabular models, although our algorithms are generally more stable with smaller error bars.}
\resizebox{\textwidth}{!}{
%
}
\label{tab:r_extended}
\end{table*}

\begin{table*}[t]
\centering
\caption{Extended results for the \textbf{NBA} dataset. Most of the graph models collapse to the mode, highlighting the difficulty in dealing with heterophily in both the labels and sensitive attribute. Our methods remain competitive and even achieve the top scores in accuracy.}
\resizebox{\textwidth}{!}{
%
}
\label{tab:r_nba}
\end{table*}

\begin{table*}[t]
\centering
\caption{Extended results for the \textbf{Pokec-n} dataset. In this dataset our methods achieve top positions for all metrics except the utility-fairness trade-off.}
\resizebox{\textwidth}{!}{
%
}
\label{tab:r_pokec_n}
\end{table*}

\begin{table*}[t]
\centering
\caption{Extended results for the \textbf{Pokec-z} dataset. Our methods achieve some of the top spots in F1 score, statistical parity and consistency. Furthermore, they achieve good results in accuracy and AUC and remain competitive in equal odds and DTI.}
\resizebox{\textwidth}{!}{
%
}
\label{tab:r_pokec_z}
\end{table*}

\begin{table*}[t]
\centering
\caption{Extended results for the \textbf{German} dataset. Most models collapse to the mode. This time our methods suffer the same fate, suggesting a obstruction to fairness inherent to this dataset.}
\resizebox{\textwidth}{!}{
%
}
\label{tab:r_german}
\end{table*}

\begin{table*}[t]
\centering
\caption{Extended results for the \textbf{Compas} dataset. Our models achieve competitive performance while greatly reducing the unfairness of the baselines, both in the case of graph models and tabular algorithms. Our models mostly dominate all benchmarks with the exception of fair tabular models.}
\resizebox{\textwidth}{!}{
%
}
\label{tab:r_compass}
\end{table*}

\begin{table*}[t]
\centering
\caption{Extended results for the \textbf{Nba} dataset.}
\resizebox{\textwidth}{!}{
%
}
\label{tab:extended_results}
\end{table*}

\begin{table*}[t]
\centering
\caption{Extended results for the \textbf{Pokec n} dataset.}
\resizebox{\textwidth}{!}{
%
}
\label{tab:r_pokec_n}
\end{table*}

\begin{table*}[t]
\centering
\caption{Extended results for the \textbf{Pokec z} dataset.}
\resizebox{\textwidth}{!}{
%
}
\label{tab:r_pokec_z}
\end{table*}

\begin{table*}[t]
\centering
\caption{Extended results for the \textbf{German} dataset.}
\resizebox{\textwidth}{!}{
%
}
\label{tab:r_german}
\end{table*}

\begin{table*}[t]
\centering
\caption{Extended results for the \textbf{Compass} dataset.}
\resizebox{\textwidth}{!}{
%
}
\label{tab:r_compass}
\end{table*}

\end{document}